\definecolor{mygreen}{rgb}{0.0, 0.5, 0.0}
\definecolor{winered}{rgb}{0.8,0,0}
\definecolor{myblue}{rgb}{0,0,0.8}
\newtheorem{theorem}{Theorem}
\newtheorem{lemma}{Lemma}
\newtheorem{proposition}{Proposition}
\newtheorem{corollary}{Corollary}
\newtheorem{assumption}{Assumption}
\newcommand{\mc}{\mathcal}
\newcommand{\E}{\mathbb{E}}
\renewcommand{\P}{\mathbb{P}}
\newcommand{\R}{\mathbb{R}}
\newcommand{\cA}{\mathcal{A}}
\newcommand{\cF}{\mathcal{F}}
\newcommand{\cM}{\mathcal{M}}
\newcommand{\cN}{\mathcal{N}}
\newcommand{\cO}{\mathcal{O}}
\newcommand{\cP}{\mathcal{P}}
\newcommand{\cS}{\mathcal{S}}
\newcommand{\cX}{\mathcal{X}}
\newcommand{\iprod}[2]{\left\langle #1,\, #2\right\rangle}
\newcommand{\bracket}[1]{\left[ #1\right]}
\newcommand{\norm}[1]{\left\| #1\right\|_2}
\newcommand{\bigo}[1]{\cO\left( #1\right)}
\newcommand{\paren}[1]{\left( #1\right)}
\title{Achieving Tighter Finite-Time Rates for Heterogeneous Federated Stochastic Approximation under Markovian Sampling}
\author{Feng Zhu, Robert W. Heath Jr., and Aritra Mitra
\thanks{F. Zhu and A. Mitra are with the Dept. of Electrical and Computer Engineering, North Carolina State University. Email: {\tt \{fzhu5, amitra2\}@ncsu.edu}. Robert W. Heath Jr. is with the Dept. of Electrical and Computer Engineering at the University of California, San Diego, USA. Email: {\tt rwheathjr@ucsd.edu}. This material is based upon work supported in part by the National Science Foundation under Grant No. NSF-CCF-2225555.}}
\date{}
\begin{document}
\maketitle
\thispagestyle{empty}
\pagestyle{empty}
\begin{abstract}
    Motivated by collaborative reinforcement learning (RL) and optimization with time-correlated data, we study a generic federated stochastic approximation problem involving $M$ agents, where each agent is characterized by an agent-specific (potentially nonlinear) local operator. The goal is for the agents to communicate intermittently via a server to find the root of the average of the agents' local operators. The generality of our setting stems from allowing for (i) Markovian data at each agent and (ii) heterogeneity in the roots of the agents' local operators. The limited recent work that has accounted for both these features in a federated setting fails to guarantee convergence to the desired point or to show any benefit of collaboration; furthermore, they rely on projection steps in their algorithms to guarantee bounded iterates. Our work overcomes each of these limitations. We develop a novel algorithm titled \texttt{FedHSA}, and prove that it guarantees convergence to the correct point, while enjoying an $M$-fold linear speedup in sample-complexity due to collaboration. To our knowledge, \emph{this is the first finite-time result of its kind}, and establishing it (without relying on a projection step) entails a fairly intricate argument that accounts for the interplay between complex temporal correlations due to Markovian sampling, multiple local steps to save communication, and the drift-effects induced by heterogeneous local operators. Our results have implications for a broad class of heterogeneous federated RL problems (e.g., policy evaluation and control) with function approximation, where the agents' Markov decision processes can differ in their probability transition kernels and reward functions.  
\end{abstract}

\section{Introduction}
\label{sec:Intro}
In the classical stochastic approximation (SA) formulation~\cite{robbins1951stochastic, borkar, borkarode, meyn2023}, the {goal} is to solve for a parameter  $\theta^\star\in\R^d$ such that $\bar G(\theta^\star)=0$, where $\bar G:\R^d\rightarrow\R^d$ is a potentially nonlinear operator satisfying $\bar G(\cdot)=\E_{ o\sim \mu}\bracket{G(\cdot, o)}$. Here, $G:\R^d\times\cX\rightarrow\R^d$ is the noisy version of the true operator $\bar{G}$, and $o$ is an observation random variable drawn with some \emph{unknown} distribution $\mu$ from a sample space $\mc X$. The agent (learner) tasked with finding $\theta^\star$ only has access to the operator $\bar G$ via noisy samples $\{G(\cdot, o_t)\}$, 
where $\{o_t\}$ is a stochastic observation process that is typically assumed to converge in distribution to $\mu$~\cite{borkar}. Interestingly, the SA formulation described above captures a large class of problems arising in
stochastic optimization, control, and reinforcement learning
(RL). For instance, popular iterative algorithms like stochastic gradient descent (SGD) in optimization, and temporal
difference (TD) learning, Q-learning in RL turn out to be
specific instances of SA.

Federated learning (FL) has also made extensive use of SA. The general idea of FL is to leverage data collected from multiple agents to train accurate statistical models for downstream prediction~\cite{konevcny, bonawitz, mcmahan}. SA has primarily been used in FL to solve stochastic optimization problems that arise in the context of empirical risk minimization (ERM), under assumptions like offline data collection and i.i.d. (independent and identically distributed) data at each agent. {In contrast, when the data arrives sequentially and exhibits strong temporal correlations - as is the case in multi-agent/federated RL - very little is understood about non-asymptotic/finite-time performance}. This is particularly the case when the agents’ data-generating processes are potentially non-identical. \emph{In our paper, we develop convergence and sample-complexity results for a general class of federated SA methods that significantly improve upon existing bounds, and provide a deeper understanding of the interplay between data-heterogeneity and temporal correlations in FL}.

\textbf{Our Federated SA Setup.} We aim to develop a unified framework that can accommodate a broad class of distributed SA problems. To that end, we consider a setup involving $M$ agents that can exchange information via a central server, as is typical in an FL setting. Each agent $i\in[M]$ has its own true and noisy operators $\bar G_i:\R^d\rightarrow\R^d$ and $G_i: \R^d\times\mc{X}_i\rightarrow\R^d$, respectively, where $\bar G_i(\cdot)=\E_{ o\sim \mu_i}\bracket{G_i(\cdot, o)}$. Here, $\mc X_i$ is the sample space of agent $i$, and the observation random variable $o$ is drawn from an \emph{agent-specific} unknown distribution $\mu_i$. The information available locally at an agent $i$ comprises a sequence of noisy samples $\{G_i(\cdot, o_{i,t})\}.$ To capture temporal correlations, we assume that the the observation sequence $\{o_{i,t}\}$ is generated from an ergodic Markov chain $\mc{M}_i$ with stationary distribution $\mu_i$. Given this setup, the collective objective is to find the root $\theta^\star\in\R^d$ of the ``average'' operator $\bar G$; this objective can be succinctly stated as
\begin{equation}
\boxed{
    \text{Find}\  \theta^\star \, \text{s.t.}\ \bar G(\theta^\star)=0, \text{where}\  \bar G:=\frac{1}{M}\sum_{i=1}^M \bar G_i.} \label{prob::average_operator}
\end{equation}

Note that to achieve the above objective, communication via the server is \emph{necessary}. In a typical FL scenario, communication takes place over low-bandwidth channels, and can hence be costly and slow. To mitigate the communication bottleneck, we will adhere to the standard \emph{intermittent} communication protocol in FL, where an exchange of information with the server takes places only once in every $H > 1$ time-steps, where $H$ is some predefined synchronization time-period. To summarize, there are three main features in our formulation: (1) \textbf{Heterogeneity:} The local operators $\{\bar{G}_i\}$ at the agents can be non-identical and, as such, have different roots; (2) \textbf{Markovian Sampling:} The agents' observation sequences are generated in a Markovian manner; and (3) \textbf{Sparse Communication:} Agents need to perform local computations in isolation between communication rounds. Additionally, complying with the FL paradigm, the agents are required to solve the problem in~\eqref{prob::average_operator} without ever exchanging their raw observations. 

\subsection{Motivation and Related Work}
When one considers the challenging setting involving all of the above features, there are significant gaps in the understanding of the problem in Eq.~\eqref{prob::average_operator}. We now explain these gaps by considering several motivating applications of~\eqref{prob::average_operator}. Along the way, we also review relevant literature. 

\begin{enumerate}
\item\textbf{Motivation 1: Federated Optimization in Dynamic Environments.}

\hspace{1em} $\circ$ \textbf{I.I.D. Data.} When $\bar{G}_i$ is the gradient of a local loss function at agent $i$, and $G_i$ a noisy stochastic version thereof, the problem in Eq.~\eqref{prob::average_operator} reduces to the standard FL setting that has been extensively studied under the assumption that the observation process $\{o_{i,t}\}$ at each agent $i$ is generated in an i.i.d. manner from $\mu_i$~\cite{mcmahan}. Under this i.i.d. data regime, a large body of work has studied the heterogeneous federated optimization problem~\cite{li, khaled2, scaffold, fedlin, gorbunov, mishchenko2022proxskip}.

\hspace{1em} $\circ$ \textbf{Markovian Data.} Our formulation substantially generalizes the standard FL setting by allowing the observation process at each agent $i$ to be generated from an ergodic Markov chain that converges in distribution to $\mu_i$. As explained in detail in~\cite{sun2018markov} and~\cite{duchi}, there is ample reason to consider such Markovian data streams. Moreover, temporally correlated Markovian data arises naturally in several real-world applications: dynamic environments such as non-stationary wireless systems where the channel statistics drift over time following Gauss-Markovian processes, linear dynamical systems in control with input noise, and distributed sensor networks with correlated measurements. With these reasons in mind, considerable recent attention has been given to the study of Markovian gradient descent in the single-agent setting~\cite{sun2018markov, doanMkv, even2023stochastic, beznosikov2024first}. 
\\ \newline
\textcolor{winered}{Research Gap:} Counterparts of the results in the above papers remain open in FL, i.e., \emph{we are unaware of any finite-time results for federated optimization under Markov data.}

\item \textbf{Motivation 2: Heterogeneous Fixed Point Problems.} Our work is particularly inspired by that of \cite{malinovsky}, where the general problem of finding the fixed point of an average of local operators is considered, exactly like we do in~\eqref{prob::average_operator}. As detailed in~\cite{malinovsky}, in addition to optimization, heterogeneous fixed-point problems find applications in many other settings such as alternating minimization methods, nonlinear inverse problems, variational inequalities, and Nash equilibria computation. 
\\ \newline
\textcolor{winered}{Research Gap:} The main gap, however, is that~\cite{malinovsky} considers a ``noiseless'' setting, where each agent can access their true local operator directly, i.e., there is no aspect of i.i.d. or Markovian sampling in their work.

\item \textbf{Motivation 3: Collaborative Multi-Agent RL.} In RL, an agent interacts sequentially with an environment modeled as a Markov decision process (MDP). At each time-step, the agent plays an action according to some policy, observes a reward, and transitions to a new state. The goal is for the agent to play a sequence of actions that maximizes some long-term cumulative utility referred to as the \emph{value-function}, without a priori knowledge of the reward functions and the probability transition kernels of the MDP. It turns out that the problem of estimating the value-function corresponding to a given policy (policy evaluation), and that of finding the optimal policy (control), can be both cast as instances of SA~\cite{meyn2023}. In this context, the sample-complexities of SA-based RL algorithms like TD learning and Q-learning have been studied recently in~\cite{bhandari_finite, srikant, khamaruTD, Waiwright, Qu, li2024q, mitra2024simple}. This line of work has collectively revealed that for contemporary RL problems with large state and action-spaces, several samples are typically needed to achieve a desired performance accuracy. A natural way to overcome the sample-complexity barrier is via parallel data collection from multiple environments. This has led to a new paradigm called \emph{federated reinforcement learning} (FRL)~\cite{qiFRL}, where the idea is to use information from multiple environments (MDPs) to learn a policy that performs ``well'' on average in all MDPs. Compared to FL, the theoretical aspects of FRL remain poorly understood. To explain the gaps in this context, we broadly categorize SA problems in FRL into two main groups as follows; for a quick comparative summary, see Table~\ref{tab::comparison}.

\hspace{1em} $\circ$ \textbf{Homogeneous Environments.} In this case, agents share identical MDPs and local operators. The works~\cite{doan} and~\cite{liuMARL} both examine federated TD algorithms under a restrictive i.i.d. sampling assumption. While~\cite{shen2023} analyzes federated actor-critic algorithms under both i.i.d. and Markov data, they establish a linear speedup in sample-complexity (w.r.t. the number of agents) only under i.i.d. data. To our knowledge, the first paper to establish a linear speedup in sample-complexity under Markovian sampling for contractive SA problems was~\cite{khodadadian}, followed by~\cite{woo2023blessing} for tabular Q-learning. Follow-up works have focused on the minimum amount of communication needed to achieve linear speedups~\cite{tian2024one, salgia2024sample} in sample-complexity, and also the effect of imperfect communication channels~\cite{dal, mitraTDEF, beikmohammadi2024, dal2025finite}.

\hspace{1em} $\circ$ \textbf{Heterogeneous Environments.} In practice, it is unreasonable to expect that the agents' environments are \emph{exactly the same.} Thus, it makes more sense to consider a setting where agents interact with potentially distinct MDPs, \emph{leading to distinct local operators}. This setting has received much less attention. The convergence of federated Q-learning algorithms is analyzed in~\cite{jinFRL}, but no linear speedup is established. The results in \cite{wang2023TMLR} for federated TD learning, and \cite{zhang2024finite} for federated \texttt{SARSA}, do exhibit a linear speedup in the noise variance term, but their bounds also feature an additive heterogeneity-induced bias term that grows with the discrepancy in the agents' environments; such a bias term also appears in~\cite{jinFRL}, and its presence precludes possible gains from collaboration. 
\\ \newline
\textcolor{winered}{Research Gap:} To sum up, in the context of general heterogeneous stochastic approximation problems in FRL under Markov noise, it remains an open problem to establish a linear collaborative speedup that is not affected by any heterogeneity-induced bias term.
\end{enumerate}

\begin{table*}[t!]
\caption{Comparison of finite-time analysis for federated RL papers that study stochastic approximation problems.}
\label{tab::comparison}
\resizebox{\textwidth}{!}{%
\begin{tabular}{lllllll}
\hline
Work                   & Heterogeneity  & No heterogeneity bias & Linear speedup                   & Markovian sampling\\ \hline
\cite{doan}         & \XSolidBrush & - & \XSolidBrush   & \XSolidBrush \\
\cite{liuMARL, shen2023}      & \XSolidBrush & - & \CheckmarkBold & \XSolidBrush \\
\cite{khodadadian, woo2023blessing, salgia2024sample, tian2024one}  & \XSolidBrush   & -                  & \CheckmarkBold                   & \CheckmarkBold      \\
\cite{jinFRL}       & \CheckmarkBold & \XSolidBrush     & \XSolidBrush                     & \XSolidBrush        \\
\cite{wang2023TMLR, zhang2024finite}    & \CheckmarkBold & \XSolidBrush     & \CheckmarkBold                   & \CheckmarkBold       \\
\rowcolor{gray}
This work & \CheckmarkBold & \CheckmarkBold     & \CheckmarkBold                   & \CheckmarkBold     
\end{tabular}%
}
\end{table*}

Now that we have elaborated on the motivation for our work and the research gaps in the literature, let us turn our attention to the main questions investigated in this paper. 

\textbf{Key Questions.} Having motivated the rationale behind studying the problem in Eq.~\eqref{prob::average_operator}, we can now state more precisely the key questions of interest to us. To do so, we start by presenting the convergence rates of SA in the single-agent case as a benchmark. Accordingly, consider the SA scheme of~\cite{robbins1951stochastic}: 
\begin{equation}
    \theta^{(t+1)} = \theta^{(t)}+\alpha_t G(\theta^{(t)},o_t), \label{eqn::stochastic_approx}
\end{equation}
for $t=0,...,T-1$. Here, $T$ denotes the total number of iterations, $\theta^{(t)}\in\R^d$ is the parameter estimate at time-step $t$, and $\{\alpha_t\}$ is the step-size sequence. Asymptotic convergence results for the SA rule in~\eqref{eqn::stochastic_approx} were derived in the seminal works~\cite{tsitsiklisroy} and~\cite{borkarode}. Finite-time rates under Markovian observations were more recently established in~\cite{bhandari_finite, srikant, chen2022finite,mitra2024simple}, where it was shown that under certain mild technical assumptions (which will be detailed later in Section~\ref{sec::problem_formulation}), running $T$ iterations of the rule in Eq.~\eqref{eqn::stochastic_approx} with the same step-size $\alpha_t=\alpha$ leads to the following error bound: 
\begin{equation}
    d_T\leq \underbrace{C_1\exp{\paren{-\alpha C_2 T}}}_{\text{bias}}+\underbrace{\alpha C_3 \sigma^2}_{\text{variance}}, \label{eqn::mse_bound}
\end{equation}
where $d_t:=\E\bracket{\norm{ \theta^{(t)}-\theta^\star}^2}$ denotes the MSE at time-step $t$, $C_1, C_2, C_3$ are problem-specific constants, and $\sigma^2$ characterizes the variance of the noise model. The MSE bound consists of two components: (i) a bias term that decays exponentially fast and (ii) a variance term that captures the effect of noise. Thus, the rule in \eqref{eqn::stochastic_approx} ensures linear convergence to a ball of radius $O(\alpha \sigma^2)$ centered around $\theta^\star$. By choosing an $\alpha$ on the order of $O(\log(T)/T)$, one can then achieve exact convergence to $\theta^\star$. Now consider the federated SA setup with {heterogeneous local operators, Markovian data, and intermittent communication}. We ask:
\\\newline 
\emph{Is it possible to converge exactly (in the mean-square sense) to the root $\theta^\star$ of the average operator $\bar{G}$? Moreover, can one achieve a linear $M$-fold reduction in the variance term, capturing the benefit of collaboration?}
\\ \newline
\textbf{Contributions.} As far as we are aware, no prior work has been able to establish both \emph{exact convergence} and \emph{linear sample-complexity speedups} in FL under Markovian sampling and heterogeneity of local operators. We close this significant gap via the following contributions. 


$\bullet$ \textbf{Motivation for New Algorithm.} In Section~\ref{sec:Motivation}, we study the performance of existing FRL algorithms~\cite{jinFRL, khodadadian, wang2023TMLR, zhang2024finite}, where each agent performs multiple local parameter updates using just its own operator. Proposition~\ref{prob::average_operator} reveals that in a heterogeneous setting, such algorithms fail to match the single-agent SA convergence rate in Eq.~\eqref{eqn::mse_bound}. In particular, when run with a constant non-diminishing step size $\alpha$, such algorithms do not converge exactly to the root $\theta^\star$, even in the absence of noise. 

$\bullet$ \textbf{Novel Algorithm.} Motivated by the findings from Section~\ref{sec:Motivation}, we develop a new local SA procedure called \texttt{FedHSA} in Section~\ref{sec:algo}. The core idea in \texttt{FedHSA} is to modify the local update rule to ensure convergence to the correct point $\theta^\star$. We emphasize that while drift-mitigation techniques to combat heterogeneity have been studied before in federated optimization~\cite{scaffold, gorbunov, fedlin}, \texttt{FedHSA} \emph{is not limited to optimization}. Instead, \texttt{FedHSA} applies much more broadly to general nonlinear SA problems, including those in RL. 

$\bullet$ \textbf{Matching Centralized Rates and Linear Speedup.} In Theorem~\ref{theorem::markov_mse} of Section~\ref{sec: results}, we prove that \texttt{FedHSA} matches the centralized rate in~\eqref{eqn::mse_bound}, and, unlike the results in~\cite{jinFRL, wang2023TMLR, zhang2024finite}, there is \emph{no heterogeneity-induced bias in our final bound}. Furthermore, with a linearly decaying step-size, we prove that \texttt{FedHSA} achieves an optimal sample-complexity bound of $\tilde{\mathcal{O}}(1/(MHT))$ after $T$ communication rounds, with $H$ local steps in each round (Corollary~\ref{coro::markov}). \textbf{This result is significant because it is the first to establish a collaborative $M$-fold linear speedup for heterogeneous federated SA problems under Markovian sampling, with no additional bias term to negate the benefit of collaboration.} 

$\bullet$ \textbf{Analysis Technique.} Even in the single-agent SA setting, a finite-time analysis under Markovian data is quite non-trivial. Our FL setting is further complicated by complex statistical correlations that arise from combining data generated by distinct Markov chains, drift effects due to heterogeneous local operators, and multiple local steps. The only other recent papers~\cite{wang2023TMLR, zhang2024finite} that consider similar settings rely on a projection step in the algorithm to ensure uniform boundedness of iterates. Furthermore, they build on a ``virtual MDP framework'' for their analysis. Our proof neither requires a projection step nor a virtual MDP. In particular, the lack of a projection step entails a much more involved analysis. We elaborate on the main technical challenges  in Section~\ref{sec::Challenges}.  

Although our work is primarily theoretical, we corroborate our main theoretical findings via numerical simulations, covering both optimization and RL, in Section~\ref{sec:sims}. 
\newpage
\textbf{More Related Work.} Since the focus of this paper is on federated stochastic approximation, we have only reviewed SA-based approaches in FRL. We note that policy gradient-based approaches for FRL have also recently been explored in~\cite{xie, fan2021fault, lan2023improved, wang2024momentum, zhu2024towards}. In particular, the approaches developed in~\cite{wang2024momentum} and~\cite{zhu2024towards} manage to achieve collaborative speedups without incurring additive bias terms due to heterogeneity. That said, even in the single-agent setting, the dynamics of policy gradient methods and SA schemes in RL are considerably different, and entail separate treatments. For instance, the aspect of Markovian sampling does not show up at all in policy-gradient methods. 

At the time of preparing this paper, we became aware of a very recent piece of work~\cite{mangold2024} that looks at federated \emph{linear} stochastic approximation (LSA). In this work, the authors provide a detailed and refined analysis of local update rules - of the form of \texttt{FedAvg} - under both i.i.d. and Markov sampling. Furthermore, they develop a bias-corrected algorithm for federated LSA, and show that it simultaneously achieves a linear speedup and no heterogeneity-induced bias. Although the flavor of the results in \cite{mangold2024} is similar to that of our work, there are significant differences in scope, algorithms, and proof techniques that we outline below. First, the results in~\cite{mangold2024} are limited to linear stochastic approximation. In contrast, we consider general nonlinear operators throughout this work. An 
 inspection of the refined analysis in~\cite{mangold2024} seems to suggest that the recurrence relations exploited in~\cite{mangold2024} rely heavily on the linearity of the underlying operator. As such, it is unclear whether the ideas in~\cite{mangold2024} can be easily extended to accommodate the much broader class of operators we consider in this work. Second, the results under Markov sampling in~\cite{mangold2024} are provided only for the vanilla federated LSA schemes, not for the bias-corrected one. Thus, even in light of the contributions made in~\cite{mangold2024}, our work is the first to establish a linear speedup result without heterogeneity bias for general (nonlinear) contractive SA under Markov sampling. Third, the authors in~\cite{mangold2024} use the ``blocking technique" to handle temporal correlations under Markov sampling. This requires modifying the original algorithm so that it only operates on a sub-sampled data sequence, where the sub-sampling gap is informed by the mixing time of the underlying Markov chain. In contrast, our proof technique is more direct, does not go through the blocking apparatus, and, as such, requires no modification to the algorithm for Markov data. 

\vspace{-3mm}
\section{Setting and Motivation}\label{sec::problem_formulation}
\vspace{-2mm}
\subsection{Setting}
\vspace{-2mm}
We consider a multi-agent heterogeneous SA problem involving $M$ agents, where each agent $i \in [M]$ has its own local true operator $\bar{G}_i$. Since the true operators are generally hard to evaluate exactly, each agent $i$ can access $\bar{G}_i$ only through a sequence of noisy samples $\{G_i(\cdot, o_{i,t})\}.$ The observation $o_{i,t}$ made at time-step $t$ by agent $i$ is sequentially sampled from an underlying \emph{agent-specific} time-homogeneous Markov chain $\cM_i$ with stationary distribution $\mu_i$. We further have $\bar G_i(\cdot)=\E_{ o\sim \mu_i}\bracket{G_i(\cdot, o)}$. We consider the case where the agents' Markov chains $\{\mc{M}_i\}$ share a common \emph{finite} state space $\cS$, but have potentially different probability transition matrices. The collaborative goal is to solve the root-finding problem described in~\eqref{prob::average_operator} within a federated framework, where the agents communicate intermittently via a central server only once in every $H$ time-steps, while keeping their raw observation sequences $\{o_{i,t}\}$ private. 

\textbf{Working Assumptions.} We now make certain standard assumptions on the agents' operators and stochastic observation processes for our subsequent analysis. 

\begin{assumption}[Lipschitzness]\label{ass::smoothness}
The local noisy operator $G_i$ for each agent $i\in[M]$ is $L$-Lipschitz, i.e., there exists a constant $L\geq 1$ such that given any observation $o$, for all $\theta_1, \theta_2\in \R^d$, we have 
\begin{equation}
    \norm{G_i(\theta_1, o) - G_i(\theta_2, o)}\leq L\norm{\theta_1 - \theta_2}.  
\end{equation}
Furthermore, there exists $\sigma_i \geq 1$ for each $i\in[M]$ such that for any given $\theta, o$, the following holds
\begin{equation}
    \norm{G_i(\theta, o)}\leq L(\norm{\theta}+\sigma_i). \label{eqn::uniform_bound}
\end{equation}
\end{assumption}

\begin{assumption}[1-point strong monotonicity]\label{ass::strong_monotonicity}
The average true operator $\bar G$ is 1-point strongly monotone w.r.t. $\theta^\star$, i.e., there exists some constant $\mu\in(0,1]$ such that for any $\theta\in\R^d$, we have 
\begin{equation}
    \iprod{\theta-\theta^\star}{\bar G(\theta) - \bar G(\theta^\star)}\leq -\mu\norm{\theta-\theta^\star}^2.
\end{equation}
\end{assumption}

In the context of optimization, Assumption~\ref{ass::smoothness} corresponds to a smoothness assumption typical in the analysis of Markovian gradient descent~\cite{doanMkv}, and Assumption~\ref{ass::strong_monotonicity} corresponds to strong-convexity. As for RL, Assumptions~\ref{ass::smoothness} and~\ref{ass::strong_monotonicity} both hold for TD learning with linear function approximation (LFA)~\cite{bhandari_finite, srikant}, and certain variants of Q-learning with LFA~\cite{chen2022finite, zengTAC}, where $\bar G_i$ and $G_i$ correspond to the non-noisy and noisy versions, respectively, of the TD/Q-learning update rules. Assumptions~\ref{ass::smoothness} and~\ref{ass::strong_monotonicity} suffice to guarantee the MSE bound in Eq.~\eqref{eqn::mse_bound} in the centralized case, i.e., when $M=1$. Since our objective is to obtain such a bound for the multi-agent heterogeneous setting, it is natural for us to work under the same assumptions. 

The next assumption on the agents' Markov chains helps control the effect of temporal correlations in the data, and appears in almost all finite-time analysis papers for both single-agent~\cite{bhandari_finite, srikant, chen2022finite, adibi2024stochastic} and multi-agent RL~\cite{khodadadian, zengTAC, wang2023TMLR}. 

\begin{assumption} \label{ass::irreducible}
    For each agent $i\in[M]$, the state space of the underlying Markov chain $\cM_i$ is finite, and the Markov chain $\cM_i$ is aperiodic and irreducible. 
\end{assumption}

A key implication of Assumption~\ref{ass::irreducible} is that it implies geometric mixing of each of the agents' Markov chains~\cite{levin2017markov}. More precisely, for each $i \in [M]$, there exists some $c_i \geq 1$ and some $\rho_i \in (0,1)$, such that the following is true for any state $s \in \mc{S}$:
\begin{equation}\label{eqn::tv}
    d_{TV}(\P(o_{i,t}=\cdot\ |o_{i,0}=s), \mu_i)\leq c_i {\rho_i^t},\ \forall t > 0,
\end{equation}
where $d_{TV}(P, Q)$ is the total variation distance between two probability measures $P$ and $Q$, $o_{i,t}$ denotes the state of agent $i$ at the $t$-th time-step, and $\mu_i$ is the stationary distribution of $\cM_i$. In simple words, Assumption~\ref{ass::irreducible} implies that each agent $i$'s Markov chain converges to its stationary distribution $\mu_i$ exponentially fast. 

We now define the concept of a \emph{mixing time} that plays a key role in our analysis. Intuitively, the mixing time of a Markov chain measures how long it takes for the chain to ``forget" its starting state and become close to its stationary distribution. For any $\varepsilon > 0$, we define the mixing time of the Markov chain $\mc{M}_i$ (at precision level $\varepsilon$) as $\tau_i(\varepsilon):= \min\{ t \in \mathbb{N}_0: c_i \rho^t_i \leq \varepsilon\}.$ We define $\tau(\varepsilon):= \max_{i \in [M]} \tau_i(\varepsilon)$ as the mixing time corresponding to the slowest-mixing Markov chain.

Our final assumption concerns statistical independence between the data across different agents. Such an assumption is needed to establish ``linear speedups'' in performance, and has appeared in prior FRL work~\cite{khodadadian, woo2023blessing,  wang2023TMLR, zhang2024finite}. 

\begin{assumption}
\label{ass::independence}
For every pair of agents $i \neq j \in [M]$, the observation processes $\{o_{i,t}\}$ and $\{o_{j,t}\}$ are statistically independent. 
\end{assumption}

With the above assumptions in place, we are in a position to describe and analyze our proposed algorithm. Before doing so, however, it is natural to ask: \emph{What is the need for a new federated SA algorithm?} We provide a concrete answer in the next section. 
\vspace{-2mm}
\subsection{Motivation for a new Federated SA Algorithm}
\vspace{-2mm}
\label{sec:Motivation}
To explain the motivation for developing our algorithm, it suffices to focus on the class of linear stochastic approximation (LSA) problems, where for each agent $i \in [M]$, $\bar{G}_i(\theta) = \bar{A}_i \theta - \bar{b}_i$, and $G_i(\theta, o_{i,t}) = A_i(o_{i,t}) \theta - b_i(o_{i,t})$, i.e., the operators are affine in the parameter $\theta$. Furthermore, to isolate the effect of heterogeneity, we will consider a simplified ``noiseless'' setting where each agent $i$ can directly access its true operator $\bar{G}_i(\theta).$ Our goal is to formally establish that even for this simplified scenario, if one employs the existing algorithms in~\cite{jinFRL, khodadadian, wang2023TMLR, zhang2024finite}, then it might be impossible to match the convergence rates in the single-agent setting. To see this, we first note that the algorithms in these papers operate in rounds $t=0, 1,\ldots, T-1,$ where within each round $t$, each agent $i$ performs $H \geq 1$ local model-update steps of the following form:
\begin{equation}
\theta_{i,\ell+1}^{(t)} =  \theta_{i,\ell}^{(t)} + \eta \bar{G}_i(\theta_{i,\ell}^{(t)}), \ell = 0, 1, \ldots, H-1,
\label{eqn:localSA}
\end{equation}
where $\theta_{i,\ell}^{(t)}$ is agent $i$'s parameter estimate in local step $\ell$ of communication round $t$, and $\eta \in (0,1)$ is a step-size. For each $i \in [M]$, $\theta^{(t)}_{i,0}$ is initialized from a common global parameter $\bar{\theta}^{(t)}$. At the end of the $t$-th round, each agent $i$ transmits $\theta^{(t)}_{i, H}$ to the server; the server then broadcasts the next global parameter $\bar{\theta}^{(t+1)} = (1/M) \sum_{i\in [M]} \theta^{(t)}_{i, H}$ to all the agents. We will analyze local SA rules of the form in Eq.~\eqref{eqn:localSA} under the standard assumptions made to derive finite-time rates for linear SA in the single-agent case~\cite{srikant}: for all $i \in [M]$, (i) (Lipschitzness) the 2-norms of $A_i$ and $b_i$ are bounded, and (ii) (strong-monotonicity) all the eigenvalues of $A_i$ have strictly negative real parts, i.e., $A_i$ is Hurwitz. When $M=1$, i.e., there is only one agent, and $H=1$, i.e., communication occurs every time-step, the rule in Eq.~\eqref{eqn:localSA} ensures exponentially fast convergence to the root of the underlying operator~\cite{srikant}. Our next result reveals that this is no longer the case when $M >1$ and $H >1.$ To state this result, we define $\bar{A} := (1/M) \sum_{i \in [M]} \bar{A}_i$, $\bar{b} := (1/M) \sum_{i \in [M]} \bar{b}_i$, and $\bar{A}':=(1/M) \sum_{i \in [M]} \bar{A}^2_i$. 

\begin{proposition} \label{prop:hetbias} Suppose $M >1$ and $H=2$. Consider the local SA update rule in Eq.~\eqref{eqn:localSA}, and suppose the step-size $\eta$ is chosen such that the matrix $(I+2\eta \bar{A} + \eta^2 \bar{A}')$ is Schur-stable. Then, we have:
$
\lim_{t \to \infty} e_t = \eta v$, where $v = (1/M) (2 \bar{A} + \eta \bar{A}')^{-1}  \sum_{i \in [M]} \bar{A}^2_i (\theta^\star_i - \theta^\star), $   $e_t = \bar{\theta}^{(t)} - \theta^\star, \theta^\star_i = \bar{A}^{-1}_i \bar{b}_i$ is the root of the local operator $\bar{G}_i$, and $\theta^\star = \bar{A}^{-1} \bar{b}$ is the root of the global operator $\bar{G}.$ 
\end{proposition}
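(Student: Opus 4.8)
The plan is to exploit the linearity of the operators to derive an exact affine recurrence for the global iterate $\bar{\theta}^{(t)}$, and then read off its fixed point. First I would unroll the $H=2$ local steps. Since $\bar{G}_i(\theta)=\bar{A}_i\theta-\bar{b}_i$ and every agent starts each round from the common point $\theta^{(t)}_{i,0}=\bar{\theta}^{(t)}$, two applications of the rule in Eq.~\eqref{eqn:localSA} give a closed form
\begin{equation}
\theta^{(t)}_{i,2}=(I+\eta\bar{A}_i)^2\bar{\theta}^{(t)}-\eta(I+\eta\bar{A}_i)\bar{b}_i-\eta\bar{b}_i=(I+2\eta\bar{A}_i+\eta^2\bar{A}_i^2)\bar{\theta}^{(t)}-2\eta\bar{b}_i-\eta^2\bar{A}_i\bar{b}_i.
\end{equation}
Averaging over $i\in[M]$ and invoking the definitions of $\bar{A},\bar{b},\bar{A}'$, the server update $\bar{\theta}^{(t+1)}=(1/M)\sum_i\theta^{(t)}_{i,2}$ collapses to the affine recursion
\begin{equation}
\bar{\theta}^{(t+1)}=W\bar{\theta}^{(t)}+c,\qquad W:=I+2\eta\bar{A}+\eta^2\bar{A}',\qquad c:=-2\eta\bar{b}-\frac{\eta^2}{M}\sum_{i\in[M]}\bar{A}_i\bar{b}_i.
\end{equation}

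Next, because $W$ is assumed Schur-stable, its spectral radius is strictly below one, so $I-W$ is invertible and the recursion converges geometrically to the unique fixed point $\bar{\theta}^\infty=(I-W)^{-1}c$, independently of initialization. The crucial simplification is the factorization $I-W=-\eta(2\bar{A}+\eta\bar{A}')$, which gives $(I-W)^{-1}=-\eta^{-1}(2\bar{A}+\eta\bar{A}')^{-1}$ and hence
\begin{equation}
\bar{\theta}^\infty=(2\bar{A}+\eta\bar{A}')^{-1}\left(2\bar{b}+\frac{\eta}{M}\sum_{i\in[M]}\bar{A}_i\bar{b}_i\right).
\end{equation}

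Finally I would subtract $\theta^\star$ and show that the homogeneous part $2\bar{b}$ cancels, leaving a purely heterogeneity-driven residual. Writing $P:=2\bar{A}+\eta\bar{A}'$ and using $\bar{b}=\bar{A}\theta^\star$ (from $\theta^\star=\bar{A}^{-1}\bar{b}$), one has $P\theta^\star=2\bar{b}+\eta\bar{A}'\theta^\star$, so that
\begin{equation}
\lim_{t\to\infty}e_t=\bar{\theta}^\infty-\theta^\star=\eta P^{-1}\left(\frac{1}{M}\sum_{i\in[M]}\bar{A}_i\bar{b}_i-\bar{A}'\theta^\star\right).
\end{equation}
Substituting $\bar{b}_i=\bar{A}_i\theta^\star_i$ and $\bar{A}'=(1/M)\sum_i\bar{A}_i^2$ rewrites the bracketed term as $(1/M)\sum_i\bar{A}_i^2(\theta^\star_i-\theta^\star)$, yielding $\lim_t e_t=\eta v$ exactly as claimed.

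The computation is elementary once linearity is used; the only step that requires care is the algebraic cancellation in the last display, where $2\bar{b}$ must be removed so that the limiting bias is seen to vanish precisely when all local roots coincide ($\theta^\star_i=\theta^\star$ for all $i$). This is the conceptual punchline: the residual is $O(\eta)$ and proportional to the dispersion of the local roots weighted by $\bar{A}_i^2$, so a constant step size cannot drive $e_t$ to zero under heterogeneity, which is exactly what motivates \texttt{FedHSA}. The one hypothesis to keep track of is the Schur-stability of $W$, which is both what guarantees convergence and what makes $I-W=-\eta P$ invertible, so that $P^{-1}$ in the statement is well defined.
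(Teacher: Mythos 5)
Your proof is correct and follows essentially the same route as the paper's: both unroll the $H=2$ local steps into an affine recursion with transition matrix $I+2\eta\bar{A}+\eta^2\bar{A}'$, invoke Schur stability for geometric convergence, and compute the limit via $(I-W)^{-1}$ together with the identities $\bar{b}=\bar{A}\theta^\star$ and $\bar{b}_i=\bar{A}_i\theta^\star_i$. The only (cosmetic) difference is that the paper works in error coordinates $e_t=\bar{\theta}^{(t)}-\theta^\star$ from the outset, so the heterogeneity term appears in the forcing vector immediately, whereas you perform the same cancellation at the end after computing the fixed point in raw coordinates.
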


The main takeaway from Proposition~\ref{prop:hetbias} is that even with just $2$ local steps (i.e., $H=2$) and no noise, in the limit, there is a \emph{non-vanishing error} $\eta v$ that depends on how much each local root $\theta^\star_i$ differs from the global root $\theta^\star$. To eliminate this error, the step-size $\eta$ must be diminished with time, i.e., one cannot afford to use a constant step-size like in the single-agent case. Note, however, if a diminishing step-size sequence is used, one would not be able to achieve exponentially fast convergence. Thus, \emph{there is a gap between the bounds in the single-agent case and those achievable with algorithms of the form in Eq.~\eqref{eqn:localSA} in the heterogeneous federated SA setting}. We now proceed to  develop \texttt{FedHSA} that will not only close this gap, but also achieve an optimal $M$-fold sample-complexity reduction due to collaboration. 

\begin{algorithm}[t!]
\caption{\texttt{FedHSA}} 
\label{algo:FedHSA}
\begin{algorithmic}[1]
\State \textbf{Input:} Local step-size $\eta$, global step-size $\alpha_g$, initial parameter $\bar\theta^{(0)}$, initial noisy operator $G(\Bar{\theta}^{(0)})$. 
\For {$t=0,\ldots,T-1$} 
\For {$i=1,\ldots, M$} 
\State \hspace{-8mm} Agent $i$ initializes its local parameter $\theta_{i,0}^{(t)}=\bar\theta^{(t)}.$
\For {$\ell=0,\ldots, H-1$}
\State \hspace{-2mm} Agent $i$ observes $o_{i,\ell}^{(t)}$ generated from its Markov chain $\mc{M}_i$, and updates $\theta^{(t)}_{i, \ell}$ as per~\eqref{eqn::FedHSA_update}. 
\EndFor
\State \hspace{-8mm} Agent $i$ transmits $\Delta_{i,H}^{(t)}=\theta_{i,H}^{(t)}-\bar{\theta}^{(t)}$ to server. 
\EndFor
\State \hspace{-2mm} Server broadcasts $\bar\theta^{(t+1)}$ computed as in~\eqref{eqn::simple_average}.
\For {$i=1,\ldots, M$}
\State \hspace{-8mm} Agent $i$ transmits $G_i(\Bar{\theta}^{(t+1)},o_{i,0}^{(t+1)})$ to server.
\EndFor
\State Server broadcasts average operator $G(\Bar{\theta}^{(t+1)})$.
\EndFor
\end{algorithmic}
\end{algorithm}
\vspace{-3mm}
\section{Proposed Algorithm: \texttt{FedHSA}}
\label{sec:algo}
In this section, we will develop our proposed algorithm titled \texttt{Federated Heterogeneous Stochastic Approximation (FedHSA)}, designed carefully to account for heterogeneous local operators and intermittent communication. We now elaborate on the steps of \texttt{FedHSA}, outlined in Algorithm~\ref{algo:FedHSA}. \texttt{FedHSA} adheres to the standard intermittent communication model in FL, where communication takes place in rounds $t=0,1,\ldots, T-1$. At the beginning of each round $t$, a central server broadcasts the global parameter $\bar\theta^{(t)}$ to all the agents, who then perform $H$ steps of local updates; we will describe the local update process shortly. We denote the local parameter of agent $i$ at the $\ell$-th local step of the $t$-th communication round as $\theta_{i,\ell}^{(t)}$, with ${\theta_{i,0}^{(t)}}$ initialized from $\bar\theta^{(t)}$. In each local step $\ell$ of round $t$, agent $i$ interacts with its own environment and observes $o_{i,\ell}^{(t)}$ from its Markov chain $\mc{M}_i$. Using this observation, agent $i$ computes the noisy operator $G_i(\theta_{i,\ell}^{(t)}, o_{i,\ell}^{(t)})$. Note here that we define each local step as a time-step, and thus $o_{i,\ell}^{(t)}$ can be equivalently denoted as $o_{i,tH+\ell}$. 

\textbf{The Core Idea.} The core technique involves the local update rule at each agent. As revealed in Section~\ref{sec:Motivation}, if each agent makes local updates by simply taking steps along its own operator, then it can be impossible to converge to $\theta^\star$, while maintaining the same convergence rates as in the centralized setting. However, this is precisely what is done in the existing FRL literature~\cite{jinFRL, khodadadian, woo2023blessing, wang2023TMLR, zhang2024finite}, where the local update rule takes the form 
\begin{equation}
\theta_{i,\ell+1}^{(t)} =  \theta_{i,\ell}^{(t)} + \eta G_i(\theta_{i,\ell}^{(t)},o_{i,\ell}^{(t)}), 
\label{eqn::local_update_fedavg}
\end{equation}
with  $\eta>0$ being the local step-size. When each agent $i$ follows the update rule in~\eqref{eqn::local_update_fedavg} for several local steps, it tends to naturally drift towards the root $\theta^\star_i$ of its own local operator $\bar{G}_i.$ As such, the reason why update rules of the form in~\eqref{eqn::local_update_fedavg} fail to achieve the desired MSE bound in~\eqref{eqn::mse_bound} can be attributed to the following simple observation: \emph{in the heterogeneous setting, the root $\theta^\star$ of the global operator $\bar{G}$ may not coincide with the average $(1/M) \sum_{i \in [M]} \theta^\star_i$ of the roots of the agents' local operators.}  

 We now develop a drift-mitigation technique that overcomes this issue. To start with, we observe that if each agent had the luxury of talking to the server at every time-step, the ideal update rule of the global parameter would be $\bar \theta^{(t+1)} = \bar \theta^{(t)} + \alpha_g\eta G(\bar\theta^{(t)})$, where $G(\bar\theta^{(t)}):=(1/M)\sum_{i\in[M]}G_i(\bar\theta^{(t)},o_{i, t})$. Under the intermittent communication model, however, this is not feasible since an agent $i$ does not have access to the information from the other agents in $[M]\setminus\{i\}$ during each local step. Accordingly, our algorithm exploits the \emph{memory of the global operator $G(\bar \theta^{(t)})$ from the beginning of communication round $t$ to guide the local updates of each agent during round $t$.} To be concrete, at each local step $\ell$ of round $t$, agent $i$ adds the correction term $G(\bar\theta^{(t)})-G_i(\bar\theta^{(t)}, o_{i,0}^{(t)})$ to its local update direction $G_i(\theta_{i,\ell}^{(t)}, o_{i,\ell}^{(t)})$ to account for drift-effects, leading to the update rule for \texttt{FedHSA}:
\begin{equation}
\boxed{
    \theta_{i,\ell+1}^{(t)} =  \theta_{i,\ell}^{(t)} + \eta \paren{G_i(\theta_{i,\ell}^{(t)},o_{i,\ell}^{(t)})+G(\bar\theta^{(t)})-G_i(\bar\theta^{(t)}, o_{i,0}^{(t)})}.}\label{eqn::FedHSA_update}
\end{equation}

To gain further intuition about the above rule, suppose for a moment that every agent can access the noiseless versions of their local operators. In this case, the noiseless version of  \texttt{FedHSA} would take the form: $ \theta_{i,\ell+1}^{(t)} =  \theta_{i,\ell}^{(t)} + \eta \paren{\bar{G}_i(\theta_{i,\ell}^{(t)})+\bar{G}(\bar\theta^{(t)})-\bar{G}_i(\bar\theta^{(t)})}.$ Now suppose the global parameter $\bar{\theta}^{(t)}$ is $\theta^\star$. Since $\theta^{(t)}_{i,0} = \bar{\theta}^{(t)}$ and $\bar{G}(\theta^\star)=0$ by definition, observe that all subsequent iterates of the agents remain at $\theta^\star$. Said differently, if one initializes \texttt{FedHSA} at $\theta^\star$, the iterates never evolve any further, exactly as desired, i.e., the root $\theta^\star$ of the operator $\bar{G}$ is a \emph{stable equilibrium point} of \texttt{FedHSA}. After $H$ local steps, each agent $i$ transmits their local parameter change $\Delta_{i,H}^{(t)}:=\theta_{i,H}^{(t)}-\bar \theta^{(t)}$ to the central server, and the global parameter $\bar \theta(t)$ is updated as follows with global step-size $\alpha_g$:
\begin{equation}
    \bar\theta^{(t+1)} = \bar\theta^{(t)} + \frac{\alpha_g}{M}\sum_{i\in [M]}\Delta_{i,H}^{(t)}. \label{eqn::simple_average}
\end{equation}
\vspace{-5mm}
\section{Main Results and Discussion}
\label{sec: results}
As a warm-up to our main convergence result for \texttt{FedHSA}, we first consider a simpler setting where the observation $o_{i,\ell}^{(t)}$ made by each agent $i\in[M]$ at local iteration $\ell$ and round $t$ is drawn i.i.d. from the stationary distribution $\mu_i$ of its underlying Markov chain $\cM_i$. With $d_t:=\E\bracket{\norm{ \bar\theta^{(t)}-\theta^\star}^2}$, we have the following result for this setting. 

\begin{theorem}\label{thm::iid_mse} 
Suppose Assumptions~\ref{ass::smoothness} to~\ref{ass::independence} hold, and consider the i.i.d. sampling model described above. Define $\alpha=H\eta\alpha_g$ as the effective stepsize, and $\sigma := \max\{\{\sigma_i\}_{i\in[M]}, \norm{\theta^\star}, 1\}$. Then, there exists a universal constant $C$, such that with $\alpha_g=1$ and $\eta\leq\mu/(2CL^2H)$, \texttt{FedHSA} guarantees the following $\forall T\geq 0$:
\begin{equation}
\begin{aligned}
    d_T&\leq \exp\paren{-\frac{\mu}{2}\alpha T}\norm{\bar\theta^{(0)}-\theta^\star}^2+\bigo{\frac{\alpha L^2}{\mu MH}+\frac{\alpha^2L^4}{\mu^2}}\sigma^2. 
\nonumber
\end{aligned}
\end{equation} 
\end{theorem}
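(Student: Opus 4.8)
The plan is to collapse the $H$-step, $M$-agent dynamics into a single scalar recursion for the global mean-square error $d_t = \E\bracket{\norm{\bar\theta^{(t)}-\theta^\star}^2}$ and then unroll a geometric recurrence. First I would unroll the $H$ local updates~\eqref{eqn::FedHSA_update} and substitute them into the server step~\eqref{eqn::simple_average}. The crucial algebraic observation is that since $G(\bar\theta^{(t)}) = (1/M)\sum_i G_i(\bar\theta^{(t)}, o_{i,0}^{(t)})$, the correction terms $G(\bar\theta^{(t)})-G_i(\bar\theta^{(t)},o_{i,0}^{(t)})$ \emph{cancel exactly upon averaging over agents}. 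Writing $e_t := \bar\theta^{(t)}-\theta^\star$ and $\alpha := H\eta$ (with $\alpha_g=1$), this yields the clean recursion
\[ e_{t+1} = e_t + \alpha\, \bar G(\bar\theta^{(t)}) + \phi_t, \qquad \phi_t := \eta\sum_{\ell=0}^{H-1}\frac{1}{M}\sum_{i\in[M]}\paren{G_i(\theta_{i,\ell}^{(t)},o_{i,\ell}^{(t)}) - \bar G_i(\bar\theta^{(t)})}. \]
I would then split $\phi_t = \phi_t^{d} + \phi_t^{n}$ into a \emph{drift} part $\phi_t^{d}$, in which $\theta_{i,\ell}^{(t)}$ is replaced by $\bar\theta^{(t)}$ inside each $G_i$, and a \emph{noise} part $\phi_t^{n} := \eta\sum_{\ell}(1/M)\sum_i(G_i(\bar\theta^{(t)},o_{i,\ell}^{(t)})-\bar G_i(\bar\theta^{(t)}))$, which is conditionally zero-mean.

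The central lemma I would establish is a client-drift bound. Setting $\delta_{i,\ell}^{(t)} := \theta_{i,\ell}^{(t)}-\bar\theta^{(t)}$ (so $\delta_{i,0}^{(t)}=0$), decomposing the local direction into a Lipschitz term, a noise-fluctuation term, and the global signal $G(\bar\theta^{(t)})$, and invoking Assumption~\ref{ass::smoothness} (in particular the growth bound~\eqref{eqn::uniform_bound}), I obtain $\norm{\delta_{i,\ell+1}^{(t)}} \le (1+\eta L)\norm{\delta_{i,\ell}^{(t)}} + \eta\,\bigo{L}\paren{\norm{e_t}+\sigma}$. Unrolling over $\ell<H$ and using $(1+\eta L)^H\le 2$ — guaranteed by the step-size condition $\eta\le \mu/(2CL^2H)$, since then $\eta L H \le 1/(2C)$ — gives the pathwise estimate $\norm{\delta_{i,\ell}^{(t)}} \le \bigo{\alpha L}\paren{\norm{e_t}+\sigma}$, and hence, by Lipschitzness, $\norm{\phi_t^{d}}\le \bigo{\alpha^2 L^2}\paren{\norm{e_t}+\sigma}$.

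Next I would expand $\norm{e_{t+1}}^2$ while conditioning on the history $\cF_t$ at the start of round $t$. Strong monotonicity (Assumption~\ref{ass::strong_monotonicity}) with the Lipschitzness of $\bar G$ inherited from Assumption~\ref{ass::smoothness} yields $\norm{e_t+\alpha\bar G(\bar\theta^{(t)})}^2 \le (1-2\alpha\mu+\alpha^2 L^2)\norm{e_t}^2$, which supplies the contraction. Because $e_t$ and $\bar G(\bar\theta^{(t)})$ are $\cF_t$-measurable while $\E\bracket{\phi_t^{n}\mid\cF_t}=0$ in the i.i.d. model, the cross term involving $\phi_t^{n}$ vanishes, leaving only the drift cross term $\iprod{e_t+\alpha\bar G(\bar\theta^{(t)})}{\phi_t^{d}}$. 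For the variance, Assumption~\ref{ass::independence} (independence across agents) together with i.i.d. sampling (independence across the $H$ local steps) renders the summands of $\phi_t^{n}$ mutually orthogonal, so their variances add and $\E\bracket{\norm{\phi_t^{n}}^2\mid\cF_t}\le \bigo{\alpha^2 L^2/(MH)}\paren{\norm{e_t}^2+\sigma^2}$ — this is exactly where the $1/(MH)$ speedup is born.

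The delicate step, which I expect to be the main obstacle, is extracting the sharp dependence from the drift cross term. A symmetric Young's inequality applied to $\bigo{\alpha^2 L^2}\norm{e_t}\sigma$ would leave an additive $\bigo{\alpha^2 L^2}\sigma^2$, which after division by the contraction factor $\mu\alpha$ produces a spurious $\bigo{\alpha L^2/\mu}\sigma^2$ — too large to match the claim. Instead I would use a \emph{weighted} Young's inequality with weight $c\asymp \mu/(\alpha L^2)$, routing the $\norm{e_t}^2$ mass into the contraction (at cost only $\bigo{\alpha\mu}\norm{e_t}^2$) and leaving the sharp $\bigo{\alpha^3 L^4/\mu}\sigma^2$. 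Assembling all pieces, taking total expectations, and using $\eta\le\mu/(2CL^2H)$ (equivalently $\alpha L^2\le\mu/(2C)$) to absorb every $\bigo{\alpha^2 L^2}$ and $\bigo{\alpha\mu}$ coefficient of $d_t$ into the contraction, I arrive at the one-step recursion $d_{t+1}\le (1-(\mu/2)\alpha)\,d_t + B$ with $B=\bigo{\alpha^2 L^2/(MH)+\alpha^3 L^4/\mu}\sigma^2$. Unrolling and using $\sum_{k}(1-(\mu/2)\alpha)^k\le 2/(\mu\alpha)$ gives the steady-state term $2B/(\mu\alpha)=\bigo{\alpha L^2/(\mu MH)+\alpha^2 L^4/\mu^2}\sigma^2$, together with the exponentially decaying bias $\exp(-(\mu/2)\alpha T)\norm{\bar\theta^{(0)}-\theta^\star}^2$, matching the stated bound.
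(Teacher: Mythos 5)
Your proposal is correct in substance and follows the same overall strategy as the paper's proof: a one-step mean-square recursion per communication round, a deterministic client-drift bound of order $\bigo{\alpha L}\paren{\norm{e_t}+\sigma}$ (this is exactly the paper's Lemma~\ref{lem::drift}, proved the same way), the contraction from Assumption~\ref{ass::strong_monotonicity}, conditional independence across agents and local steps to obtain the $1/(MH)$ variance factor, a weighted Young's inequality with weight $\asymp \mu$ to avoid the spurious $\bigo{\alpha L^2/\mu}\sigma^2$ term, and geometric unrolling. The one structural difference is where the noise is centered. The paper expands $\norm{e_{t+1}}^2 = \norm{e_t}^2 + T_1 + T_2$ and keeps the stochastic fluctuation at the \emph{drifted} iterates, $G_i(\theta_{i,\ell}^{(t)})-\bar G_i(\theta_{i,\ell}^{(t)})$, killing all cross terms by sequentially conditioning on the within-round filtrations $\cF_{\ell}^{(t)}$ (a martingale-difference argument); you instead pull out the mean-path step $\alpha\bar G(\bar\theta^{(t)})$ and re-center the noise at the round-start iterate, $G_i(\bar\theta^{(t)},o_{i,\ell}^{(t)})-\bar G_i(\bar\theta^{(t)})$, so that the summands of $\phi_t^n$ are conditionally independent and zero-mean given the round-start history, making orthogonality immediate. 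The price is that your drift term $\phi_t^d$ contains the noisy differences $G_i(\theta_{i,\ell}^{(t)},o_{i,\ell}^{(t)})-G_i(\bar\theta^{(t)},o_{i,\ell}^{(t)})$ rather than only true-operator differences, but Lipschitzness handles these identically. Both routes produce the same three terms and the same constants; your centering is arguably cleaner, and is in fact the style the paper itself adopts for the Markovian variance bound (Lemma~\ref{lem::vr_markov}, which centers at $\theta^\star$).

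One bookkeeping slip needs repair: it is not true that ``the cross term involving $\phi_t^n$ vanishes.'' What vanishes is $\E\bracket{\iprod{e_t+\alpha\bar G(\bar\theta^{(t)})}{\phi_t^n}}$, since $e_t+\alpha\bar G(\bar\theta^{(t)})$ is measurable with respect to the round-start history. The remaining cross term $\E\bracket{\iprod{\phi_t^d}{\phi_t^n}}$ does \emph{not} vanish, because $\phi_t^d$ depends on the within-round observations both through the drifted iterates $\theta_{i,\ell}^{(t)}$ and through the shared samples $o_{i,\ell}^{(t)}$ appearing in both factors. This is harmless for the final bound: apply $2\iprod{\phi_t^d}{\phi_t^n}\leq \norm{\phi_t^d}^2+\norm{\phi_t^n}^2$, and both terms are already controlled in your argument at orders $\bigo{\alpha^4L^4}\paren{\norm{e_t}^2+\sigma^2}$ and $\bigo{\alpha^2L^2/(MH)}\paren{\norm{e_t}^2+\sigma^2}$ respectively, so only constants change. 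But the term must be accounted for rather than declared zero.
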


Next, we present our main convergence result under Markovian sampling. 

\begin{theorem}[\textbf{Main Result}]\label{theorem::markov_mse}
Suppose Assumptions~\ref{ass::smoothness} to~\ref{ass::independence} hold. Define $\bar\tau=\tau(\alpha^2)$ and $\rho = \max_{i\in[M]}\rho_i$. Then, there exists a universal constant $C'\geq 1$, such that by selecting $\alpha_g=1, \eta\leq \mu/(C'\bar \tau L^2H)$, the following holds for \texttt{FedHSA} for any $T\geq 2\bar\tau$:
\begin{equation}
\begin{aligned}
    d_T\leq \exp\paren{-\frac{\mu}{4}\alpha T}\bigo{d_0+\sigma^2}+\bigo{\frac{\bar\tau \alpha L^2}{\mu MH(1-\rho)}+\frac{\alpha^2L^4}{\mu^2}}\sigma^2. 
\end{aligned}
\label{eqn::markov}
\end{equation}
\end{theorem}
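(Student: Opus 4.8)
The plan is to track the mean-square error $d_t=\E\bracket{\norm{\bar\theta^{(t)}-\theta^\star}^2}$ through a single-round contraction inequality and then unroll it. The first move is to rewrite the server update in a clean form: summing the local rule~\eqref{eqn::FedHSA_update} over $\ell$ and averaging over agents, the correction terms cancel on average, since $\frac{1}{M}\sum_i\paren{G(\bar\theta^{(t)})-G_i(\bar\theta^{(t)},o_{i,0}^{(t)})}=0$ by the very definition of $G(\bar\theta^{(t)})$. This yields the compact recursion $\bar\theta^{(t+1)}=\bar\theta^{(t)}+\frac{\alpha}{H}\sum_{\ell=0}^{H-1}\frac{1}{M}\sum_i G_i(\theta_{i,\ell}^{(t)},o_{i,\ell}^{(t)})$ with $\alpha=H\eta\alpha_g$. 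Expanding $d_{t+1}$ then produces a cross term and a second-moment term that I would bound separately.

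For the cross term, I would decompose the averaged update direction as $\bar G(\bar\theta^{(t)})$ plus three errors: a client-drift error $G_i(\theta_{i,\ell}^{(t)},\cdot)-G_i(\bar\theta^{(t)},\cdot)$, a Markovian-bias error $\E\bracket{G_i(\bar\theta^{(t)},o_{i,\ell}^{(t)})}-\bar G_i(\bar\theta^{(t)})$, and a zero-mean noise fluctuation. Assumption~\ref{ass::strong_monotonicity} converts the leading term $\iprod{\bar\theta^{(t)}-\theta^\star}{\bar G(\bar\theta^{(t)})}$ into the desired $-\mu\norm{\bar\theta^{(t)}-\theta^\star}^2$ contraction; crucially, because the correction terms cancel in the mean, the drift pulls the iterate toward $\theta^\star$ rather than toward the biased average of local roots, which is precisely what removes the heterogeneity bias. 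I would control the client-drift via Assumption~\ref{ass::smoothness} after establishing a drift lemma of the form $\E\norm{\theta_{i,\ell}^{(t)}-\bar\theta^{(t)}}^2=\bigo{\alpha^2 L^2(d_t+\sigma^2)}$; the smallness of $\eta$ relative to $1/(L^2H)$ keeps this at higher order in $\alpha$ and ultimately feeds the $\alpha^2L^4/\mu^2$ term.

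The crux is the Markovian-bias error and the noise fluctuation. Here I would invoke the geometric mixing in~\eqref{eqn::tv}: for each observation $o_{i,\ell}^{(t)}$ at global time $tH+\ell$, I condition on the $\sigma$-algebra generated $\bar\tau$ steps earlier, with $\bar\tau=\tau(\alpha^2)$. By mixing, the conditional law of $o_{i,\ell}^{(t)}$ is $\alpha^2$-close in total variation to $\mu_i$, so replacing $G_i(\cdot,o_{i,\ell}^{(t)})$ by its stationary mean costs only $\bigo{\alpha^2}$ after using the uniform operator bound~\eqref{eqn::uniform_bound}. The subtlety is that the $\bar\tau$-step look-back window generally straddles communication-round boundaries and entangles the anchor $o_{i,0}^{(t)}$ in the correction term; I would absorb the iterate movement over those $\bar\tau$ steps using Lipschitzness together with a crude one-step growth bound, which is where the factor $\bar\tau$ enters. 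For the second-moment term, Assumption~\ref{ass::independence} makes the per-agent fluctuations independent and (in stationarity) zero-mean, so the average over agents contracts the variance by $1/M$; summing the residual temporal correlations across the $H$ local steps produces a geometric series $\sum_k\rho^k=1/(1-\rho)$, yielding the $\bar\tau\alpha L^2\sigma^2/\paren{\mu MH(1-\rho)}$ contribution.

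Assembling these estimates gives a one-step inequality of the form $d_{t+1}\leq(1-c\mu\alpha)d_t+\alpha^2 B$, with $B$ collecting the variance and higher-order drift contributions; unrolling the recursion and summing the geometric series produces the stated bound, with the burn-in over the first $\bar\tau$ rounds (legitimate because $T\geq 2\bar\tau$) contributing the crude $\bigo{d_0+\sigma^2}$ prefactor on the exponential term. I expect the principal obstacle to be the absence of a projection step: without a priori bounded iterates, the drift lemma, the $\bar\tau$-step conditioning argument, and the moment bounds are all mutually entangled, so these estimates cannot be proved in isolation and must instead be closed self-consistently — most plausibly via an induction that simultaneously propagates a bound on $d_t$ and on the local drift $\E\norm{\theta_{i,\ell}^{(t)}-\bar\theta^{(t)}}^2$.
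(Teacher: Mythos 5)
Your proposal follows essentially the same route as the paper's proof: the cancellation of the correction terms in the aggregated update, the cross-term/second-moment split, strong monotonicity applied to the leading term, a drift lemma of the stated form, conditioning $\bar\tau=\tau(\alpha^2)$ steps back to control the Markovian bias (with the iterate movement over that window absorbed via Lipschitzness and a crude growth bound), agent-independence plus a geometric series in $\rho$ for the $1/M$-scaled variance term, and unrolling with a burn-in argument that yields the $\bigo{d_0+\sigma^2}$ prefactor on the exponential. All of this matches Lemmas~\ref{lem::vr_markov}--\ref{theorem::markov} and the concluding argument in Appendix~\ref{app:Markovproof}.

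The one substantive divergence is your closing claim that, without a projection step, the drift lemma, the $\bar\tau$-step conditioning argument, and the moment bounds ``cannot be proved in isolation'' and must be closed self-consistently by a simultaneous induction on $d_t$ and the local drift. The paper needs no such induction, and it is worth seeing why. Its drift bound (Lemma~\ref{lem::drift}) is \emph{deterministic} and \emph{relative}: it bounds $\norm{\theta_{i,\ell}^{(t)}-\bar\theta^{(t)}}^2$ pathwise by $\cO(\eta^2L^2H^2)\paren{\norm{\bar\theta^{(t)}-\theta^\star}^2+\sigma^2}$, i.e., in terms of the same round's starting error, with no a priori bound on the iterates. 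Similarly, the $\bar\tau$-step movement bound (Lemma~\ref{lem::diff_tau}) is stated relative to $\E\bracket{\norm{\bar\theta^{(t)}-\theta^\star}^2}$, and the apparent circularity there is resolved not by induction but by rearrangement: the term $\bigo{\bar\tau^2L^2\alpha^2}\E\bracket{\norm{\bar\theta^{(t)}-\bar\theta^{(t-\bar\tau)}}^2}$ that reappears on the right-hand side is moved to the left and absorbed, since its coefficient is at most $1/2$ once $\alpha\lesssim 1/(\bar\tau L)$. Because every estimate is multiplicative in the current error rather than requiring uniform boundedness of the iterates, the lemmas compose directly into the one-step contraction of Lemma~\ref{theorem::markov}. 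So your instinct about where the difficulty lies is correct, but the paper's resolution is lighter than the induction you anticipate; an induction of the kind you describe could plausibly be made to work, but it is unnecessary machinery here.
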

The next result is an immediate corollary of Theorem~\ref{theorem::markov_mse}. 

\begin{corollary}[\textbf{Linear Speedup}]\label{coro::markov}
Suppose all the conditions in Theorem~\ref{theorem::markov_mse} hold. Then, by choosing $\eta=4\log(MHT)/(\mu HT)$, and $T\geq(L^2/\mu^2)\max\{4C'\bar\tau \log(MHT), MH(1-\rho)/\bar\tau\}$, \texttt{FedHSA} guarantees the following for any $T\geq 2\bar\tau$:
\begin{equation}
    d_T\leq\Tilde{\mathcal{O}}\paren{\paren{d_0+\frac{\bar\tau L^2\sigma^2}{\mu^2(1-\rho)}}\frac{1}{MHT}}.
\end{equation}
\end{corollary}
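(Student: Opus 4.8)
The plan is to obtain Corollary~\ref{coro::markov} as a direct specialization of the bound in Theorem~\ref{theorem::markov_mse}, plugging in the prescribed step-size and then using the two lower bounds on $T$ to simplify each of the three terms in~\eqref{eqn::markov}. Since $\alpha_g = 1$, the effective step-size is $\alpha = H\eta$, so the choice $\eta = 4\log(MHT)/(\mu HT)$ gives $\alpha = 4\log(MHT)/(\mu T)$. First I would verify that this $\eta$ is admissible, i.e., that it satisfies the constraint $\eta \leq \mu/(C'\bar\tau L^2 H)$ required by Theorem~\ref{theorem::markov_mse}; substituting the expression for $\eta$, this is equivalent to $T \geq 4C'\bar\tau L^2\log(MHT)/\mu^2$, which is exactly the first branch of the stated lower bound $T \geq (L^2/\mu^2)\max\{4C'\bar\tau\log(MHT), MH(1-\rho)/\bar\tau\}$. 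Hence the hypotheses of Theorem~\ref{theorem::markov_mse} are met and~\eqref{eqn::markov} applies.

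Next I would evaluate the bias (exponential) term. With $\alpha = 4\log(MHT)/(\mu T)$, the exponent becomes $-(\mu/4)\alpha T = -\log(MHT)$, so $\exp(-(\mu/4)\alpha T) = 1/(MHT)$, and the first term of~\eqref{eqn::markov} collapses to $\bigo{(d_0 + \sigma^2)/(MHT)}$. Substituting $\alpha$ into the first variance term yields $\bigo{\bar\tau L^2\sigma^2\log(MHT)/(\mu^2 MHT(1-\rho))}$, which is already of the advertised order $\tilde{\mathcal{O}}(\bar\tau L^2\sigma^2/(\mu^2(1-\rho)) \cdot 1/(MHT))$ after absorbing the logarithmic factor into $\tilde{\mathcal{O}}(\cdot)$.

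The crux, and the step I expect to carry the real content, is showing that the remaining higher-order variance term $\alpha^2 L^4\sigma^2/\mu^2$ is dominated by the linear-speedup variance term; note this term lacks the beneficial $1/(MH)$ factor, so it must be suppressed by the sample budget rather than by collaboration. Substituting $\alpha$ gives $16 L^4\sigma^2\log^2(MHT)/(\mu^4 T^2)$, and comparing its ratio to the first variance term produces $\tilde{\mathcal{O}}(L^2 MH(1-\rho)/(\mu^2\bar\tau T))$, which is $\tilde{\mathcal{O}}(1)$ precisely when $T \gtrsim (L^2/\mu^2)MH(1-\rho)/\bar\tau$ --- the second branch of the lower bound on $T$. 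Thus the second branch is exactly what guarantees the $\alpha^2$ term does not dominate, and both variance terms collapse into $\tilde{\mathcal{O}}(\bar\tau L^2\sigma^2/(\mu^2(1-\rho)) \cdot 1/(MHT))$. Finally, since $L \geq 1$, $\mu \in (0,1]$, $\bar\tau \geq 1$ and $1 - \rho \leq 1$, the prefactor $\bar\tau L^2/(\mu^2(1-\rho)) \geq 1$, so the $\sigma^2/(MHT)$ contribution from the bias term is absorbed into the variance term while the $d_0/(MHT)$ contribution is kept separate; adding these yields the claimed bound $d_T \leq \tilde{\mathcal{O}}((d_0 + \bar\tau L^2\sigma^2/(\mu^2(1-\rho)))/(MHT))$. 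The argument is essentially bookkeeping once Theorem~\ref{theorem::markov_mse} is in hand; the only genuinely load-bearing observation is that the two separate lower bounds on $T$ serve two distinct purposes --- step-size admissibility and suppression of the non-speedup $\alpha^2$ term --- which together deliver a clean $M$-fold linear speedup.
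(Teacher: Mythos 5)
Your proposal is correct and matches the paper's (implicit) argument: the paper states Corollary~\ref{coro::markov} as an immediate consequence of Theorem~\ref{theorem::markov_mse} with no separate proof, and your substitution of $\alpha = 4\log(MHT)/(\mu T)$, the admissibility check via the first branch of the lower bound on $T$, and the domination of the $\alpha^2 L^4\sigma^2/\mu^2$ term via the second branch are exactly the bookkeeping that "immediate" elides. In particular, you correctly identify the only load-bearing point — that the two branches of the condition on $T$ serve distinct roles (step-size feasibility versus suppressing the non-speedup higher-order variance term) — so nothing is missing.
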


We provide detailed convergence proofs of Theorems~\ref{thm::iid_mse} and~\ref{theorem::markov_mse} in Appendices~\ref{app:iidproof} and~\ref{app:Markovproof}, respectively. We will provide a proof sketch for Theorem~\ref{theorem::markov_mse} shortly in Section~\ref{sec::Challenges}. Before doing so, several comments are in order.

\textbf{Discussion.} Comparing our bounds for the i.i.d. (Theorem~\ref{thm::iid_mse}) and Markov settings (Theorem~\ref{theorem::markov_mse}), we note that the only difference comes from the fact that the noise variance term $\sigma^2$ in the Markov case gets inflated by an additional factor $\bar{\tau}/(1-\rho)$ capturing the rate at which the slowest mixing Markov chain approaches its stationary distribution. Such an inflation by the mixing time is typical for problems with Markovian data~\cite{nagaraj}. 

$\bullet$ \emph{Matching Centralized Rates.} Theorem~\ref{theorem::markov_mse} reveals that \texttt{FedHSA} guarantees exponentially fast convergence to a ball around $\theta^\star$. In particular, comparing \eqref{eqn::markov} with~\eqref{eqn::mse_bound}, we conclude that \texttt{FedHSA} recovers the known finite-time bounds for single-agent SA in~\cite{bhandari_finite, srikant, chenQ, chen2022finite}. 

$\bullet$ \emph{Linear Speedup Effect.} From Eq.~\eqref{eqn::markov}, notice that the radius of the ball of convergence around $\theta^\star$ is the sum of two terms: an $\mathcal{O}(\alpha \bar{\tau} \sigma^2/(MH))$ term that gets scaled down by the number of agents $M$, and a higher-order $\bigo{\alpha^2 \sigma^2}$ term that can be made much smaller relative to the first term by making $\alpha$ sufficiently small, i.e., the dominant noise term exhibits a ``variance-reduction'' effect. To further highlight this effect, Corollary~\ref{coro::markov} reveals that with a decaying step-size, the sample-complexity of \texttt{FedHSA} is $\tilde{\mathcal{O}}(\sigma^2/(MHT))$; \emph{this is essentially the best rate one can hope for} since after $H$ local steps in $T$ communication rounds, the total number of data samples collected across $M$ agents is precisely $MHT$. The $M$-fold reduction in sample-complexity makes it explicit that \emph{even in a heterogeneous federated SA setting with time-correlated data, one can achieve linear speedups in performance by collaborating using our proposed algorithm \texttt{FedHSA.}} This is the first result of its kind and significantly generalizes similar bounds in the homogeneous setting~\cite{khodadadian, woo2023blessing}. 


$\bullet$ \emph{No Heterogeneity Bias.} In Proposition~\ref{prop:hetbias}, we saw that if one employs the algorithms in~\cite{jinFRL, wang2023TMLR, zhang2024finite} in the heterogeneous setting, then there is a heterogeneity-induced bias term in the final bound that can potentially negate the benefits of collaboration. \texttt{FedHSA} effectively eliminates such a bias term \emph{without making any assumptions whatsoever on the level of heterogeneity}. To see this, it suffices to note from Eq.~\eqref{eqn::markov} that in the noiseless case when $\sigma^2=0$, \texttt{FedHSA} guarantees exponentially fast convergence to $\theta^\star$, as opposed to a ball of radius $\mathcal{O}(\eta)$ around $\theta^\star$ like in Proposition~\ref{prop:hetbias}. 
\begin{figure}[t!]
\centering
\includegraphics[width=0.6\textwidth]{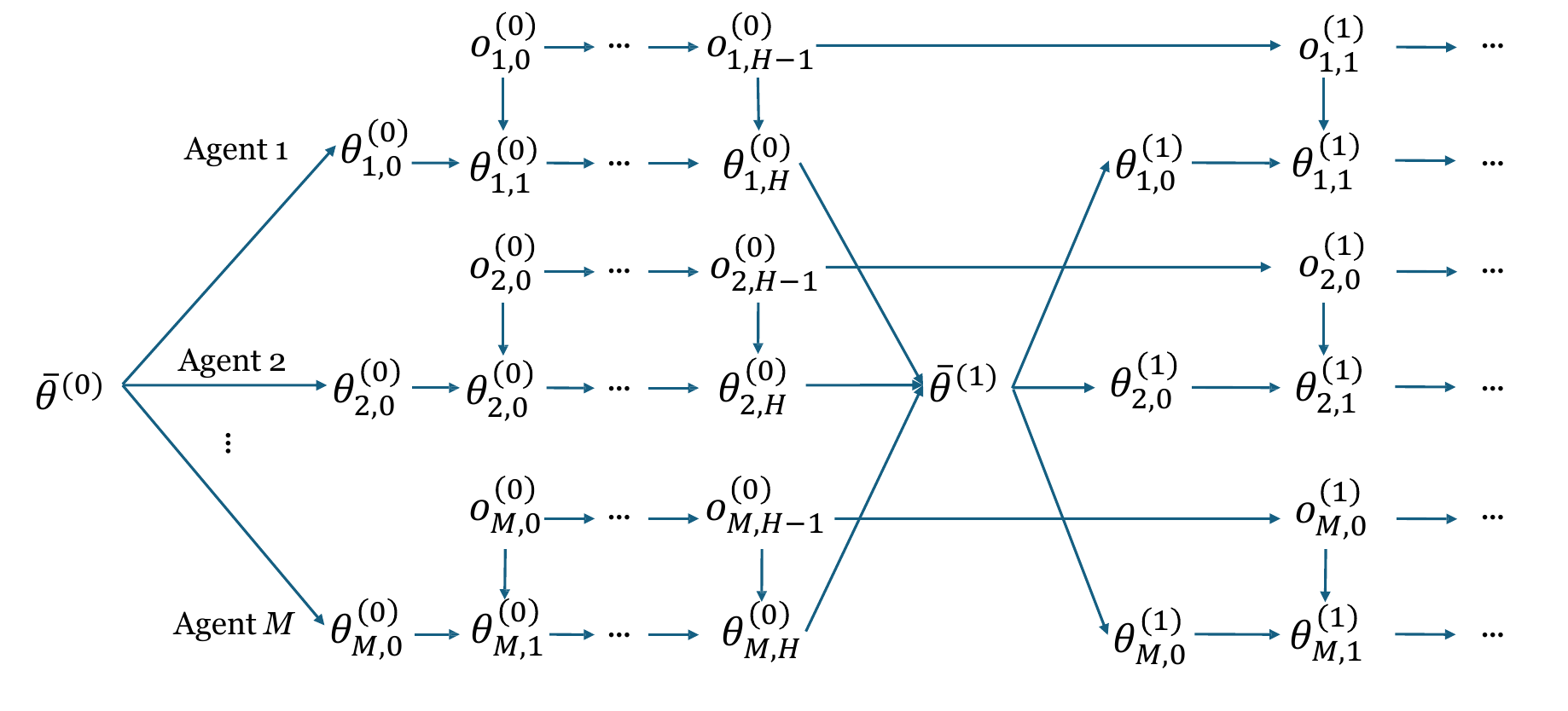}
\caption{Illustration of the different statistical correlations in our heterogeneous federated SA setting.}
\label{fig::interaction}
\vspace{-5mm}
\end{figure}
\vspace{-2mm}
\section{Challenges and Proof Sketch}
\label{sec::Challenges}
Arriving at our main result in Theorem~\ref{theorem::markov_mse} is highly non-trivial, and requires overcoming several technical challenges arising from the interplay between complex statistical correlations, drift effects due to heterogeneity, and multiple local update steps. We elaborate on these \textbf{challenges} as follows. 

$\bullet$ As illustrated in Figure~\ref{fig::interaction}, our analysis has to account for two types of data correlations: (i) temporal correlations in the data for any given agent, and (ii) correlations induced by exchanging data across agents. For each agent $i$, the observation $o_{i,\ell}^{(t)}$ is statistically dependent on all prior observations for itself since they are all part of a \emph{single Markovian trajectory.} Such an issue does not arise in the standard FL setting where one assumes i.i.d. data over time. Additionally, the local parameter $\theta_{i,\ell}^{(t)}$ is jointly influenced by all the parameters of all agents up to the beginning of round $t$, due to communication. Finally, all such parameters inherit randomness from prior Markovian observations. In short, \emph{combining information generated by heterogeneous Markov chains creates complex spatial and temporal correlations.}

$\bullet$ Since the local operators across agents are heterogeneous and can have distinct roots, local updates create the need to control a heterogeneity-induced drift effect. Such an issue does not arise in~\cite{khodadadian} and~\cite{woo2023blessing}, where all local operators share a common root. 

$\bullet$ To tame the effect of heterogeneity, the recent papers of~\cite{wang2023TMLR} and~\cite{zhang2024finite} in FRL assume a projection step in the algorithm to ensure uniform boundedness of iterates. This considerably simplifies parts of their analysis. In contrast, we do not assume any projection step, making it much harder to contend with the biases introduced by Markovian sampling and heterogeneity. 

\textbf{Proof Sketch.} We now provide a high-level technical overview of our analysis; the details are deferred to Appendix~\ref{app:Markovproof}. Our first step is to establish a one-step recursion that captures the progress made by \texttt{FedHSA} in each communication round (Lemma~\ref{lem::onestep_markov} in Appendix~\ref{app:Markovproof}). Up to a higher order term in $\alpha$, the R.H.S. of this recursion comprises four terms: a ``good'' term that leads to a contraction in the mean-square error, a ``noise variance'' term that gets scaled down by $M$, a ``drift'' term due to heterogeneity, and a bias term due to Markov sampling. The challenging part of this result is showing the variance-reduction effect under Markov sampling; to do so, we carefully exploit the geometric mixing properties of the agents' Markov chains and Assumption~\ref{ass::independence}. 

Next, to control the drift effect due to heterogeneity, we show in Lemma~\ref{lem::drift} (in Appendix~\ref{app:basic}) that if $\eta \leq 1/(LH)$, then the following is true \emph{deterministically}:
$$ \norm{\theta_{i,\ell}^{(t)}-\bar \theta^{(t)}}^2\leq \cO(\eta^2 L^2H^2)\left(\norm{\bar\theta^{(t)}-\theta^\star}^2+\sigma^2\right). $$
To build intuition, notice that when there is no noise, i.e., $\sigma = 0$, if $\bar\theta^{(t)} = \theta^\star$, meaning the iterate at the start of the round is at the desired value, there would be no client-drift at all, precisely as desired.
The most challenging part of our analysis pertains to controlling a Markovian bias term:
$$T_{bias}=:\E\bracket{\iprod{\bar\theta^{(t)}-\theta^\star}{\frac{2\alpha}{MH}\sum_{i=1}^M\sum_{\ell=0}^{H-1} \paren{G_{i}(\theta_{i,\ell}^{(t)})-\bar G_{i}(\theta_{i,\ell}^{(t)})}}},$$ 
that arises due to temporal correlations in data. Such a term vanishes in the standard FL setting where one assumes i.i.d. data. In the heterogeneous FRL setting in~\cite{wang2023TMLR} and~\cite{zhang2024finite}, such a term is simplified by assuming a projection step. Since we do not assume a projection step in \texttt{FedHSA}, we cannot benefit from such simplifications. Nonetheless, we establish the following key result. 

\textbf{Claim (Informal).} \textbf{(Markovian Bias Control).} \textit{Under the conditions of Theorem~\ref{theorem::markov_mse}, we have:} 
\begin{equation}
\begin{aligned}
  T_{bias} &\leq \paren{\frac{\alpha\mu}{2}+\bigo{\bar\tau L^2\alpha^2+\frac{L^4\alpha^3}{\mu}}}\E\bracket{\norm{\bar\theta^{(t)}-\theta^\star}^2}\\
    &\quad+\bigo{\frac{  L^4 \sigma^2\alpha^3}{\mu}+\frac{\bar\tau L^2\sigma^2\alpha^2}{MH(1-\rho)}}. 
\nonumber
\end{aligned}
\end{equation}

A formal version of this claim appears as Lemma~\ref{lem::markov_bias} in Appendix~\ref{app:Markovproof}. For the Markovian sampling result to closely resemble the i.i.d. case, we need to crucially ensure that (i) the iterate-dependent term in the bias can be dominated by the contractive ``good'' term, and (ii) the noise terms are either $\mathcal{O}(\alpha^3)$, or $\mathcal{O}(\alpha^2/M)$, i.e., the noise terms need to be either higher-order in $\alpha$ or exhibit an inverse scaling with $M$, to preserve the linear speedup effect. In the single-agent case~\cite{bhandari_finite, srikant}, one need not worry about linear speedups, and, as such, the corresponding analysis is much less involved. In summary, to arrive at our desired bounds, we need to significantly depart from known analysis techniques for both single- and multi-agent SA under Markov sampling.

\section{Experimental Results}\label{sec:sims}
In this section, we present numerical results for three heterogeneous federated SA tasks subject to Markovian noise, which provide empirical support for our theoretical framework. In our experiments, we aim to convey \textbf{two key messages}: (i) Our \texttt{FedHSA} algorithm eliminates the heterogeneity bias and (ii) \texttt{FedHSA} achieves linear speedup w.r.t. the number of agents. The experimental setups are described in detail in the sequel.
\subsection{Federated Quadratic Loss Minimization Problem}\label{exp::FQLM}
Consider the classical FL framework involving $M$ agents: 
\begin{equation} 
\min_{\theta\in\R^d} f(\theta) = \frac{1}{M}\sum_{i=1}^M f_i(\theta). \label{problem::quad} 
\end{equation} Here, 
\begin{equation} 
f_i(\theta)=\frac{1}{2}\theta^TA_i\theta - b_i^T\theta + c_i,\quad\quad\forall i\in[M] 
\end{equation} 
where $A_i \in \R^{d \times d}$ is a positive definite matrix, $b_i \in \R^d$ is a $d$-dimensional vector, and $c_i \in \R$ is a scalar for $i=1,\cdots, M$. Since $\{f_i\}_{i=1}^M$ are quadratic functions with positive definite $A_i$'s, the gradients are given by $\nabla f_i(\theta) = A_i\theta - b_i$, for $i = 1, \cdots, M$.

Problem~\eqref{problem::quad} is a good fit for our federated SA framework. Specifically, the true local operator $\bar G_i(\cdot)$ corresponds to the true negative gradient $-\nabla f_i(\cdot)$ for each $i\in[M]$. At time step $t$, each agent $i\in[M]$ has access only to an estimator $G_i(\cdot)$ of the true operator, which is corrupted by additive Markovian noise: 
\begin{equation} 
G_i(\theta^{(t)}) = \bar G_i(\theta^{(t)}) + \xi_i^{(t)},\quad\quad \forall i\in[M] 
\end{equation} 
where the noise samples $\{\xi_i^{(t)}\}_{t\geq 0}$ are drawn from a discrete-time continous-state Markov chain.

We now explain how the Markovian noise is generated. For each agent $i$, we maintain a state vector $z_i \in \R^d$, initialized to zero, which evolves as 
\begin{equation} 
z_i^{(t+1)} = Q_i z_i^{(t)} + \epsilon_i^{(t)}. 
\end{equation} 
Here, $Q_i \in \R^{d \times d}$ is a \textit{Schur-stable matrix}, ensuring that all eigenvalues of $Q_i$ lie within the unit ball. This guarantees that $z_i^{(t)}$ does not diverge. The noise $\epsilon_i^{(t)}$ is a zero-mean Gaussian noise with variance $\sigma_\epsilon^2$ and  covariance matrix given by $\sigma^2_{\epsilon} I$, i.e.,  $\epsilon_i^{(t)} \sim \cN(0, \sigma_\epsilon^2 I)$. The Markovian noise $\xi_i^{(t)}$ is directly taken from the state vector: 
\begin{equation} 
\xi_i^{(t)} = z_i^{(t)}. 
\end{equation} It can be shown that this noise is Markovian~\cite{tu2018least}, and mixes geometrically fast (as needed by our theory). 

To validate (i), we consider Problem~\eqref{problem::quad} and solve it using the conventional FL framework, where each agent performs local steps. We compare \texttt{FedHSA} with the existing local SGD approach, which does not include a correction term during local updates as shown in~\eqref{eqn::local_update_fedavg} (referred to as the ``\texttt{Local SA} approach'' in what follows). These two approaches are evaluated under both noiseless conditions and with the presence of additive Markovian noise. The experimental configurations are as follows: $M = 20$ agents, each agent performs $H = 10$ local steps, the learning rate is $\eta = 0.001$, the parameter dimension is $d = 10$, and the performance is measured by $E_t:=\norm{\bar\theta^{(t)} - \theta^\star}^2$.

\begin{figure}[htbp]
    \centering
    \subfigure[Noiseless setting]{
        \includegraphics[width=0.45\textwidth]{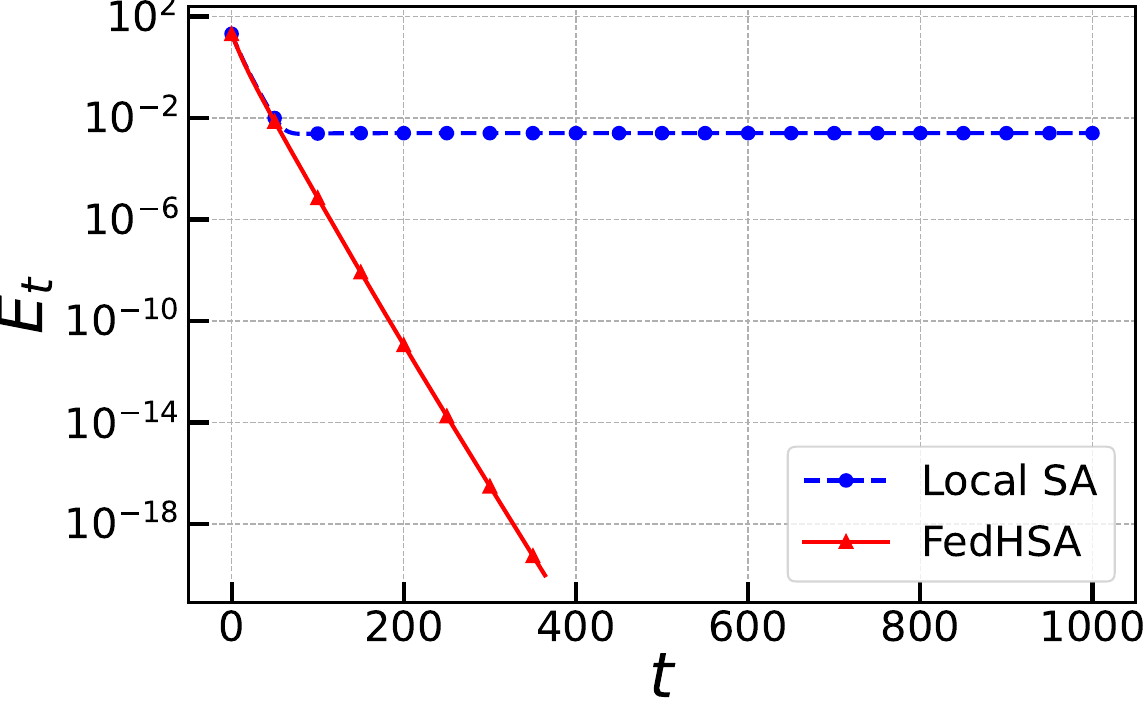}
        \label{fig:noiseless}
    }
    \hfill
    \subfigure[Additive Markovian noise setting]{
        \includegraphics[width=0.45\textwidth]{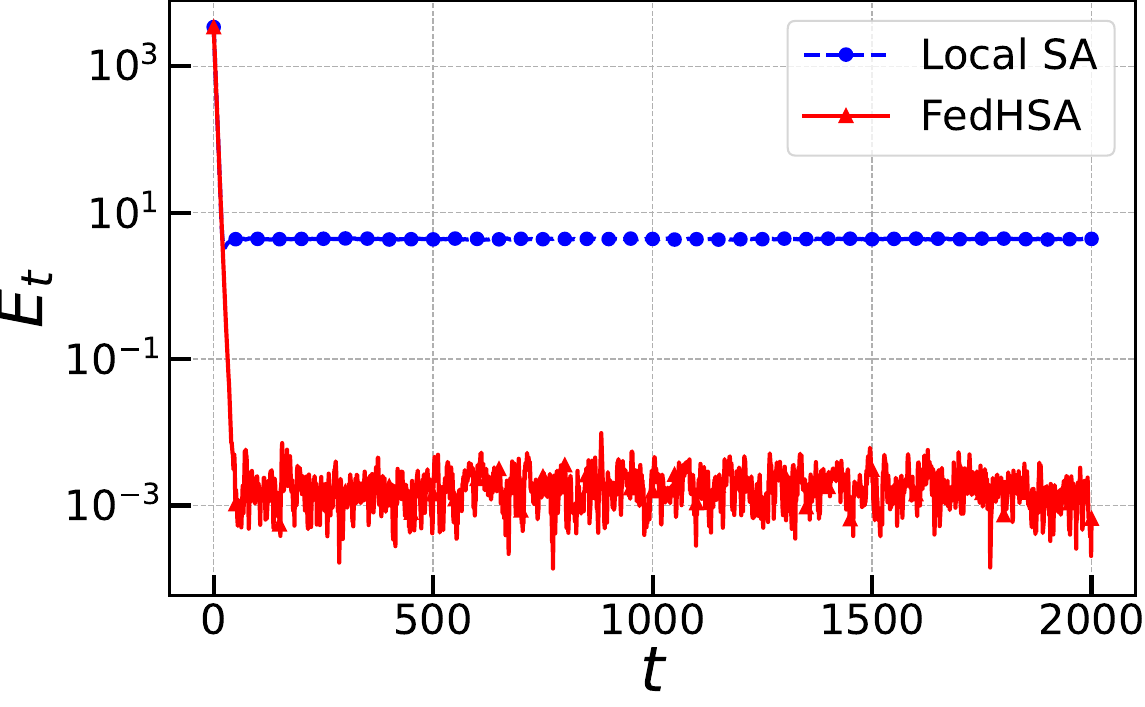}
        \label{fig:noisy}
    }
    \caption{Comparison between \texttt{Local SA} and \texttt{FedHSA}}
    \label{fig:comp_vanilla}
\end{figure}

As shown in Figure~\ref{fig:noiseless}, even in the noiseless case where each agent has access to the true operator $\bar G_i$, the \texttt{Local SA} approach fails to converge to the optimal point $\theta^\star$ due to the heterogeneity bias, as explained in Proposition~\ref{prop:hetbias}. In contrast, \texttt{FedHSA} demonstrates linear convergence towards $\theta^\star$. This result aligns with the theoretical prediction in~\eqref{eqn::markov}, where the algorithm converges exponentially fast to $\theta^\star$ in expectation when $\sigma^2 = 0$ (noiseless case).

In Figure~\ref{fig:noisy}, we introduce additive Markovian noise with $\sigma_\epsilon^2 = 0.01$ into the local operators $G_i$'s. In this noisy setting, we observe that the \texttt{FedHSA} method exhibits a lower error floor compared to \texttt{Local SA}. This is because \texttt{FedHSA} effectively eliminates the heterogeneity bias, with the resulting error being solely attributed to the Markovian noise. Additionally, the impact of this noise is mitigated by a factor of $M$, owing to the linear speedup of the \texttt{FedHSA} algorithm.

To further verify the linear speedup effect in (ii), we compare the results of \texttt{FedHSA} with different numbers of agents. We consider the same problem~\eqref{problem::quad} equipped with the \texttt{FedHSA} algorithm for $M=1,5,20,100$. The other configurations remain the same.

\begin{figure}[t!]
\centering
\includegraphics[width=0.45\textwidth]{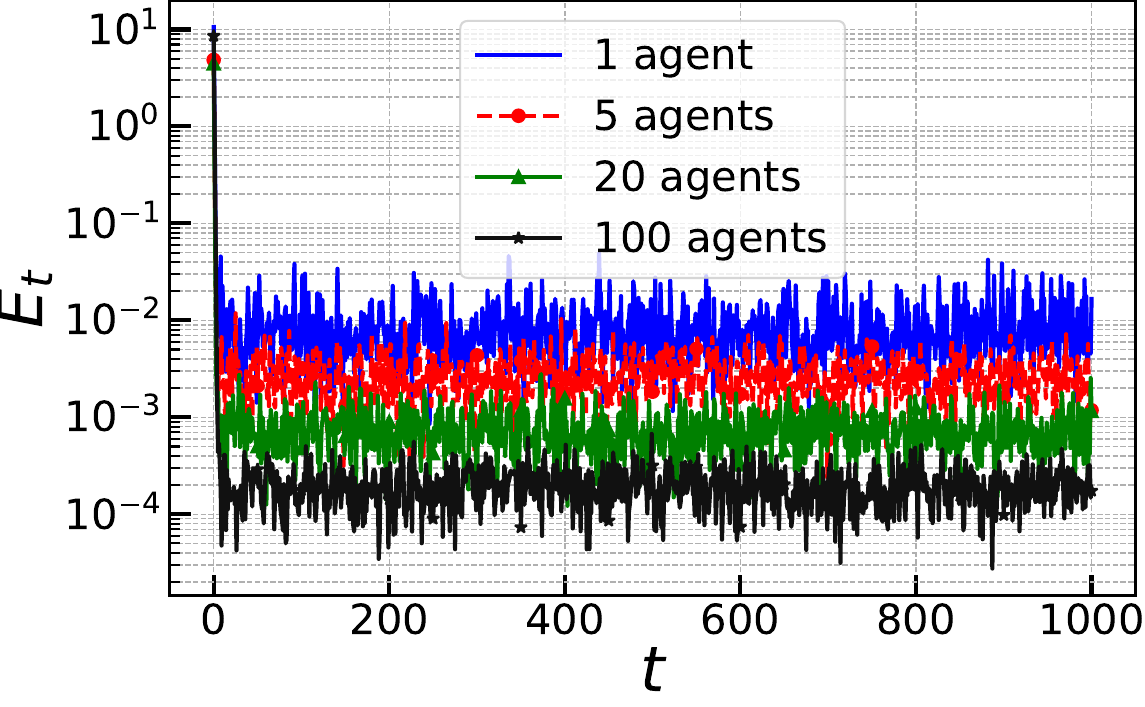}
\caption{Comparison between different numbers of agents for the \texttt{FedHSA} algorithm.}
\label{fig::comp_diff_num}
\end{figure}

Figure~\ref{fig::comp_diff_num} demonstrates a lower error floor with an increasing number of agents for our \texttt{FedHSA} algorithm. This is exactly what we expect, since Corollary~\ref{coro::markov} clearly states that with a proper choice of the step-size $\eta$, the expected error floor $d_T$ is upper-bounded by $\tilde{\cO}(1/(MHT))$, which is inversely proportional to the number of agents $M$.

We also present results in scenarios where Assumption~\ref{ass::strong_monotonicity} does not hold, meaning each $A_i$ for $i\in[M]$ is symmetric but not necessarily positive definite. Consequently, the objective function $f_i$ becomes nonconvex. Removing this assumption significantly expands the applicability of our algorithms. As demonstrated in Figures~\ref{fig:comp_vanilla_nonconvex} and~\ref{fig::comp_diff_num_nonconvex}, the observed results remain consistent with those obtained under strongly convex objectives.

\begin{figure}[htbp]
    \centering
    \subfigure[Noiseless setting]{
        \includegraphics[width=0.45\textwidth]{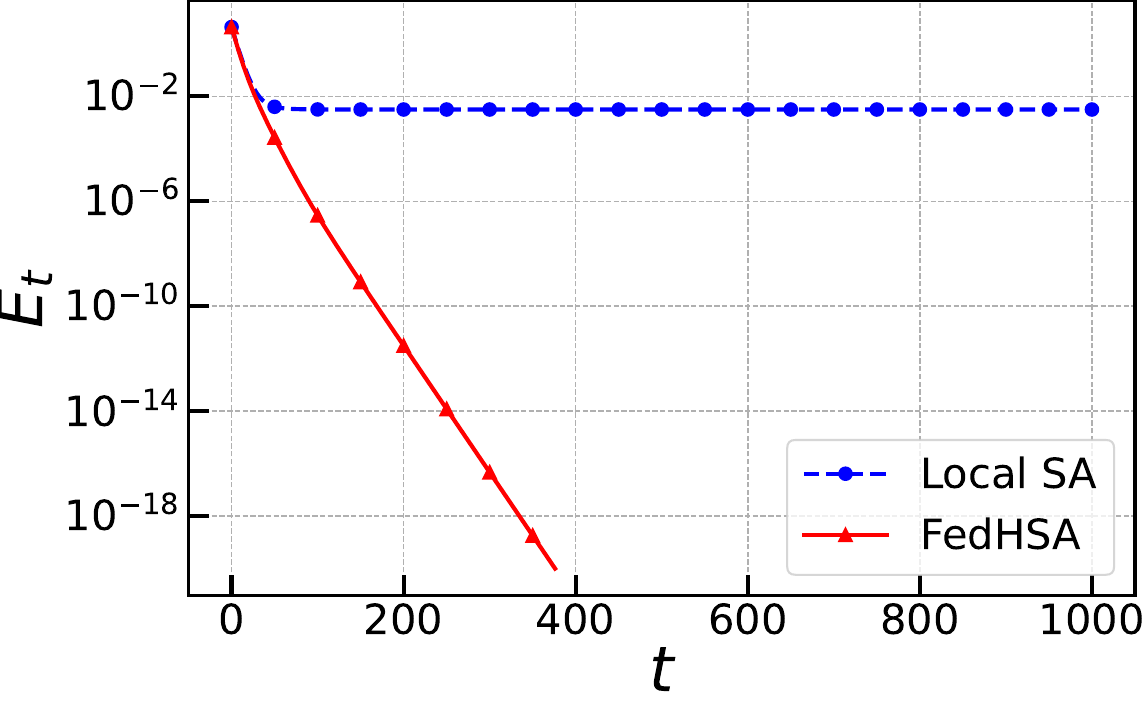}
        \label{fig:noiseless_nonconvex}
    }
    \hfill
    \subfigure[Additive Markovian noise setting]{
        \includegraphics[width=0.45\textwidth]{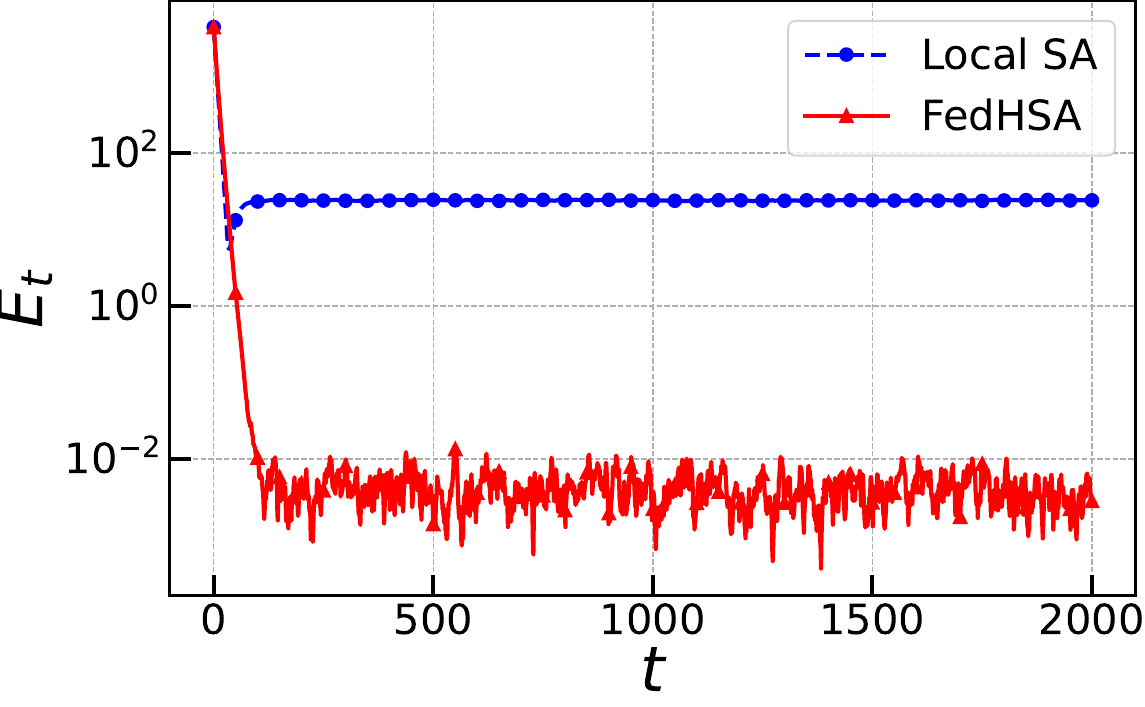}
        \label{fig:noisy_nonconvex}
    }
    \caption{Comparison between \texttt{Local SA} and \texttt{FedHSA} with nonconvex objectives}
    \label{fig:comp_vanilla_nonconvex}
\end{figure}

\begin{figure}[t!]
\centering
\includegraphics[width=0.45\textwidth]{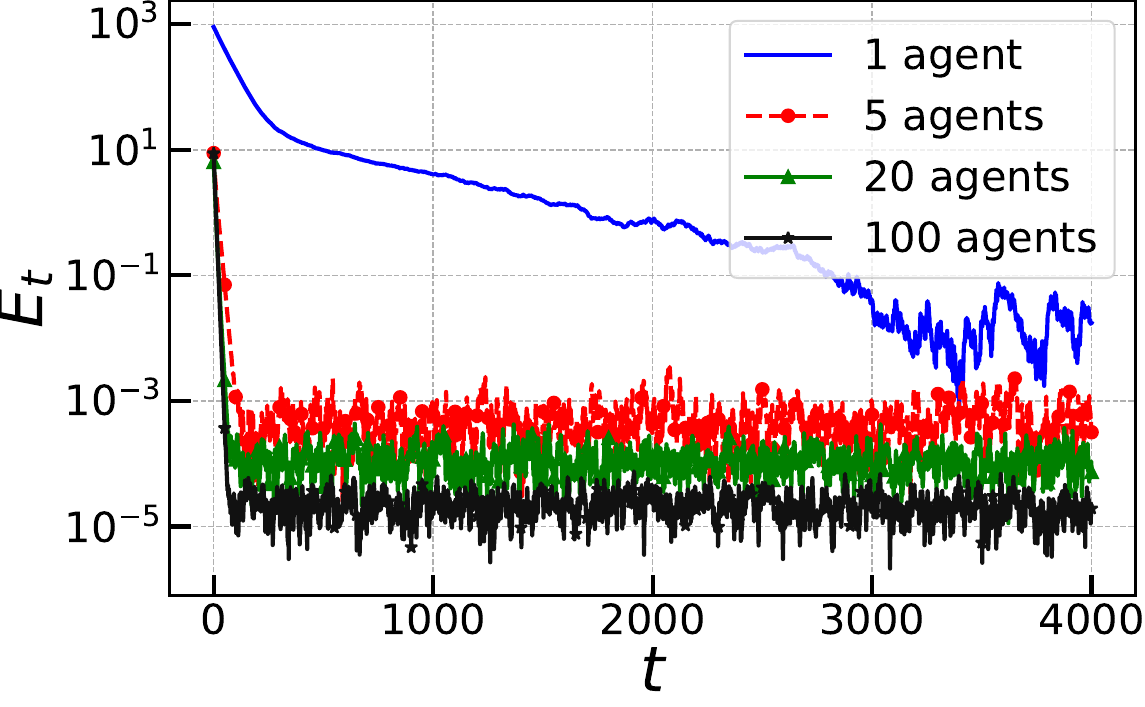}
\caption{Comparison between different numbers of agents for the \texttt{FedHSA} algorithm with nonconvex objectives.}
\label{fig::comp_diff_num_nonconvex}
\end{figure}

\subsection{Federated TD Learning with Linear Function Approximation}
We proceed to explore the application of \texttt{FedHSA} to FRL via focusing on the setting of federated TD learning with LFA. We begin by providing a detailed explanation of the problem formulation.

Consider a total of $M$ agents, each agent $i$ interacting with its individual environment equipped with a fixed policy $\mu_i$, which can be modeled as a Markov reward process (MRP). Suppose that all the MRPs have identical finite state and action spaces, though not necessarily the same transition matrices, reward functions, or discount factors. Specifically, the MRP of the $i$-th agent is denoted as $\cM_i=(\cS,\cA,\cP_i,R_i,\gamma_i)$, where $\cS$ is the state space with cardinality $S$; $\cA$ is the action space with cardinality $A$; $\cP_i$ is the transition kernel dictated by the local policy $\mu_i$; $R_i$ is the reward function; and $\gamma_i$ is the discount factor with $0<\gamma_i<1$. We denote $r_i^{(t)}$ as the reward observed by agent $i$ at time-step $t$.

For self-containedness, we reiterate some basic concepts. For agent $i$ and the underlying MRP $\cM_i$, the associated policy is $\mu_i$, and the value-function $V_i$ is defined as:
\begin{equation}
    V_i(s)=\E\left[\sum_{t=0}^\infty \gamma_i^t r_i^{(t)}\mid s_i^{(t)}=s, \mu_i\right],
\end{equation}
where $s_i^{(t)}$ is the state of agent $i$ at time-step $t$.

In many RL applications, the state and action spaces can be extremely large, making it impractical to store the value-function for each state $s$. To address this, feature matrices are often used to approximate the value-function. One common approach is the LFA framework:
\begin{equation}
    \Tilde V_i=\Phi_i\theta,
\end{equation}
where $\Tilde V_i$ is the approximated value-function for agent $i$ in vector form, $\Phi_i\in\R^{S\times d}$ is the feature matrix specific to agent $i$, consisting of $d$ linearly independent feature vectors $\{\phi_{i,k}\}_{k=1}^d$, and $\theta\in\R^d$ is the nominal parameter. Here, we make the general assumption that all $M$ agents do not necessarily use the same set of feature vectors~\cite{doan2023finite}.

The \textbf{goal} in this problem is for the agents to collectively find a parameter $\theta^\star$ such that it best approximates the value-functions across all agents, i.e., 
\begin{equation}
    \bar\Phi \theta^\star \approx \frac{1}{M}\sum_{i=1}^M V_i,
\end{equation}
where $\bar\Phi=\frac{1}{M}\sum_{i=1}^M \Phi_i$.

To this end, each agent updates its parameters by taking the direction of the negative gradient of the sample Bellman error ${BE}_i$ at observation $o_{i,t}:=\{s_{i}^{(t)}, r_{i}^{(t)}, s_{i}^{(t+1)}\}$ w.r.t. parameter $\theta^{(t)}$ at time-step $t$, obtained via interacting with its own environment $\cM_i$:
\begin{equation}
    {BE}_i(\theta^{(t)},o_{i,t}):=\frac{1}{2}\left(r_{i}^{(t)}+\gamma \phi_i^T(s_{i}^{(t+1)})\theta^{(t)}-\phi_i^T(s_{i}^{(t)})\theta^{(t)}\right)^2,\label{meanpatherror}
\end{equation}
where $\phi_i(s_i^{(t)})$ is the feature vector of agent $i$ for state $s_i^{(t)}$. The server then collects the local parameters for aggregation, exactly as in the FL framework.

The negative gradient step of \eqref{meanpatherror} is given by:
\begin{equation}
    g_{i}(\theta^{(t)},o_{i,t})=\left(r_{i}^{(t)}+\gamma \phi_i^T(s_{i}^{(t+1)})\theta^{(t)}-\phi_i^T(s_{i}^{(t)})\theta^{(t)}\right)\phi_i(s_{i}^{(t)}).\label{exp:markov}
\end{equation}
Note that in~\eqref{exp:markov}, the gradient $g_{i}(\theta^{(t)},o_{i,t})$ is \textit{implicitly integrated with Markovian noise} since the states are sampled from the underlying Markov chain.

\begin{figure}[htbp]
    \centering
    \subfigure[Noiseless setting]{
        \includegraphics[width=0.45\textwidth]{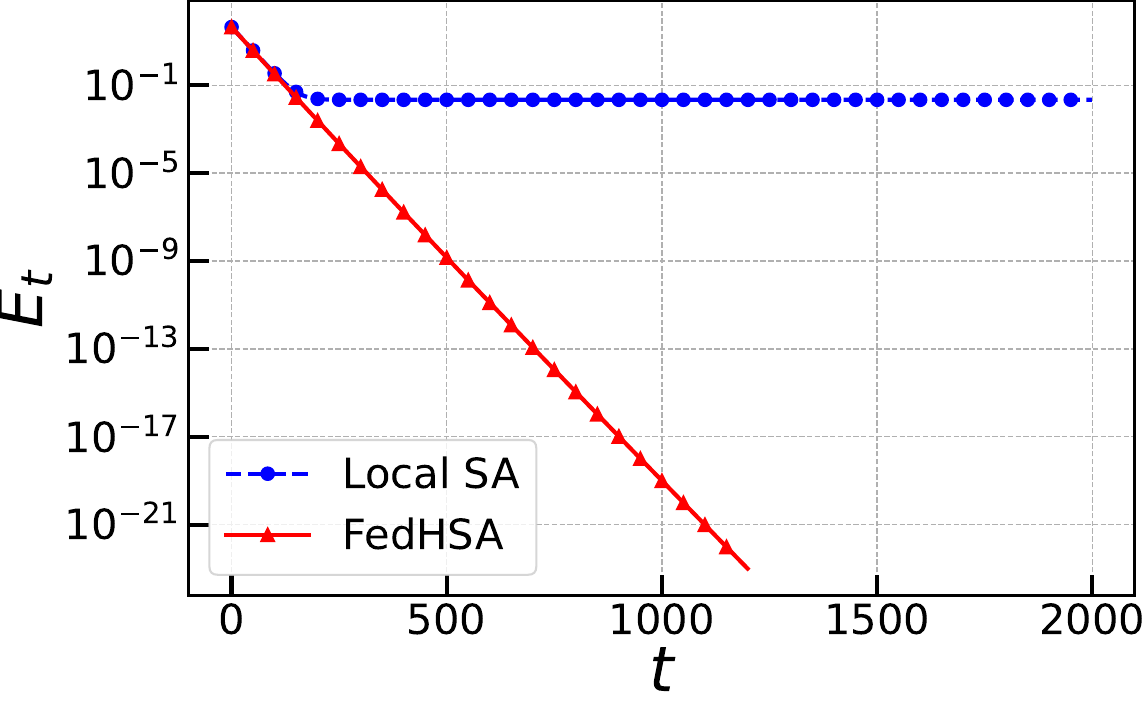}
        \label{fig:noiseless_TD}
    }
    \hfill
    \subfigure[Additive Markovian noise setting]{
        \includegraphics[width=0.45\textwidth]{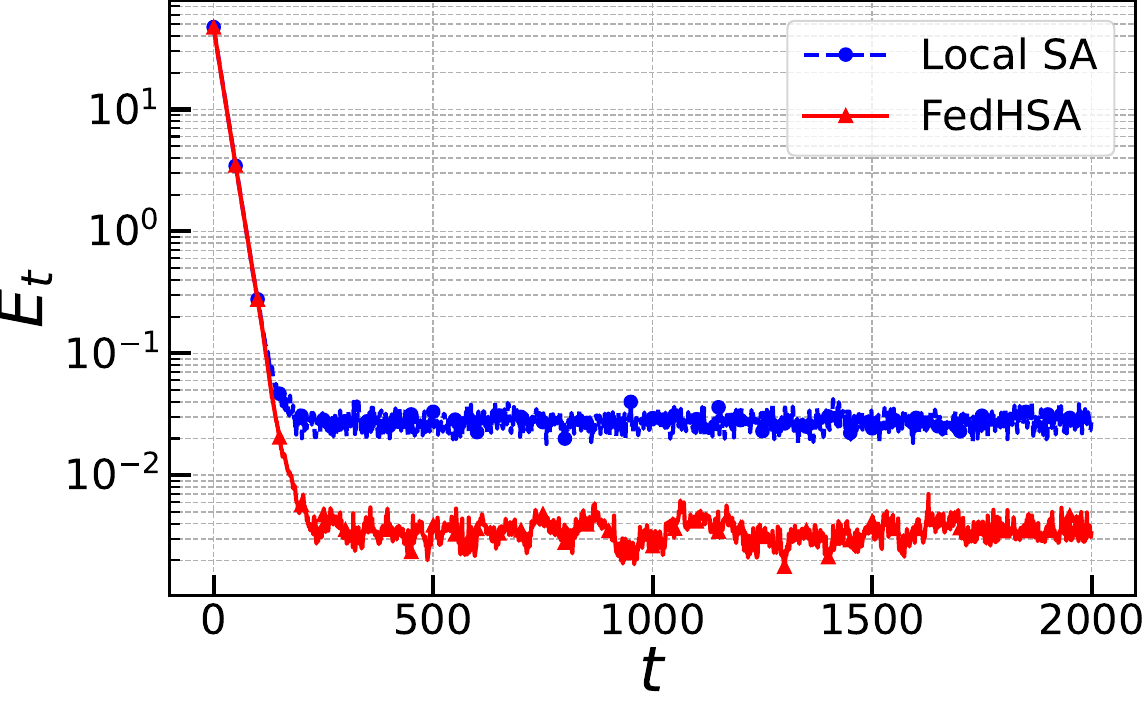}
        \label{fig:noisy_TD}
    }
    \caption{Comparison between \texttt{Local SA} and \texttt{FedHSA}}
    \label{fig:comp_vanilla_TD}
\end{figure}

\begin{figure}[hbtp]
\centering
\includegraphics[width=0.45\textwidth]{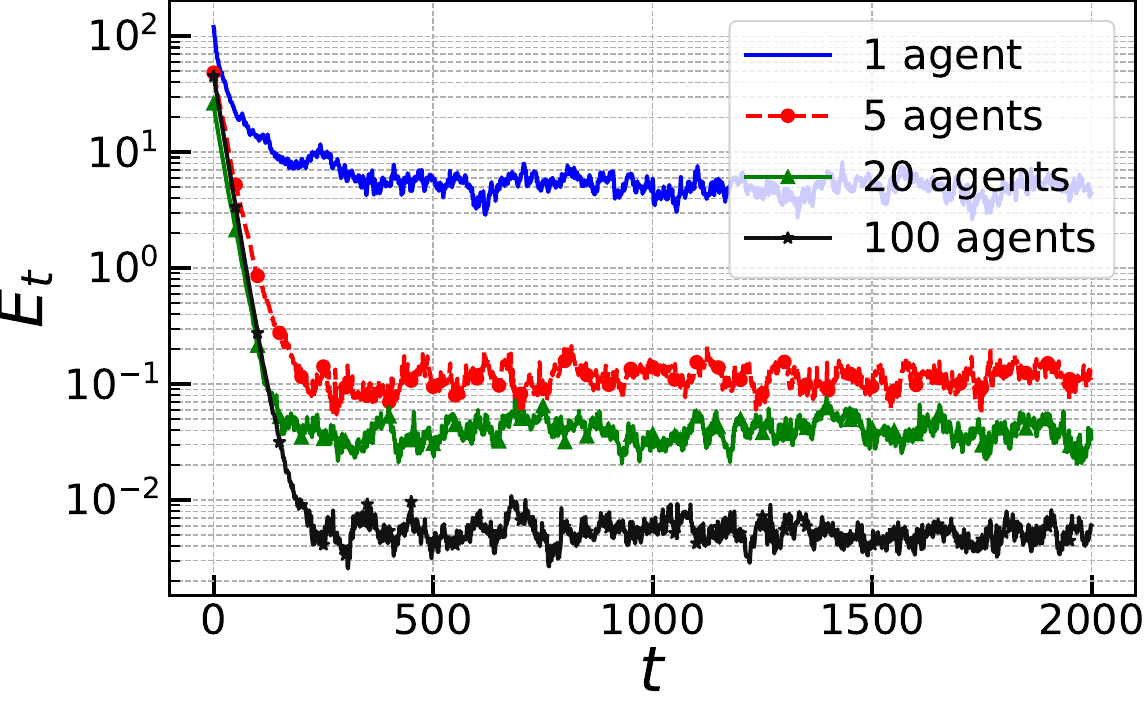}
\caption{Comparison between different numbers of agents for the \texttt{FedHSA} algorithm with the federated TD setting}
\label{fig::TD_agents}
\end{figure}

We also provide the expression for the expected negative gradient step, which one can interpret as the \textit{noiseless gradient}:
\begin{equation}
    \bar g_i(\theta^{(t)})=\Phi_i^T D_i(T_i\Phi_i\theta^{(t)}-\Phi_i\theta^{(t)}),
\end{equation}
where $D_i=\text{Diag}(\pi_i)$, $\pi_i$ is the stationary distribution of $\cP_i$, and $T_i$ is the Bellman operator for agent $i$. We refer the reader to~\cite{bhandari_finite} for more details.

Our federated SA setup~\eqref{prob::average_operator} precisely captures this setting. Specifically, $\bar G_i$ corresponds to the noiseless gradient operator 
$\bar g_i$ for agent $i\in[M]$, and $G_i$ corresponds to the noisy operator $g_i$ incorporated with Markovian noise.

To validate message (i), we compare our \texttt{FedHSA} algorithm with \texttt{Local SA} where there is no correction term in the local update of each agent. We consider a federated TD learning setting with LFA involving $M=200$ agents, with each agent performing $H=10$ local steps. Each MRP $\cM_i$ has $S=100$ states, and the rank of each feature matrix $\Phi_i$ is $d=50$.

Figure~\ref{fig:comp_vanilla_TD} clearly demonstrates that 	\texttt{FedHSA} converges exponentially fast to $\theta^\star$ in the noiseless case and consistently outperforms \texttt{Local SA} both in the noiseless case and the one with Markovian noise.

We validate message (ii) by comparing \texttt{FedHSA} for different numbers of agents $M=1,5,20,100$. Figure~\ref{fig::TD_agents} shows improved error bounds with an increase in the number of agents, substantiating the linear speedup effect.

\subsection{Federated Finite-Sum Minimization Problem with Quadratic Loss}

\begin{figure}[htbp]
    \centering
    \subfigure[Noiseless setting]{
        \includegraphics[width=0.45\textwidth]{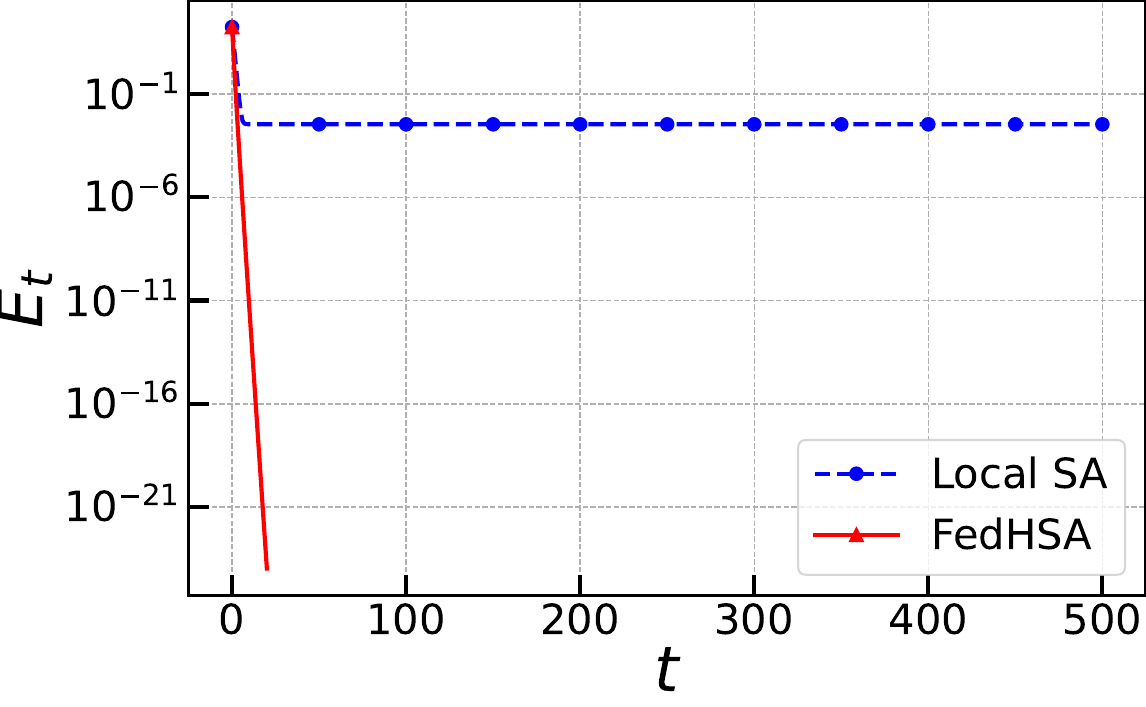}
        \label{fig:noiseless_finite}
    }
    \hfill
    \subfigure[Additive Markovian noise setting]{
        \includegraphics[width=0.45\textwidth]{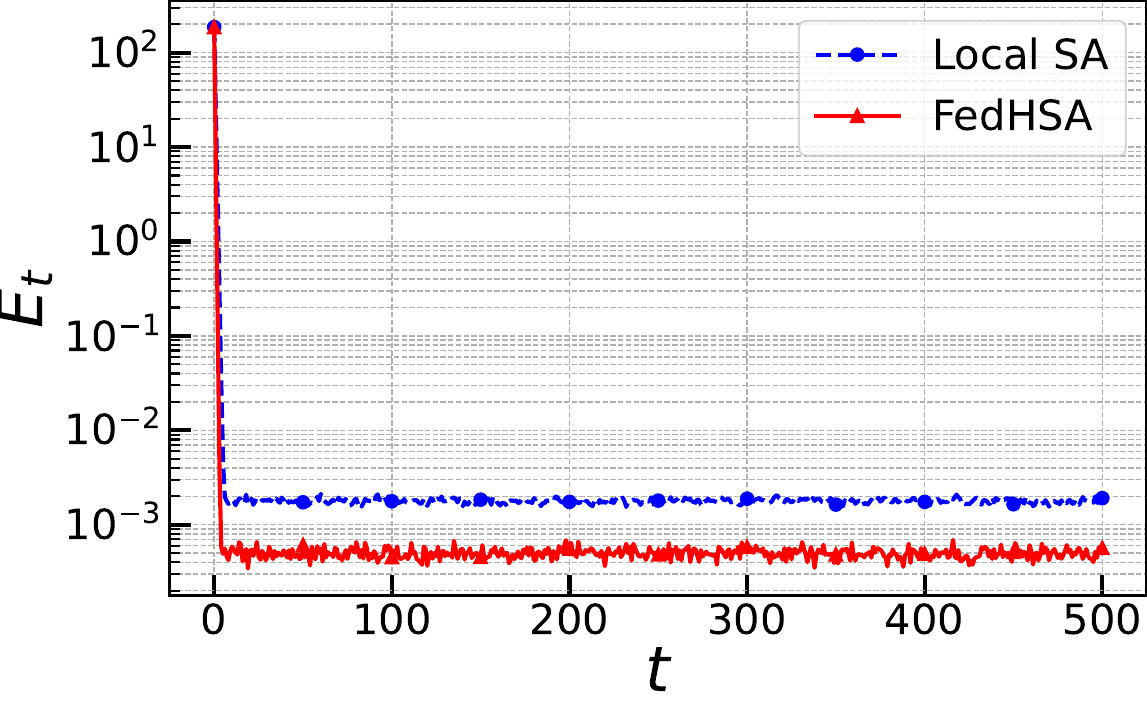}
        \label{fig:noisy_finite}
    }
    \caption{Comparison between \texttt{Local SA} and \texttt{FedHSA}}
    \label{fig:comp_vanilla_finite}
\end{figure}

\begin{figure}[hbtp]
\centering
\includegraphics[width=0.45\textwidth]{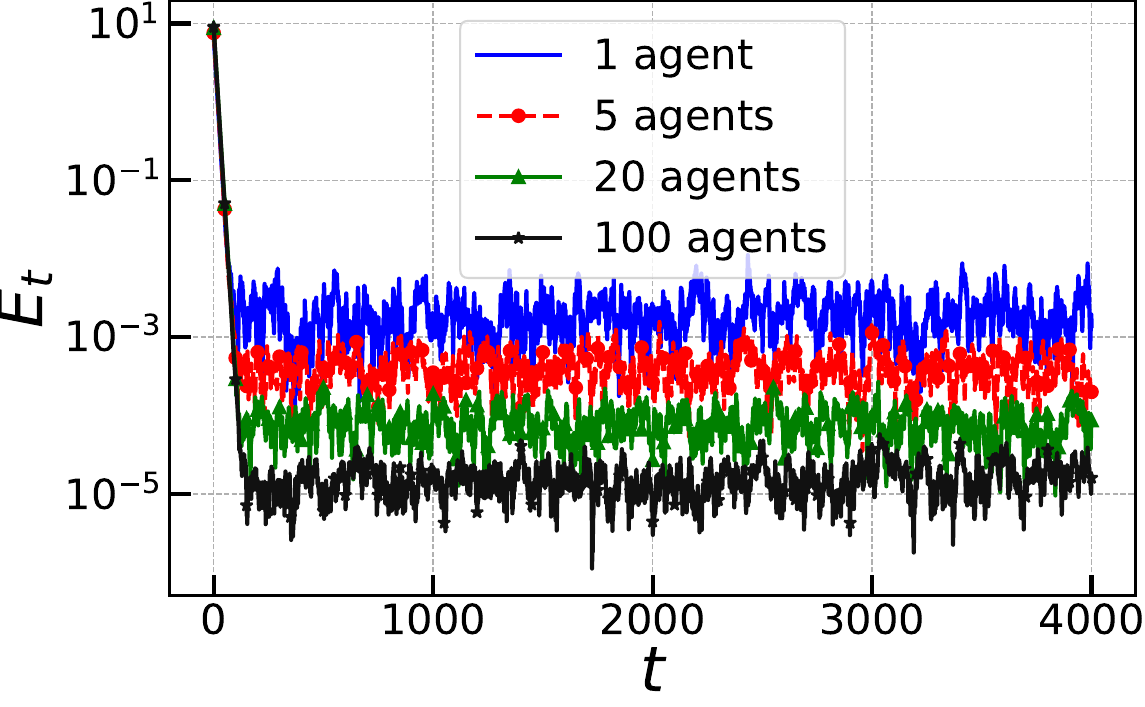}
\caption{Comparison between different numbers of agents for the \texttt{FedHSA} algorithm with the finite-sum setting}
\label{fig::comp_finite}
\end{figure}
In section~\ref{exp::FQLM}, Markovian noise was introduced by directly adding noise to the true local operator. Here, we adopt an alternative approach to incorporating Markovian noise: sampling data points from a Markov chain in a finite-sum setting. The detailed problem formulation is presented as follows.

We consider the same federated minimization problem as in~\eqref{problem::quad}, but with a different structure for the local loss functions $f_i$'s. Specifically, for each $i\in[M]$, we have
\begin{equation}
    f_i(\theta) = \frac{1}{N} \sum_{j=1}^N f_{i,j}(\theta) = \frac{1}{N} \sum_{j=1}^N \paren{\frac{1}{2}\theta^TA_{i,j}\theta - b_{i,j}^T\theta + c_{i,j}}.
\end{equation}
Here, $A_{i,j} \in \R^{d \times d}$ is a positive definite matrix, $b_{i,j} \in \R^d$ is a $d$-dimensional vector, and $c_{i,j} \in \R$ is a scalar for $j=1,\cdots, N$. In essence, each local loss function is the average of $N$ quadratic loss functions, giving rise to the name ``finite-sum setting."

We now describe the incorporation of Markovian noise. For intuition, consider the case where the noise is i.i.d., meaning each agent \( i \in [M] \) selects one quadratic loss function \( f_{i,j} \) \textit{uniformly at random} and computes its gradient to determine the descent direction. Under Markovian noise, however, instead of selecting data samples uniformly at random, agent \( i \) selects \( f_{i,j} \) based on a discrete Markov chain \( \cM_i \). The states of the Markov chain \( \cM_i \) correspond to the indices \( [N] \) of the loss functions \( \{f_{i,j}\}_{j=1}^N \), and its stationary distribution \( \mu_i \) is the uniform distribution over \( [N] \). This ensures that in the limit, \( G_i \) is an unbiased estimator of the true operator \( \bar G_i \), as \( \bar G_i(\cdot) = \E_{o \sim \mu_i} \bracket{G_i(\cdot, o)} \) holds only when \( \mu_i \) is uniform over \( [N] \).

We set up the experiment with $M=200$, $N=2$, $d=200$ and $\eta=0.01$. Figure~\ref{fig:comp_vanilla_finite} validates message (i) exactly as in the previous experiments. Figure~\ref{fig::comp_finite} compares the performance of \texttt{FedHSA} under varying numbers of agents, specifically $M=1, 5, 20, 100$, with $N=10$. Consistent with the observations from the previous experiments, the error floor decreases as the number of participating agents increases. This empirically validates our theoretical result that $d_T \leq \tilde{\cO}(1/(MHT))$, demonstrating the benefits of collaborative speedup in reducing error.

\section{Conclusion}
We studied a general federated stochastic approximation problem subject to heterogeneity in the agents' local operators, and temporally correlated Markovian data at each agent. For this setting, we first showed that standard local SA algorithms may fail to converge to the right point. Based on this observation, we developed a new class of heterogeneity-aware federated SA algorithms that simultaneously (i) guarantee convergence to the right point while matching centralized rates, and (ii) enjoy linear speedups in sample-complexity that are not degraded by the presence of an additive heterogeneity-induced bias term. Our results subsume the standard federated optimization setting and have implications for a broad class of SA-based RL algorithms. There are various interesting questions that we plan to explore in the future:

\begin{enumerate}
\item One natural question is to ask how much communication is necessarily needed to achieve optimal speedups in sample-complexity? 

\item At the moment, our bounds scale with the mixing time of the slowest mixing Markov chain among the agents. Can this bound be further refined? 

\item Can the techniques developed in this paper be employed for more complex SA-based RL algorithms with nonlinear function approximators? The extension to two-time-scale algorithms is also of interest. 
\end{enumerate}

\bibliographystyle{unsrt}
\bibliography{refs.bib}

\newpage
\appendix
\onecolumn
\section{Basic Results}
\label{app:basic}
In this section, we will compile some known facts and derive certain preliminary results that will play a key role in our subsequent analysis of \texttt{FedHSA}. Before we proceed, for the reader's convenience, we assemble all relevant notation in Table~\ref{tb:notation}.  

\begin{table}[ht]
	\caption{~Notation}
	\label{tb:notation}
	\renewcommand{\arraystretch}{1.2}
	\centering\begin{tabular}{cl}
		\toprule
		Notation                                           & Definition                                                                  \\
		\midrule
		$[N]$                                   & The set of $N$ numbers from $1$ to $N$           \\
		$\mathcal{M}_i$                 & Markov chain of agent $i$                                                   \\
		$\mc{S}, \R^d$                                    & Common finite state space of agents' Markov chains,  and $d$-dimensional real space                              \\
        $s$ & Instance of state\\
		$o_{i,t},o_{i,\ell}^{(t)}$                                          & Observation of agent $i$ at time-step $t$, and observation of agent $i$ in local iteration $\ell$ of round $t$                              \\
		$\mu_i$                                      & stationary distribution of $\cM_i$                                \\
		$d_{TV}$                         & Total variation distance     \\
		$\bar\theta^{(t)},\theta_{i,\ell}^{(t)}$                                     & Global parameter at round $t$ and local parameter of agent $i$ in local iteration $\ell$ of round $t$                                \\

		$G_i,\bar G_i$                         & Noisy and true local operators of agent $i$                                              \\
        $G,\bar G$                         & Noisy and true global operators                                              \\
        $\theta^\star,\theta_i^\star$ & Root of global operator $\bar{G}(\theta)$, and root of local operator $\bar{G}_i(\theta)$ of agent $i$\\ 
		$\sigma_i,\sigma$                          & Parameters capturing the effect of noise, with $\sigma:=\max\{\sigma_1,\cdots,\sigma_{M}\}$                                     \\
		$\rho_i,\rho$ & Parameters capturing the mixing-time property, with $\rho:=\max\{\rho_1,\cdots,\rho_{M}\}$\\
        $L,\mu$ & Lipschitz constant and strong-monotonicity constant\\
        $\eta,\alpha_g,\alpha$ & Local step-size, global step-size, and effective step-size\\
        $\cO,\tilde{\cO}$ & Big-O notation hiding universal constants, and big-O notation hiding poly-logarithmic terms\\
        $\tau,\bar\tau$ & Functions capturing the mixing time and the slowest mixing time defined as $\bar\tau:=\tau(\alpha^2)$\\
		$\mathcal{F}_\ell^{(t)}$                                    & Filtration containing all randomness prior to round $t$ and local iteration $\ell$ across all agents               \\
		\bottomrule
	\end{tabular}
\end{table}

\begin{itemize}
    \item Given $m$ vectors $(x_1, \cdots, x_m)\in\R^d\times\cdots\times\R^d$, the following holds true by a simple application of Jensen's inequality:
    \begin{equation}
        \norm{\sum_{i=1}^mx_i}^2\leq m\sum_{i=1}^m\norm{x_i}^2.
    \end{equation}
    \item Given $m$ vectors $(x_1, \cdots, x_m)\in\R^d\times\cdots\times\R^d$, the following is a generalization of the triangle inequality:
    \begin{equation}
        \norm{\sum_{i=1}^mx_i}\leq\sum_{i=1}^m\norm{x_i}.
    \end{equation}
    \item Given any two vectors $(x,y)\in\R^d\times\R^d$, the following holds for any $\xi>0$:
    \begin{equation}
    \begin{aligned}
        \iprod{x}{y}&\leq\frac{\xi}{2}\norm{x}^2 +\frac{1}{2\xi}\norm{y}^2.\label{eqn::cs}
    \end{aligned}
    \end{equation}
\end{itemize}
For simplicity, we also omit the observation $o_{i,\ell}^{(t)}$ in the parameters of the noisy operator $G_i$ in most cases.

\textbf{Proof of Proposition~\ref{prop:hetbias}}
We first provide the proof of Proposition~\ref{prop:hetbias}. Before we start, recall from Section~\ref{sec::problem_formulation} that the local update formula is
\begin{equation}
\theta_{i,\ell+1}^{(t)} =  \theta_{i,\ell}^{(t)} + \eta \bar{G}_i(\theta_{i,\ell}^{(t)}), \ell = 0, 1, \ldots, H-1,
\end{equation}
where $\bar G_i(\theta)=\bar A_i\theta-\bar b_i$ and $H=2$. Also recall that the aggregation formula at the server is 
\begin{equation}
    \bar{\theta}^{(t+1)} = \frac{1}{M} \sum_{i\in [M]} \theta^{(t)}_{i, H}.\label{eqn::aggr_prop1}
\end{equation}
\begin{proof}
By definition of $\bar G_i$, we can write
\begin{equation}
\begin{aligned}
    \theta_{i,\ell+1}^{(t)} &=  \theta_{i,\ell}^{(t)} + \eta \paren{\bar A_i\theta_{i,\ell}^{(t)}-\bar b_i}\\
    &=\paren{I+\eta\bar A_i}\theta_{i,\ell}^{(t)}-\eta \bar b_i\\
    &= \paren{I+\eta\bar A_i}\paren{\theta_{i,\ell}^{(t)}-\theta^\star}+\theta^\star+\eta \paren{\bar A_i\theta^\star-\bar b_i},
\end{aligned}
\end{equation}
which yields
\begin{equation}
    \theta_{i,\ell+1}^{(t)}-\theta^\star=\paren{I+\eta\bar A_i}\paren{\theta_{i,\ell}^{(t)}-\theta^\star}+\eta \paren{\bar A_i\theta^\star-\bar b_i}.\label{eqn::onestep_prop1}
\end{equation}

Iterating Eq.~\eqref{eqn::onestep_prop1} for $H=2$ steps, we obtain
\begin{equation}
    \theta_{i,2}^{(t)}-\theta^\star=\paren{I+\eta\bar A_i}^2\paren{\theta_{i,0}^{(t)}-\theta^\star}+\eta \paren{I+\paren{I+\eta\bar A_i}}\paren{\bar A_i\theta^\star-\bar b_i}.
\end{equation}

Therefore, by Eq.~\eqref{eqn::aggr_prop1} and the fact that $\theta_{i,0}^{(t)}=\bar\theta^{(t)}$ we have
\begin{equation}
    \bar\theta^{(t+1)}-\theta^\star=\paren{\frac{1}{M}\sum_{i=1}^M\paren{I+\eta\bar A_i}^2}\paren{\bar\theta^{(t)}-\theta^\star}+\eta\paren{\frac{1}{M}\sum_{i=1}^M\paren{2I+\eta\bar A_i}\paren{\bar A_i\theta^\star-\bar b_i}}.\label{eqn::propeq1}
\end{equation}
Using the fact that $\bar A\theta^\star -\bar b=0$ and $\bar b_i=\bar A_i\theta_i^\star$, we have
\begin{equation}
    \frac{1}{M}\sum_{i=1}^M\paren{2I+\eta\bar A_i}\paren{\bar A_i\theta^\star-\bar b_i}=\frac{\eta}{M}\sum_{i=1}^M\paren{\bar {A}_i^2\paren{\theta^\star-\theta_i^\star}}.\label{eqn::propeq2}
\end{equation}
Also, we have
\begin{equation}
\begin{aligned}
    \frac{1}{M}\sum_{i=1}^M\paren{I+\eta\bar A_i}^2&=\frac{1}{M}\sum_{i=1}^M\paren{I+2\eta\bar A_i+\eta^2\bar A_i^2}\\
    &=I+2\eta\bar A+\eta^2\bar A',\label{eqn::propeq3}
\end{aligned}
\end{equation}
where recall that $\bar{A} = (1/M) \sum_{i \in [M]} \bar{A}_i$ and $\bar{A}'=(1/M) \sum_{i \in [M]} \bar{A}^2_i$. 

Now, defining $e_t:=\bar\theta^{(t)}-\theta^\star$, and combining Eq.~\eqref{eqn::propeq1} - Eq.~\eqref{eqn::propeq3}, we obtain
\begin{equation}
    e_{t+1}=Fe_t+\bar v, \quad\forall t\geq 0,\label{eqn::LS}
\end{equation}
where $F=I+2\eta\bar A+\eta^2\bar A'$ and $\bar v=({\eta^2}/{M})\sum_{i=1}^M\paren{\bar {A}_i^2\paren{\theta^\star-\theta_i^\star}}$. We then conclude that Eq.~\eqref{eqn::LS} represents a linear dynamical system in discrete time with state transition matrix $F$. 

For stability, we need $F$ to be Schur-stable. Suppose $\eta$ is chosen to ensure that $F$ is Schur-stable. Now, applying Eq.\eqref{eqn::LS} iteratively yields
\begin{equation}
    e_t=F^te_0+\paren{\sum_{k=0}^{t-1}F^k}\bar v.
\end{equation}
Taking limits on both sides and using the fact that $F$ is Schur-stable, we obtain
\begin{equation}
    \lim_{t\to\infty} F^t e_0 = 0, \quad \lim_{t\to\infty}\paren{\sum_{k=0}^{t-1}F^k}=\paren{I-F}^{-1}=\frac{-1}{\eta}\paren{2\bar A+\eta\bar A'}^{-1}.
\end{equation}
Therefore, 
\begin{equation}
    \lim_{t\to\infty}e_t=\frac{\eta}{M}\paren{2\bar A+\eta\bar A'}^{-1}\sum_{i=1}^M\paren{\bar {A}_i^2\paren{\theta_i^\star-\theta^\star}}.
\end{equation}
We have thus proved that the error $e_t$ is non-vanishing.
\end{proof}

The following corollary will come in handy at several points in our analysis. 
\begin{corollary}\label{corr:diff}
    Given Assumption~\ref{ass::smoothness}, for any given $\theta,o$, the following holds for all $i\in[M]$:
    \begin{equation}
    \begin{aligned}
        \norm{G_i(\theta, o)-\bar G_i(\theta)}\leq \bigo{L}\paren{\norm{\theta-\theta^\star}+\sigma}\\
        \norm{G_i(\theta, o)}\leq \bigo{L}\paren{\norm{\theta-\theta^\star}+\sigma}\\
        \norm{\bar G_i(\theta)}\leq \bigo{L}\paren{\norm{\theta-\theta^\star}+\sigma}
    \end{aligned}
    \end{equation}
\end{corollary}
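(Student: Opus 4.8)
The plan is to establish the three inequalities in a bottom-up order, since the middle bound readily implies the other two. First I would derive the bound on $\norm{G_i(\theta, o)}$ directly from the uniform-boundedness part of Assumption~\ref{ass::smoothness}, namely $\norm{G_i(\theta, o)} \leq L(\norm{\theta} + \sigma_i)$. The only maneuver needed is to re-center the norm around $\theta^\star$: by the triangle inequality $\norm{\theta} \leq \norm{\theta - \theta^\star} + \norm{\theta^\star}$, and by the definition $\sigma := \max\{\{\sigma_i\}_{i\in[M]}, \norm{\theta^\star}, 1\}$ we have simultaneously $\norm{\theta^\star} \leq \sigma$ and $\sigma_i \leq \sigma$. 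Substituting then gives $\norm{G_i(\theta, o)} \leq L\paren{\norm{\theta - \theta^\star} + 2\sigma} \leq 2L\paren{\norm{\theta - \theta^\star} + \sigma}$, which is the second claimed bound with the factor $2$ folded into the $\bigo{L}$ notation.

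For the third bound, I would pass from the noisy operator to the true one through the identity $\bar G_i(\theta) = \E_{o \sim \mu_i}\bracket{G_i(\theta, o)}$. Since the Euclidean norm is convex, Jensen's inequality yields $\norm{\bar G_i(\theta)} \leq \E_{o \sim \mu_i}\bracket{\norm{G_i(\theta, o)}}$. Because the per-sample bound just established holds for \emph{every} realization $o$, the right-hand side is again dominated by $\bigo{L}\paren{\norm{\theta - \theta^\star} + \sigma}$, giving the third inequality. The first bound then follows immediately from the triangle inequality, $\norm{G_i(\theta, o) - \bar G_i(\theta)} \leq \norm{G_i(\theta, o)} + \norm{\bar G_i(\theta)}$, by invoking the two bounds already proved.

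There is no genuine obstacle here; the result is essentially a routine re-centering of Assumption~\ref{ass::smoothness}, and the whole argument is a short chain of triangle inequalities plus one application of Jensen. The only point requiring care is the bookkeeping with the constant $\sigma$: one must use that $\sigma$ simultaneously dominates each $\sigma_i$, the quantity $\norm{\theta^\star}$, and $1$, so that all the affine-growth constants can be collapsed into a single clean expression of the form $\bigo{L}\paren{\norm{\theta - \theta^\star} + \sigma}$. This uniformity across agents — and the fact that the bound is now phrased in terms of the \emph{centered} distance $\norm{\theta - \theta^\star}$ rather than $\norm{\theta}$ — is precisely what makes the corollary convenient to apply repeatedly in the subsequent one-step recursion and drift analysis.
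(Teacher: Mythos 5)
Your proof is correct and follows essentially the same route as the paper's: a chain of triangle inequalities combined with the uniform bound in Eq.~\eqref{eqn::uniform_bound}, re-centering $\norm{\theta}$ around $\theta^\star$, and absorbing $\norm{\theta^\star}$ and $\sigma_i$ into $\sigma$. The only cosmetic differences are that you prove the bounds in the reverse order (deriving the difference bound last rather than first) and that you make explicit the Jensen step $\norm{\bar G_i(\theta)} \leq \E_{o\sim\mu_i}\bracket{\norm{G_i(\theta,o)}}$, which the paper leaves implicit.
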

\begin{proof}
\begin{equation}
\begin{aligned}
    \norm{G_i(\theta, o)-\bar G_i(\theta)}&\leq\norm{G_i(\theta, o)}+\norm{\bar G_i(\theta)}\\
    &\leq 2L\paren{\norm{\theta}+\sigma}\\
    &=2L\paren{\norm{\theta-\theta^\star}+\norm{\theta^\star}+\sigma}\\
    &\leq 2L\paren{\norm{\theta-\theta^\star}+2\sigma}.
\end{aligned}
\end{equation}
Here, the second inequality follows from~\eqref{eqn::uniform_bound}, and the last inequality uses the definition of $\sigma$. The rest two bounds can be obtained similarly.
\end{proof}

We also state upfront the following lemma that bounds the drift term $\norm{\theta_{i,\ell}^{(t)}-\bar \theta^{(t)}}^2$. Notably, this result holds deterministically, i.e., it applies to both i.i.d. and Markovian sampling.  
\begin{lemma}\label{lem::drift}
    Suppose Assumptions~\ref{ass::smoothness} and~\ref{ass::irreducible} hold. By selecting $\eta\leq 1/(LH)$, the following is true for \texttt{FedHSA}:
    \begin{equation}
        \norm{\theta_{i,\ell}^{(t)}-\bar \theta^{(t)}}^2\leq \cO(\eta^2 L^2H^2)\left(\norm{\bar\theta^{(t)}-\theta^\star}^2+\sigma^2\right).
    \end{equation}
\end{lemma}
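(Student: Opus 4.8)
The plan is to bound the drift $\norm{\theta_{i,\ell}^{(t)}-\bar\theta^{(t)}}$ by unrolling the local recursion and invoking a discrete Gr\"onwall-type argument. Recall the \texttt{FedHSA} update~\eqref{eqn::FedHSA_update}: writing $\delta_{i,\ell}^{(t)}:=\theta_{i,\ell}^{(t)}-\bar\theta^{(t)}$ and noting $\delta_{i,0}^{(t)}=0$, we have
\begin{equation}
\delta_{i,\ell+1}^{(t)}=\delta_{i,\ell}^{(t)}+\eta\paren{G_i(\theta_{i,\ell}^{(t)})+G(\bar\theta^{(t)})-G_i(\bar\theta^{(t)})}.\nonumber
\end{equation}
The key observation is to split the increment so that the $\theta_{i,\ell}^{(t)}$-dependence is isolated: add and subtract $G_i(\bar\theta^{(t)})$ (evaluated at the same observation $o_{i,0}^{(t)}$, which one should track carefully) to write the bracket as $\bracket{G_i(\theta_{i,\ell}^{(t)})-G_i(\bar\theta^{(t)})}+G(\bar\theta^{(t)})$. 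By Assumption~\ref{ass::smoothness}, the first piece has norm at most $L\norm{\delta_{i,\ell}^{(t)}}$, giving a term that feeds back into the recursion and drives the Gr\"onwall growth; the second piece $G(\bar\theta^{(t)})$ is a constant within the round (independent of $\ell$) and can be bounded by Corollary~\ref{corr:diff} as $\cO(L)(\norm{\bar\theta^{(t)}-\theta^\star}+\sigma)$, since $G$ is an average of the $G_i$.

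First I would take norms and apply the triangle inequality to obtain the scalar recursion
\begin{equation}
\norm{\delta_{i,\ell+1}^{(t)}}\leq(1+\eta L)\norm{\delta_{i,\ell}^{(t)}}+\eta\,\cO(L)\paren{\norm{\bar\theta^{(t)}-\theta^\star}+\sigma}.\nonumber
\end{equation}
Iterating this affine recursion from $\ell=0$ (where $\delta_{i,0}^{(t)}=0$) yields
\begin{equation}
\norm{\delta_{i,\ell}^{(t)}}\leq \eta\,\cO(L)\paren{\norm{\bar\theta^{(t)}-\theta^\star}+\sigma}\sum_{k=0}^{\ell-1}(1+\eta L)^k.\nonumber
\end{equation}
Since $\ell\leq H$ and $\eta\leq 1/(LH)$, each factor satisfies $(1+\eta L)^k\leq(1+1/H)^H\leq e\leq 3$, so the geometric sum is $\cO(H)$. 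This gives the clean bound $\norm{\delta_{i,\ell}^{(t)}}\leq \cO(\eta L H)(\norm{\bar\theta^{(t)}-\theta^\star}+\sigma)$. Squaring and using $(a+b)^2\leq 2a^2+2b^2$ produces the stated $\cO(\eta^2 L^2 H^2)(\norm{\bar\theta^{(t)}-\theta^\star}^2+\sigma^2)$.

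The main obstacle, and the reason the step-size restriction $\eta\leq 1/(LH)$ is exactly the right one, is controlling the compounding factor $(1+\eta L)^\ell$ over the $H$ local steps: without the bound $\eta H L\leq 1$, the accumulated Gr\"onwall factor could blow up exponentially in $H$ rather than staying $\cO(1)$, destroying the quadratic-in-$H$ scaling. A secondary subtlety is that the correction term in \texttt{FedHSA} involves operators evaluated at the \emph{round-start} point $\bar\theta^{(t)}$ and observation $o_{i,0}^{(t)}$, not at the current local iterate; the decomposition must be arranged so that only the genuinely $\ell$-dependent piece $G_i(\theta_{i,\ell}^{(t)})-G_i(\bar\theta^{(t)})$ contributes the feedback term, while the heterogeneity-correction $G(\bar\theta^{(t)})-G_i(\bar\theta^{(t)})$ is treated as a fixed forcing term bounded via Corollary~\ref{corr:diff}. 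Importantly, this entire argument is pathwise and makes no use of the distribution of the observations, which is precisely why the conclusion holds deterministically for both the i.i.d. and Markovian sampling models.
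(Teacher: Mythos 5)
Your overall strategy is the same as the paper's: derive the per-step affine recursion $\norm{\delta_{i,\ell+1}^{(t)}}\leq(1+\eta L)\norm{\delta_{i,\ell}^{(t)}}+\cO(\eta L)\paren{\norm{\bar\theta^{(t)}-\theta^\star}+\sigma}$, unroll it from $\delta_{i,0}^{(t)}=0$, use $\eta\leq 1/(LH)$ to keep $(1+\eta L)^H\leq e$, and square. However, there is a genuine gap in how you justify the feedback term. The \texttt{FedHSA} bracket is $G_i(\theta_{i,\ell}^{(t)},o_{i,\ell}^{(t)})+G(\bar\theta^{(t)})-G_i(\bar\theta^{(t)},o_{i,0}^{(t)})$: the operator at the current iterate is evaluated at the \emph{current} observation $o_{i,\ell}^{(t)}$, while the correction operator at $\bar\theta^{(t)}$ is evaluated at the \emph{round-start} observation $o_{i,0}^{(t)}$. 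Your grouping $\bracket{G_i(\theta_{i,\ell}^{(t)})-G_i(\bar\theta^{(t)})}+G(\bar\theta^{(t)})$ therefore pairs two evaluations at \emph{different} observations, and Assumption~\ref{ass::smoothness} --- a Lipschitz bound in $\theta$ for a \emph{fixed} observation $o$ --- does not yield $\norm{G_i(\theta_{i,\ell}^{(t)},o_{i,\ell}^{(t)})-G_i(\bar\theta^{(t)},o_{i,0}^{(t)})}\leq L\norm{\delta_{i,\ell}^{(t)}}$. You flag the observation bookkeeping twice, but never actually resolve it: ``adding and subtracting $G_i(\bar\theta^{(t)})$ at $o_{i,0}^{(t)}$'' is a no-op, since that exact term is already present in the update with a minus sign, and it does not produce a common-observation difference.

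The repair --- and what the paper does --- is to add and subtract $G_i(\bar\theta^{(t)},o_{i,\ell}^{(t)})$, i.e., the round-start parameter evaluated at the \emph{current} observation. The bracket then splits as $\paren{G_i(\theta_{i,\ell}^{(t)},o_{i,\ell}^{(t)})-G_i(\bar\theta^{(t)},o_{i,\ell}^{(t)})}+\paren{G(\bar\theta^{(t)})+G_i(\bar\theta^{(t)},o_{i,\ell}^{(t)})-G_i(\bar\theta^{(t)},o_{i,0}^{(t)})}$: the first difference is at a common observation, so Assumption~\ref{ass::smoothness} legitimately gives $L\norm{\delta_{i,\ell}^{(t)}}$, while the three remaining operator evaluations are each bounded by $\cO(L)\paren{\norm{\bar\theta^{(t)}-\theta^\star}+\sigma}$ via Corollary~\ref{corr:diff}, so the forcing term has exactly the order you claimed. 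With this one-line correction your recursion is valid, and the rest of your argument (the unrolling, the role of the step-size condition in keeping the Gr\"onwall factor $\cO(1)$, the squaring, and the remark that the bound is pathwise and hence sampling-model-agnostic) goes through exactly as in the paper.
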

\textbf{Proof of Lemma~\ref{lem::drift}.}
\begin{proof}
A high-level intuition for proving this lemma is to use the update rule of \texttt{FedHSA} in tandem with Corollary~\ref{corr:diff} to obtain a recursion for the term $\norm{\theta_{i,\ell}^{(t)}-\bar \theta^{(t)}}$. With that aim in mind, from~\eqref{eqn::FedHSA_update} we can write
\begin{equation}
\begin{aligned}
    \norm{\theta_{i,\ell+1}^{(t)}-\bar \theta^{(t)}}&=\norm{\theta_{i,\ell}^{(t)}-\bar \theta^{(t)}+\eta\left(G_i(\theta_{i,\ell}^{(t)})+ G(\bar\theta^{(t)})- G_i(\bar\theta^{(t)})\right)}\\
    &=\norm{\theta_{i,\ell}^{(t)}-\bar \theta^{(t)}+\eta\left(G_i(\theta_{i,\ell}^{(t)},o_{i,\ell}^{(t)})- G_i(\bar\theta^{(t)},o_{i,\ell}^{(t)})\right)+ \eta G(\bar\theta^{(t)})+\eta G_i(\bar\theta^{(t)},o_{i,\ell}^{(t)})-\eta G_i(\bar\theta^{(t)})}\\
    &\leq \norm{\theta_{i,\ell}^{(t)}-\bar \theta^{(t)}}+\norm{\eta\left(G_i(\theta_{i,\ell}^{(t)},o_{i,\ell}^{(t)})- G_i(\bar\theta^{(t)},o_{i,\ell}^{(t)})\right)}+ \norm{\eta G(\bar\theta^{(t)})+\eta G_i(\bar\theta^{(t)},o_{i,\ell}^{(t)})-\eta G_i(\bar\theta^{(t)})}\\
    &\leq \norm{\theta_{i,\ell}^{(t)}-\bar \theta^{(t)}}+\eta L\norm{\theta_{i,\ell}^{(t)}-\bar \theta^{(t)}}+ \norm{\eta G(\bar\theta^{(t)})}+\norm{\eta G_i(\bar\theta^{(t)},o_{i,\ell}^{(t)})}+\norm{\eta G_i(\bar\theta^{(t)})}\\
    &\leq (1+\eta L)\norm{\theta_{i,\ell}^{(t)}-\bar \theta^{(t)}}+\cO(\eta L)\left(\norm{\bar\theta^{(t)}-\theta^\star}+\sigma\right),\label{bound::drift_iteration}
\end{aligned}
\end{equation}
where the in the second last inequality we used Assumption~\ref{ass::smoothness}, and the last inequality follows from Corollary~\ref{corr:diff}.

Applying~\eqref{bound::drift_iteration} iteratively, we obtain
\begin{equation}
\begin{aligned}
    \norm{\theta_{i,\ell}^{(t)}-\bar \theta^{(t)}}&\leq(1+\eta L)^\ell\norm{\theta_{i,0}^{(t)}-\bar \theta^{(t)}}+\sum_{j=0}^{\ell-1}(1+\eta L)^j\cO(\eta L)\left(\norm{\bar\theta^{(t)}-\theta^\star}+\sigma\right)\\
    &\leq H(1+\eta L)^H\cO(\eta L)\left(\norm{\bar\theta^{(t)}-\theta^\star}+\sigma\right)\\
    &\leq \cO(\eta LH)\left(\norm{\bar\theta^{(t)}-\theta^\star}+\sigma\right). \label{eqn::drift_last}
\end{aligned}
\end{equation}
Here, the second inequality holds because $\theta_{i,0}^{(t)}=\bar \theta^{(t)}$ and $\ell\leq H$. The last one follows by choosing $\eta\leq 1/(LH)$, and thus $H(1+\eta L)^H\cO(\eta L)\leq (1+1/H)^H\bigo{\eta LH}\leq \bigo{\eta LH}$, where in the last inequality we used $(1+1/x)^x\leq e, \forall x>0$. Squaring both sides of~\eqref{eqn::drift_last} yields the final form of Lemma~\ref{lem::drift}.
\end{proof}
Lemma~\ref{lem::drift} states that if we select $\alpha_g=1$ and $\alpha=H\alpha_g\eta$, we can then bound the drift term $\norm{\theta_{i,\ell}^{(t)}-\bar \theta^{(t)}}^2$ in the form of the squared norm of the distance of the global iterate to the global root plus the effect of noise, damped by the factor $\alpha^2$. 

\newpage

\section{Analysis of \texttt{FedHSA} under I.I.D. Sampling}
\label{app:iidproof}
We begin our analysis of the \texttt{FedHSA} algorithm's convergence by examining its performance in a simplified i.i.d. scenario. Specifically, we assume that for each agent $i \in [M]$, its observation $o_{i,t}$ at time step $t$ is drawn in an i.i.d. manner from its stationary distribution $\mu_i$. Our analysis of the i.i.d. setting will provide a foundational understanding of the finite-time behavior of \texttt{FedHSA}. In turn, it will offer a smoother transition to the more complex case of Markovian sampling.

We start by introducing the following lemma, which provides a recursion for the progress towards $\theta^\star$ made by \texttt{FedHSA} in each communication round. 
 
\begin{lemma}\label{lem::onestep_iid}
    Suppose Assumption~\ref{ass::smoothness} -~\ref{ass::irreducible} and~\ref{ass::independence} hold. Also, suppose the data samples of each agent $i\in[M]$ are drawn i.i.d. from the stationary distribution $\mu_i$ of its underlying Markov chain $\cM_i$. Then,  the following holds for \texttt{FedHSA} $\forall t \geq 0$:
    \begin{equation}
    \begin{aligned}
        \E\bracket{\norm{\bar\theta^{(t+1)}-\theta^\star}^2}&\leq \paren{1-\alpha\mu+\bigo{\alpha^2L^2}}\E\left[\norm{\bar\theta^{(t)}-\theta^\star}^2\right]+\bigo{\frac{\alpha^2L^2\sigma^2}{MH}}\\
        &\quad+\bigo{\frac{\alpha L^2}{\mu MH}+\frac{\alpha^2L^2}{MH}}\sum_{i=1}^M\sum_{\ell=0}^{H-1}\E\bracket{\norm{\theta_{i,\ell}^{(t)}-\bar \theta^{(t)}}^2}.
    \end{aligned}
    \end{equation}
\end{lemma}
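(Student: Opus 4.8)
The plan is to derive the one-step recursion by directly expanding $\norm{\bar\theta^{(t+1)}-\theta^\star}^2$ and carefully bounding the resulting cross and quadratic terms. The first, and arguably most useful, step is to obtain a clean closed form for the server update. Telescoping the local rule~\eqref{eqn::FedHSA_update} over $\ell=0,\ldots,H-1$ gives $\theta_{i,H}^{(t)}-\bar\theta^{(t)}=\eta\sum_{\ell=0}^{H-1}G_i(\theta_{i,\ell}^{(t)})+\eta H\paren{G(\bar\theta^{(t)})-G_i(\bar\theta^{(t)})}$, since the correction term is held fixed across local steps. Averaging over $i\in[M]$ and using $G(\bar\theta^{(t)})=(1/M)\sum_{j}G_j(\bar\theta^{(t)})$, the two correction terms cancel exactly in the aggregate, so that with $\alpha=H\eta\alpha_g$ and $\alpha_g=1$,
\begin{equation}
\bar\theta^{(t+1)}-\bar\theta^{(t)}=\frac{\alpha}{MH}\sum_{i=1}^M\sum_{\ell=0}^{H-1}G_i(\theta_{i,\ell}^{(t)}). \nonumber
\end{equation}
This cancellation is precisely what makes the aggregated direction track the average operator $\bar G$ rather than drifting toward the individual roots $\theta^\star_i$.

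Given this identity, I would expand $\norm{\bar\theta^{(t+1)}-\theta^\star}^2=\norm{\bar\theta^{(t)}-\theta^\star}^2+2\iprod{\bar\theta^{(t)}-\theta^\star}{\bar\theta^{(t+1)}-\bar\theta^{(t)}}+\norm{\bar\theta^{(t+1)}-\bar\theta^{(t)}}^2$ and handle the last two terms separately, splitting each into a ``mean'' contribution built from $\bar G_i(\theta_{i,\ell}^{(t)})$ and a ``noise'' contribution $\zeta_{i,\ell}^{(t)}:=G_i(\theta_{i,\ell}^{(t)})-\bar G_i(\theta_{i,\ell}^{(t)})$. For the cross term, I would first note that since $\bar\theta^{(t)}-\theta^\star$ is $\cF_\ell^{(t)}$-measurable and, in the i.i.d. regime, $\E[\zeta_{i,\ell}^{(t)}\mid\cF_\ell^{(t)}]=0$, the inner product against the noise part vanishes in expectation via the tower rule. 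For the mean part I would add and subtract $\bar G_i(\bar\theta^{(t)})$, use $(1/M)\sum_i\bar G_i(\bar\theta^{(t)})=\bar G(\bar\theta^{(t)})$ together with Assumption~\ref{ass::strong_monotonicity} and $\bar G(\theta^\star)=0$ to extract the contraction $-2\alpha\mu\norm{\bar\theta^{(t)}-\theta^\star}^2$, and then bound $\norm{\bar G_i(\theta_{i,\ell}^{(t)})-\bar G_i(\bar\theta^{(t)})}\leq L\norm{\theta_{i,\ell}^{(t)}-\bar\theta^{(t)}}$ (Assumption~\ref{ass::smoothness}) before applying Young's inequality~\eqref{eqn::cs} with parameter $\mu$. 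This nets the $-\alpha\mu\norm{\bar\theta^{(t)}-\theta^\star}^2$ contraction and the drift coefficient $\alpha L^2/(\mu MH)$.

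The quadratic term $\E[\norm{\bar\theta^{(t+1)}-\bar\theta^{(t)}}^2]=\alpha^2\E[\norm{(1/(MH))\sum_{i,\ell}G_i(\theta_{i,\ell}^{(t)})}^2]$ is where the $M$-fold variance reduction must appear, and I expect this to be the main obstacle. After splitting into mean and noise parts via $\norm{x+y}^2\leq 2\norm{x}^2+2\norm{y}^2$, the mean part is routine: bounding $\norm{\bar G(\bar\theta^{(t)})}\leq L\norm{\bar\theta^{(t)}-\theta^\star}$ and applying Jensen's inequality to the residuals yields the $\bigo{\alpha^2L^2}$ self-term and an $\bigo{\alpha^2L^2/(MH)}$ drift term. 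The delicate part is the noise term $\E[\norm{(1/(MH))\sum_{i,\ell}\zeta_{i,\ell}^{(t)}}^2]$: expanding the double sum, I would argue that all off-diagonal covariances vanish, using Assumption~\ref{ass::independence} (conditioned on $\cF_0^{(t)}$, distinct agents evolve from independent observation streams, so cross-agent terms factor into products of conditional means, each zero) and, within a single agent, the martingale-difference property $\E[\zeta_{i,\ell'}^{(t)}\mid\cF_{\ell'}^{(t)}]=0$ to eliminate terms with $\ell\neq\ell'$. Only the $MH$ diagonal terms survive, producing the crucial $(1/(M^2H^2))\cdot MH=1/(MH)$ scaling. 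Bounding each $\E[\norm{\zeta_{i,\ell}^{(t)}}^2]$ via Corollary~\ref{corr:diff} together with $\norm{\theta_{i,\ell}^{(t)}-\theta^\star}^2\leq 2\norm{\theta_{i,\ell}^{(t)}-\bar\theta^{(t)}}^2+2\norm{\bar\theta^{(t)}-\theta^\star}^2$ then delivers the $\bigo{\alpha^2L^2\sigma^2/(MH)}$ variance term, an $\bigo{\alpha^2L^2/(MH)}$ self-term absorbed into $\bigo{\alpha^2L^2}$, and a further $\bigo{\alpha^2L^2/(MH)}$ drift term. Collecting the contraction, drift, variance, and higher-order self-terms yields exactly the stated recursion.
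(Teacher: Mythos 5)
Your proposal is correct and follows essentially the same route as the paper's proof: the same closed-form server update (with the correction terms cancelling in the aggregate), the same expansion into cross and quadratic terms, the same martingale/tower-property argument to replace noisy operators by true ones and to kill the off-diagonal noise covariances (via Assumption~\ref{ass::independence} and i.i.d. sampling), and the same use of strong monotonicity, Lipschitzness, Young's inequality, and Corollary~\ref{corr:diff} to produce the contraction, drift, and $1/(MH)$-scaled variance terms.
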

\begin{proof}
From the update rule of \texttt{FedHSA} in Eq.~\eqref{eqn::simple_average} and the definition of $G(\bar\theta^{(t)})$, we obtain
\begin{equation}
\begin{aligned}
    \bar{\theta}^{(t+1)}-\Bar{\theta}^{(t)} & = \frac{\alpha_g\eta}{M}\sum_{i=1}^M\sum_{\ell=0}^{H-1}\paren{G_i(\theta_{i,\ell}^{(t)})-G_i(\Bar{\theta}^{(t)})+G(\Bar{\theta}^{(t)})}\\
    & =\frac{\alpha_g\eta}{M}\sum_{i=1}^M\sum_{\ell=0}^{H-1} G_i(\theta_{i,\ell}^{(t)}). 
\end{aligned}
\end{equation}
Using the definition of the effective step-size $\alpha=H\eta\alpha_g$, we can then write the squared norm of the distance of the global iterate to the optimum $\theta^*$ as
\begin{equation}
\begin{aligned}
    \norm{\bar\theta^{(t+1)}-\theta^\star}^2
   &=\norm{\bar\theta^{(t)}-\theta^\star+\frac{\alpha}{MH}\sum_{i=1}^M\sum_{\ell=0}^{H-1} G_i(\theta_{i,\ell}^{(t)})}^2\\
    &=\norm{\bar\theta^{(t)}-\theta^\star}^2+\overbrace{\left\langle\bar\theta^{(t)}-\theta^\star, \frac{2\alpha}{MH}\sum_{i=1}^M\sum_{\ell=0}^{H-1} G_i(\theta_{i,\ell}^{(t)})\right\rangle}^{T_1}+\overbrace{\norm{\frac{\alpha}{MH}\sum_{i=1}^M\sum_{\ell=0}^{H-1} G_i(\theta_{i,\ell}^{(t)})}^2}^{T_2}.\label{bound::onestep_iid}
\end{aligned}
\end{equation}
To bound this equation in expectation, it then boils down to bounding the expectations of the terms $T_1$ and $T_2$ separately. We proceed to do this next. 
\begin{equation}
\begin{aligned}
    \E[T_1]&\overset{(a)}{=}2\alpha\E\left[\left\langle\bar\theta^{(t)}-\theta^\star, \frac{1}{MH}\sum_{i=1}^M\sum_{\ell=0}^{H-1} \bar G_{i}(\theta_{i,\ell}^{(t)})\right\rangle\right]\\
    &=2\alpha\E\left[\left\langle\bar\theta^{(t)}-\theta^\star, \frac{1}{MH}\sum_{i=1}^M\sum_{\ell=0}^{H-1} \left(\bar G_{i}(\theta_{i,\ell}^{(t)})-\bar G_{i}(\bar \theta^{(t)})\right)\right\rangle\right]+2\alpha\E\left[\left\langle\bar\theta^{(t)}-\theta^\star, \frac{1}{MH}\sum_{i=1}^M\sum_{\ell=0}^{H-1} \bar G_{i}(\bar\theta^{(t)})\right\rangle\right]\\
    &=2\alpha\E\left[\left\langle\bar\theta^{(t)}-\theta^\star, \frac{1}{MH}\sum_{i=1}^M\sum_{\ell=0}^{H-1} \left(\bar G_{i}(\theta_{i,\ell}^{(t)})-\bar G_{i}(\bar \theta^{(t)})\right)\right\rangle\right]+2\alpha \E\left[\left\langle\bar\theta^{(t)}-\theta^\star, \bar G(\bar\theta^{(t)})\right\rangle\right]\\
    &\overset{(b)}{\leq}\frac{2\alpha}{MH}\sum_{i=1}^M\sum_{\ell=0}^{H-1}\E\left[\left\langle\bar\theta^{(t)}-\theta^\star,  \bar G_{i}(\theta_{i,\ell}^{(t)})-\bar G_{i}(\bar \theta^{(t)})\right\rangle\right]-2\alpha\mu\E\left[\norm{\bar \theta^{(t)}-\theta^\star}^2\right]\\
    &\leq\frac{\alpha}{MH}\sum_{i=1}^M\sum_{\ell=0}^{H-1}\E\left[\mu\norm{\bar \theta^{(t)}-\theta^\star}^2+\frac{1}{\mu}\norm{\bar G_{i}(\theta_{i,\ell}^{(t)})-\bar G_{i}(\bar \theta^{(t)})}^2\right]-2\alpha\mu\E\left[\norm{\bar \theta^{(t)}-\theta^\star}^2\right]\\
    &\overset{(c)}{\leq}\frac{\alpha}{MH}\sum_{i=1}^M\sum_{\ell=0}^{H-1}\E\left[\mu\norm{\bar \theta^{(t)}-\theta^\star}^2+\frac{L^2}{\mu}\norm{\theta_{i,\ell}^{(t)}-\bar \theta^{(t)}}^2\right]-2\alpha\mu\E\left[\norm{\bar \theta^{(t)}-\theta^\star}^2\right]\\
    &=-\alpha\mu\E\left[\norm{\bar \theta^{(t)}-\theta^\star}^2\right]+\frac{\alpha L^2}{\mu MH}\sum_{i=1}^M\sum_{\ell=0}^{H-1}\E\bracket{\norm{\theta_{i,\ell}^{(t)}-\bar \theta^{(t)}}^2},\label{bound::T1_iid}
\end{aligned}
\end{equation}
where $(b)$ follows from Assumption~\ref{ass::strong_monotonicity}; $(c)$ is a result of Assumption~\ref{ass::smoothness}. Before we proceed, we define $\cF_\ell^{(t)}$ as the $\sigma$-algebra capturing all the randomness up to the $\ell$-th local iteration of round $t$. Building on this definition, we reason about $(a)$ separately as follows:
\begin{equation}\label{eqn::T1_iid}
\begin{aligned}
    \E[T_1]&=2\alpha\E\left[\left\langle\bar\theta^{(t)}-\theta^\star, \frac{1}{MH}\sum_{i=1}^M\sum_{\ell=0}^{H-1} G_i(\theta_{i,\ell}^{(t)})\right\rangle\right]\\
    &=2\alpha\E\left[\left\langle\bar\theta^{(t)}-\theta^\star, \frac{1}{MH}\sum_{i=1}^M\sum_{\ell=0}^{H-2} G_i(\theta_{i,\ell}^{(t)})\right\rangle\right]+2\alpha\E\left[\left\langle\bar\theta^{(t)}-\theta^\star, \frac{1}{MH}\sum_{i=1}^M G_i(\theta_{i,H-1}^{(t)})\right\rangle\right]\\
    &\overset{(a1)}{=}2\alpha\E\left[\left\langle\bar\theta^{(t)}-\theta^\star, \frac{1}{MH}\sum_{i=1}^M\sum_{\ell=0}^{H-2} G_i(\theta_{i,\ell}^{(t)})\right\rangle\right]+2\alpha\E\left[\E\left[\left\langle\bar\theta^{(t)}-\theta^\star, \frac{1}{MH}\sum_{i=1}^M G_i(\theta_{i,H-1}^{(t)})\right\rangle\Bigg|\cF_{H-2}^{(t)}\right]\right]\\
    &\overset{(a2)}{=}2\alpha\E\left[\left\langle\bar\theta^{(t)}-\theta^\star, \frac{1}{MH}\sum_{i=1}^M\sum_{\ell=0}^{H-2} G_i(\theta_{i,\ell}^{(t)})\right\rangle\right]+2\alpha\E\left[\left\langle\bar\theta^{(t)}-\theta^\star, \frac{1}{MH}\sum_{i=1}^M \E\left[G_i(\theta_{i,H-1}^{(t)})\Big|\cF_{H-2}^{(t)}\right]\right\rangle\right]\\
    &\overset{(a3)}{=}2\alpha\E\left[\left\langle\bar\theta^{(t)}-\theta^\star, \frac{1}{MH}\sum_{i=1}^M\sum_{\ell=0}^{H-2} G_i(\theta_{i,\ell}^{(t)})\right\rangle\right]+2\alpha\left\langle\bar\theta^{(t)}-\theta^\star, \frac{1}{MH}\sum_{i=1}^M \bar G_{i}(\theta_{i,H-1}^{(t)})\right\rangle.
\end{aligned}
\end{equation}
Here, $(a1)$ follows from the tower property of expectations; $(a2)$ holds from the fact that $\bar\theta^{(t)}$ is deterministic conditioned on $\cF_{H-2}^{(t)}$, and $(a3)$ is a result of the i.i.d. assumption in tandem with the fact that $\theta_{i,H-1}^{(t)}$ is deterministic conditioned on $\cF_{H-2}^{(t)}$. We can repeat this process by iteratively conditioning on $\cF_{H-3}^{(t)},...,\cF_{-1}^{(t)}$ and arrive at $(a)$, where $\cF_{-1}^{(t)}$ is all the randomness up to round $t$ before any local steps are taken.

We now move on to bound the expectation of $T_2$.
\begin{equation}
\begin{aligned}
    \E[T_2]&=\alpha^2\E\left[\norm{\frac{1}{MH}\sum_{i=1}^M\sum_{\ell=0}^{H-1} \left(G_i(\theta_{i,\ell}^{(t)})-\bar G_i(\theta_{i,\ell}^{(t)})\right)+\frac{1}{MH}\sum_{i=1}^M\sum_{\ell=0}^{H-1} \bar G_i(\theta_{i,\ell}^{(t)})}^2\right]\\
    &\leq2\alpha^2\E\left[\norm{\frac{1}{MH}\sum_{i=1}^M\sum_{\ell=0}^{H-1} \left(G_i(\theta_{i,\ell}^{(t)})-\bar G_i(\theta_{i,\ell}^{(t)})\right)}^2\right]+2\alpha^2\E\left[\norm{\frac{1}{MH}\sum_{i=1}^M\sum_{\ell=0}^{H-1} \bar G_i(\theta_{i,\ell}^{(t)})}^2\right]\\
    &\overset{(a)}{\leq}2\alpha^2\E\left[\frac{1}{M^2H^2}\sum_{i=1}^M\sum_{\ell=0}^{H-1} \norm{G_i(\theta_{i,\ell}^{(t)})-\bar G_i(\theta_{i,\ell}^{(t)})}^2\right]\\
    &\quad + 2\alpha^2\E\bracket{\norm{\frac{1}{MH}\sum_{i=1}^M\sum_{\ell=0}^{H-1}\left(\bar G_i(\theta_{i,\ell}^{(t)})-\bar G_i(\bar\theta^{(t)})\right)+\frac{1}{MH}\sum_{i=1}^M\sum_{\ell=0}^{H-1}\bar G_i(\bar\theta^{(t)})}^2}\\
    &\overset{(b)}{\leq}2\alpha^2\E\left[\frac{1}{M^2H^2}\sum_{i=1}^M\sum_{\ell=0}^{H-1} \cO\left(L^2\right)\paren{\norm{\theta_{i,\ell}^{(t)}-\theta^\star}^2+\sigma^2}\right]+ 4\alpha^2\E\bracket{\norm{\frac{1}{MH}\sum_{i=1}^M\sum_{\ell=0}^{H-1}\left(\bar G_i(\theta_{i,\ell}^{(t)})-\bar G_i(\bar\theta^{(t)})\right)}^2}\\
    &\quad+ 4\alpha^2\E\bracket{\norm{\frac{1}{MH}\sum_{i=1}^M\sum_{\ell=0}^{H-1}\bar G_i(\bar\theta^{(t)})}^2}\\
    &\leq2\alpha^2\E\left[\frac{1}{M^2H^2}\sum_{i=1}^M\sum_{\ell=0}^{H-1} \cO(L^2)\left(\norm{\theta_{i,\ell}^{(t)}-\bar\theta^{(t)}}^2+\norm{\bar\theta^{(t)}-\theta^\star}^2+\sigma^2\right)\right]\\
    &\quad +\frac{4\alpha^2}{MH}\sum_{i=1}^M\sum_{\ell=0}^{H-1}\E\bracket{\norm{\bar G_i(\theta_{i,\ell}^{(t)})-\bar G_i(\bar\theta^{(t)})}^2} +4\alpha^2\E\bracket{\norm{\bar G(\bar\theta^{(t)})}^2}\\
    &\overset{(c)}{\leq}\frac{2\alpha^2}{M^2H^2}\sum_{i=1}^M\sum_{\ell=0}^{H-1}\cO(L^2)\E\left[ \left(\norm{\theta_{i,\ell}^{(t)}-\bar\theta^{(t)}}^2+\norm{\bar\theta^{(t)}-\theta^\star}^2+\sigma^2\right)\right]+\frac{4\alpha^2L^2}{MH}\sum_{i=1}^M\sum_{\ell=0}^{H-1}\E\left[\norm{\theta_{i,\ell}^{(t)}-\bar\theta^{(t)}}^2\right]\\
    &\quad+4\alpha^2L^2\E\left[\norm{\bar\theta^{(t)}-\theta^\star}^2\right]\\
    &\leq \bigo{\frac{\alpha^2L^2}{M^2H^2}+\frac{\alpha^2L^2}{MH}}\sum_{i=1}^M\sum_{\ell=0}^{H-1}\E\left[\norm{\theta_{i,\ell}^{(t)}-\bar\theta^{(t)}}^2\right]+\bigo{\alpha^2L^2+\frac{\alpha^2L^2}{MH}}\E\bracket{\norm{\bar\theta^{(t)}-\theta^\star}^2}+\bigo{\frac{\alpha^2L^2\sigma^2}{MH}}\\
    &\leq \bigo{\frac{\alpha^2L^2}{MH}}\sum_{i=1}^M\sum_{\ell=0}^{H-1}\E\left[\norm{\theta_{i,\ell}^{(t)}-\bar\theta^{(t)}}^2\right]+\bigo{\alpha^2L^2}\E\bracket{\norm{\bar\theta^{(t)}-\theta^\star}^2}+\bigo{\frac{\alpha^2L^2\sigma^2}{MH}},\label{bound::T2_iid}
\end{aligned}
\end{equation}
where $(b)$ uses Corollary~\ref{corr:diff}, and $(c)$ is a result of Assumption~\ref{ass::smoothness}. We provide the rationale behind $(a)$ as follows.

The second term in inequality $(a)$ is a direct result of add-and-subtract. To inspect the first term, note that
\begin{equation}
\begin{aligned}
    &\quad\,\,\E\left[\norm{\sum_{i=1}^M\sum_{\ell=0}^{H-1} \paren{G_i(\theta_{i,\ell}^{(t)})-\bar G_i(\theta_{i,\ell}^{(t)})}}^2\right]\\
    &=\E\left[\sum_{i=1}^M\sum_{\ell=0}^{H-1} \norm{G_i(\theta_{i,\ell}^{(t)})-\bar G_i(\theta_{i,\ell}^{(t)})}^2\right]+2\sum_{j=1}^M\sum_{m< n}\E\left[\left\langle G_j(\theta_{j,m}^{(t)})-\bar G_j(\theta_{j,m}^{(t)}),G_j(\theta_{j,n}^{(t)})-\bar G_j(\theta_{j,n}^{(t)})\right\rangle\right]\\
    &+2\sum_{j< k}\sum_{m=0}^{H-1}\E\left[\left\langle G_j(\theta_{j,m}^{(t)})-\bar G_j(\theta_{j,m}^{(t)}),G_k(\theta_{k,m}^{(t)})-\bar G_k(\theta_{k,m}^{(t)})\right\rangle\right]\\
    &+2\sum_{j< k}\sum_{m<n}\E\left[\left\langle G_j(\theta_{j,m}^{(t)})-\bar G_j(\theta_{j,m}^{(t)}),G_k(\theta_{k,n}^{(t)})-\bar G_k(\theta_{k,n}^{(t)})\right\rangle\right]\\
    &=\E\left[\sum_{i=1}^M\sum_{\ell=0}^{H-1} \norm{G_i(\theta_{i,\ell}^{(t)})-\bar G_i(\theta_{i,\ell}^{(t)})}^2\right]+2\sum_{j=1}^M\sum_{m< n}\E\left[\E\left[\left\langle G_j(\theta_{j,m}^{(t)})-\bar G_j(\theta_{j,m}^{(t)}),G_j(\theta_{j,n}^{(t)})-\bar G_j(\theta_{j,n}^{(t)})\right\rangle\Big|\cF_{n-1}^{(t)}\right]\right]\\
    &+2\sum_{j< k}\sum_{m=0}^{H-1}\E\left[\E\left[\left\langle G_j(\theta_{j,m}^{(t)})-\bar G_j(\theta_{j,m}^{(t)}),G_k(\theta_{k,m}^{(t)})-\bar G_k(\theta_{k,m}^{(t)})\right\rangle\Big|\cF_{m-1}^{(t)}\right]\right]\\
    &+2\sum_{j< k}\sum_{m<n}\E\left[\E\left[\left\langle G_j(\theta_{j,m}^{(t)})-\bar G_j(\theta_{j,m}^{(t)}),G_k(\theta_{k,n}^{(t)})-\bar G_k(\theta_{k,n}^{(t)})\right\rangle\Big|\cF_{n-1}^{(t)}\right]\right]\\
    &=\E\left[\sum_{i=1}^M\sum_{\ell=0}^{H-1} \norm{G_i(\theta_{i,\ell}^{(t)})-\bar G_i(\theta_{i,\ell}^{(t)})}^2\right],
\end{aligned}
\end{equation}
where the first equality holds from unrolling the squared norm, and the second equality follows from the tower property of expectation. We analyze the terms separately in the last equality.

For the first cross-term, we can write
\begin{equation}
\begin{aligned}
&\quad2\sum_{j=1}^M\sum_{m< n}\E\left[\E\left[\left\langle G_j(\theta_{j,m}^{(t)})-\bar G_j(\theta_{j,m}^{(t)}),G_j(\theta_{j,n}^{(t)})-\bar G_j(\theta_{j,n}^{(t)})\right\rangle\Big|\cF_{n-1}^{(t)}\right]\right]\\
&=2\sum_{j=1}^M\sum_{m< n}\E\left[\left\langle G_j(\theta_{j,m}^{(t)})-\bar G_j(\theta_{j,m}^{(t)}),\E\left[G_j(\theta_{j,n}^{(t)})-\bar G_j(\theta_{j,n}^{(t)})\Big|\cF_{n-1}^{(t)}\right]\right\rangle\right]\\
&=0.
\end{aligned}
\end{equation}
The first inequality holds from the definition of $\cF_{n-1}^{(t)}$. The second one holds because $\theta_{j,n}^{(t)}$ is deterministic conditioned on $\cF_{n-1}^{(t)}$, and the only randomness comes from $o_{j,n}^{(t)}$ in $G_j(\theta_{j,n}^{(t)})$. Due to the i.i.d. sampling assumption, we readily know that $\E\bracket{G_j(\theta_{j,n}^{(t)})}=\bar G_j(\theta_{j,n}^{(t)})$ for a fixed $\theta_{j,n}^{(t)}$, and the cross-term then equals 0.

For the second cross-term, we can write
\begin{equation}
\begin{aligned}
&\quad2\sum_{j< k}\sum_{m=0}^{H-1}\E\left[\E\left[\left\langle G_j(\theta_{j,m}^{(t)})-\bar G_j(\theta_{j,m}^{(t)}),G_k(\theta_{k,m}^{(t)})-\bar G_k(\theta_{k,m}^{(t)})\right\rangle\Big|\cF_{m-1}^{(t)}\right]\right]\\
&\overset{(i)}{=}2\sum_{j< k}\sum_{m=0}^{H-1}\E\left[\left\langle\E\left[e_{j,m}^{(t)}\big|\cF_{m-1}^{(t)}\right],\E\bracket{e_{k,m}^{(t)}\big|\cF_{m-1}^{(t)]}}\right\rangle\right]\\
&\overset{(ii)}{=}0,
\end{aligned}
\end{equation}
where we define $e_{i,\ell}^{(t)}:=G_i(\theta_{i,\ell}^{(t)})-\bar G_i(\theta_{i,\ell}^{(t)})$. For $(i)$, we used the fact that $\theta_{j,m}^{(t)}$ and $\theta_{k,m}^{(t)}$ are deterministic conditioned on $\cF_{m-1}^{(t)}$. So, due to Assumption~\ref{ass::independence}, $e_{j,m}^{(t)}$ and $e_{k,m}^{(t)}$ are conditionally independent. For $(ii)$, we used the i.i.d. sampling assumption.

The third cross-term can be proved to be 0 similarly as for the second one.

Taking expectation on both sides of~\eqref{bound::onestep_iid}, and plugging in the bounds from~\eqref{bound::T1_iid} and~\eqref{bound::T2_iid}, we obtain:  
\begin{equation}
\begin{aligned}
    \E\bracket{\norm{\bar\theta^{(t+1)}-\theta^\star}^2}&\leq \paren{1-\alpha\mu+\bigo{\alpha^2L^2}}\E\left[\norm{\bar\theta^{(t)}-\theta^\star}^2\right]\\
    &\quad+\overbrace{\paren{\frac{\alpha L^2}{\mu MH}+\frac{\alpha^2L^2}{MH}}\sum_{i=1}^M\sum_{\ell=0}^{H-1}\E\bracket{\norm{\theta_{i,\ell}^{(t)}-\bar \theta^{(t)}}^2}}^{DRIFT} +\bigo{\frac{\alpha^2L^2\sigma^2}{MH}}.
\end{aligned}
\end{equation}
The proof of Lemma~\ref{lem::onestep_iid} is then complete.
\end{proof}
Lemma~\ref{lem::onestep_iid} breaks the bounding of the recursion $\E\bracket{\norm{\bar\theta^{(t+1)}-\theta^\star}^2}$ into three parts: (i) the ``good term'' $\paren{1-\alpha\mu+\bigo{\alpha^2L^2}}\E\left[\norm{\bar\theta^{(t)}-\theta^\star}^2\right]$, where we can select $\alpha$ small enough such that $\bigo{\alpha^2L^2}$ can be absorbed by the negative term $-\alpha\mu$, ensuring that the distance to the optimum $\theta^\star$ decreases; (ii) the effect of noise $\bigo{{\alpha^2L^2\sigma^2}/{(MH)}}$ which demonstrates the benefit of collaboration (since it gets scaled down by $M$), and (iii) the drift term $DRIFT$ that accounts for the client drift caused by environmental heterogeneity and local steps. We can further bound this term by plugging in Lemma~\ref{lem::drift}, and arrive at the following result of Theorem~\ref{thm::iid_mse}.

\textbf{Proof of Theorem~\ref{thm::iid_mse}.}
\begin{proof}
Plugging the bound from Lemma~\ref{lem::drift} into the bound from Lemma~\ref{lem::onestep_iid}, we obtain:
\begin{equation}
\begin{aligned}
    \E\bracket{\norm{\bar\theta^{(t+1)}-\theta^\star}^2}
    &\leq \paren{1-\alpha\mu+\bigo{\alpha^2L^2}}\E\left[\norm{\bar\theta^{(t)}-\theta^\star}^2\right]+\bigo{\frac{\alpha^2L^2\sigma^2}{MH}}\\
    &\quad+\bigo{\frac{\alpha L^2}{\mu MH}+\frac{\alpha^2L^2}{MH}}MH\alpha^2L^2\paren{\E\bracket{\norm{\bar\theta^{(t)}-\theta^\star}^2}+\sigma^2}\\
    &\leq \paren{1-\alpha\mu+\bigo{\alpha^2L^2}+\bigo{\alpha^4L^4}+\bigo{\frac{\alpha^3L^4}{\mu}}}\E\left[\norm{\bar\theta^{(t)}-\theta^\star}^2\right]\\
    &\quad+\bigo{\frac{\alpha^2L^2\sigma^2}{MH}+\frac{\alpha^3L^4\sigma^2}{\mu}+\alpha^4L^4\sigma^2}\\
    &\leq \paren{1-\alpha\mu+\bigo{\alpha^2L^2}}\E\left[\norm{\bar\theta^{(t)}-\theta^\star}^2\right]+\bigo{\frac{\alpha^2L^2}{MH}+\frac{\alpha^3L^4}{\mu}}\sigma^2.\label{eqn::iid_onestep}
\end{aligned}
\end{equation}
Here, the first inequality follows from the definition of $\alpha=H\eta\alpha_g$. The last one is a result of the fact that $\alpha\leq \mu/L^2$, $L\geq 1$, $\mu\leq 1$, such that $\alpha^4L^4\leq\alpha^2L^2$, $\alpha^3L^4/\mu\leq\alpha^2L^2$, and $\alpha^4L^4\sigma^2\leq \alpha^3L^4\sigma^2/\mu$. 

Applying Eq.~\eqref{eqn::iid_onestep} iteratively $T$ times yields
\begin{equation}
\begin{aligned}
    \E\bracket{\norm{\bar\theta^{(T)}-\theta^\star}^2}
    &\leq \paren{1-\frac{\alpha\mu}{2}}^T\norm{\bar\theta^{(0)}-\theta^\star}^2+\bigo{\frac{\alpha^2L^2}{MH}+\frac{\alpha^3L^4}{\mu}}\sigma^2\sum_{t=0}^{T-1}\paren{1-\frac{\alpha\mu}{2}}^t\\
    &\leq \exp\paren{-\frac{\alpha\mu}{2}T}\norm{\bar\theta^{(0)}-\theta^\star}^2+\bigo{\frac{\alpha^2L^2}{MH}+\frac{\alpha^3L^4}{\mu}}\frac{2\sigma^2}{\alpha\mu}\\
    &\leq \exp\paren{-\frac{\mu}{2}\alpha T}\norm{\bar\theta^{(0)}-\theta^\star}^2+\bigo{\frac{\alpha L^2}{\mu MH}+\frac{\alpha^2L^4}{\mu^2}}\sigma^2.
\end{aligned}
\end{equation}
Here, the first inequality holds since $\alpha\leq \mu/(2CL^2)$ where $C$ is the dominant constant in $\bigo{\alpha^2L^2}$, and thus $C\alpha^2L^2\leq\alpha\mu/2$. The second inequality holds from the fact that $1-x\leq \exp{(-x)}$. The proof is then complete.
\end{proof}
Thus far, we have demonstrated that by selecting $\alpha$ small enough so that the term $\alpha^2L^4/\mu^2$ is subsumed by $\alpha L^2/(\mu MH)$, Theorem~\ref{thm::iid_mse} mirrors the result of the single-agent case as seen in Eq.~\eqref{eqn::mse_bound} under i.i.d. sampling. Additionally, it is important to highlight that a linear speedup is achieved, i.e., the variance term, $\alpha L^2\sigma^2/(\mu MH)$, is reduced in proportion to the number of agents $M$, further underscoring the benefits of collaboration in a multi-agent setting.

\newpage
\section{Analysis of \texttt{FedHSA} under Markovian Sampling}
\label{app:Markovproof}
Now, we shift our focus to the central result of our analysis: the convergence of the \texttt{FedHSA} algorithm under Markovian sampling. In this scenario, the observations $\{o_{i,t}\}_{t \geq 1}$ for each agent $i \in [M]$ are temporally correlated, as opposed to being statistically independent. This introduces additional complexity to the analysis, as we must account for the intricate interactions between observations across different time steps and among various agents.

Building upon the geometric mixing time property, we present the following corollary to deal with time-correlation in observations.

\begin{corollary}\label{corr::markov}
For each agent $i\in[M]$, any given $\theta\in\R^d$, and any non-negative integers $\tau,k$ satisfying $\tau\leq k$, the following holds:
\begin{equation}
    \norm{\E\bracket{G_i(\theta, o_{i,k})|o_{i,k-\tau}}-\bar G_i(\theta)} \leq \bigo{L\rho^\tau}\paren{\norm{\theta-\theta^\star}+\sigma},\label{eqn::mixing_time}
\end{equation}
where $o_{i,k}$ is the observation made by agent $i$ at the $k$-th time-step.
\end{corollary}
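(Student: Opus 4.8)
The plan is to exploit the finiteness of the common state space $\cS$, combined with the geometric mixing property~\eqref{eqn::tv} and the uniform operator-norm bound from Corollary~\ref{corr:diff}. First I would note that the condition $\tau \leq k$ guarantees $k-\tau \geq 0$, so the conditioning observation $o_{i,k-\tau}$ is well-defined, and then expand both operators as finite sums over $\cS$. Since $\bar G_i(\theta) = \sum_{s \in \cS} \mu_i(s) G_i(\theta, s)$ and, conditioning on the realized state $o_{i,k-\tau}$, we have $\E[G_i(\theta, o_{i,k}) \mid o_{i,k-\tau}] = \sum_{s \in \cS} \P(o_{i,k} = s \mid o_{i,k-\tau}) G_i(\theta, s)$, the quantity of interest becomes
\[ \E[G_i(\theta, o_{i,k}) \mid o_{i,k-\tau}] - \bar G_i(\theta) = \sum_{s \in \cS} \paren{\P(o_{i,k} = s \mid o_{i,k-\tau}) - \mu_i(s)} G_i(\theta, s). \]

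Next, applying the triangle inequality and factoring out the uniform bound $\norm{G_i(\theta, s)} \leq \bigo{L}\paren{\norm{\theta - \theta^\star} + \sigma}$ from Corollary~\ref{corr:diff} (which holds for \emph{every} $s \in \cS$), I would obtain
\[ \norm{\E[G_i(\theta, o_{i,k}) \mid o_{i,k-\tau}] - \bar G_i(\theta)} \leq \bigo{L}\paren{\norm{\theta-\theta^\star}+\sigma} \sum_{s \in \cS} \left|\P(o_{i,k}=s \mid o_{i,k-\tau}) - \mu_i(s)\right|. \]
The remaining sum is exactly $2\, d_{TV}\paren{\P(o_{i,k} = \cdot \mid o_{i,k-\tau}), \mu_i}$ by the definition of total variation distance on a finite space. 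The key step is then to invoke time-homogeneity of $\cM_i$: the conditional law of $o_{i,k}$ given $o_{i,k-\tau}$ depends only on the lag $\tau$, so it coincides with the $\tau$-step transition law from the realized starting state, and the mixing bound~\eqref{eqn::tv} applies at time $\tau$, giving $d_{TV} \leq c_i \rho_i^\tau \leq c_i \rho^\tau$. Since $c_i \geq 1$ is a fixed constant and $\rho = \max_{i \in [M]} \rho_i$, absorbing the factor $2c_i$ into the $\bigo{\cdot}$ yields the claimed bound $\bigo{L\rho^\tau}\paren{\norm{\theta-\theta^\star}+\sigma}$.

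The main obstacle is mild but worth flagging: one must handle the conditioning correctly, treating $o_{i,k-\tau}$ as a realized deterministic state so that the mixing inequality~\eqref{eqn::tv} — stated for a fixed starting state $o_{i,0}=s$ — can be applied pointwise, and then observe that the resulting bound is \emph{uniform} over all possible conditioning states (hence holds for the random conditional expectation as well). The only other care needed is the uniform-in-$s$ nature of the operator-norm bound, which is precisely what the second inequality in Corollary~\ref{corr:diff} supplies; without the finite state space and this uniform control, one could not decouple the operator magnitude from the distributional discrepancy.
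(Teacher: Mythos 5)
Your proof is correct, but note that the paper does not actually prove this corollary at all: it simply defers to Lemma~3.1 of~\cite{chenQ}, so your argument supplies details the paper leaves to the literature. Your route --- expand $\bar G_i(\theta)$ and the conditional expectation as finite sums over $\cS$, pull out the uniform bound $\norm{G_i(\theta,s)} \le \bigo{L}\paren{\norm{\theta-\theta^\star}+\sigma}$ from Corollary~\ref{corr:diff}, identify the residual $\ell_1$ discrepancy as twice a total-variation distance, and then invoke time-homogeneity plus the geometric mixing bound~\eqref{eqn::tv} pointwise in the realized conditioning state --- is the standard argument, and it is sound at every step: conditioning behaves as you claim because $\theta$ is a fixed deterministic vector, the TV bound is uniform over starting states so it transfers to the random conditional expectation, and the factor $2c_i$ is absorbed into $\bigo{\cdot}$ exactly as the paper's own statement of~\eqref{eqn::mixing_time} implicitly requires (the bound could not hold otherwise, since~\eqref{eqn::tv} carries $c_i$). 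Two minor points worth flagging: the degenerate case $\tau = 0$ is not covered by~\eqref{eqn::tv}, which is stated for $t>0$, but there the claim is immediate from Corollary~\ref{corr:diff} alone (or from $d_{TV}\le 1 \le c_i$); and your decomposition uses the Markov chain structure only through the time-homogeneity of the $\tau$-step kernel, which is the paper's standing assumption. What the paper's citation buys instead is generality --- the referenced lemma does not hinge on a finite state space --- whereas your proof buys self-containedness at the cost of using finiteness of $\cS$, a cost that is harmless here since Assumption~\ref{ass::irreducible} imposes finiteness anyway.
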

We refer the reader to Lemma~3.1 of~\cite{chenQ} for a proof.

Corollary~\ref{corr::markov} states that given a fixed parameter $\theta$, the difference in the Euclidean norm $\norm{\cdot}$ between the true operator and the expectation of the noisy operator, conditioned on the observation from $\tau$ time-steps before, decays exponentially fast, where the $\norm{\theta-\theta^\star}$ term captures the influence of $\theta$. Corollary~\ref{corr::markov} is particularly useful for addressing the temporal correlation between observations. Specifically, when observations are sampled i.i.d. from the distribution $\mu_i$, the left-hand side of~\eqref{eqn::mixing_time} equals zero, recovering the relationship between $G_i$ and $\bar G_i$. 

Before delving into the details of the proofs, we first introduce the following lemma that bounds the variance reduction term in the Markovian sampling scenario.
\begin{lemma}\label{lem::vr_markov}
Suppose Assumptions~\ref{ass::smoothness} to~\ref{ass::independence} hold. Then the following holds for \texttt{FedHSA} $\forall t\geq \bar\tau$, where $\bar\tau=\tau(\alpha^2)$.
\begin{equation}
\begin{aligned}
\overbrace{2\alpha^2\E\left[\norm{\frac{1}{MH}\sum_{i=1}^M\sum_{\ell=0}^{H-1} \left(G_i(\theta_{i,\ell}^{(t)})-\bar G_i(\theta_{i,\ell}^{(t)})\right)}^2\right]}^{VR_{Markov}}&\leq \bigo{\frac{L^2\alpha^2}{MH}}\sum_{i=1}^M\sum_{\ell=0}^{H-1}\E\bracket{\norm{\theta_{i,\ell}^{(t)}-\bar\theta^{(t)}}^2}\\
&\quad +\bigo{L^2\alpha^2}\E\bracket{\norm{\bar\theta^{(t)}-\theta^\star}^2} +\bigo{L^2\alpha^6\sigma^2}\\
&\quad+\bigo{\frac{L^2\alpha^2\sigma^2}{MH(1-\rho)}}.
\end{aligned}
\end{equation}
\end{lemma}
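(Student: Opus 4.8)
The plan is to expand the squared norm in $VR_{Markov}$ over the index set $\{(i,\ell)\}$ and to classify the resulting terms into three groups: diagonal terms $\E\norm{e_{i,\ell}^{(t)}}^2$, same-agent temporal cross-terms $\E\iprod{e_{i,\ell}^{(t)}}{e_{i,\ell'}^{(t)}}$ with $\ell\neq\ell'$, and cross-agent terms $\E\iprod{e_{i,\ell}^{(t)}}{e_{j,\ell'}^{(t)}}$ with $i\neq j$, where $e_{i,\ell}^{(t)}:=G_i(\theta_{i,\ell}^{(t)})-\bar G_i(\theta_{i,\ell}^{(t)})$. Before splitting, I would reduce every local parameter to the common global parameter $\bar\theta^{(t)}$ by writing $e_{i,\ell}^{(t)}=\zeta_{i,\ell}^{(t)}+\delta_{i,\ell}^{(t)}$, where $\zeta_{i,\ell}^{(t)}:=G_i(\bar\theta^{(t)},o_{i,\ell}^{(t)})-\bar G_i(\bar\theta^{(t)})$ carries the common parameter and $\delta_{i,\ell}^{(t)}$ collects the Lipschitz remainders, satisfying $\norm{\delta_{i,\ell}^{(t)}}\leq 2L\norm{\theta_{i,\ell}^{(t)}-\bar\theta^{(t)}}$ by Assumption~\ref{ass::smoothness}. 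After the $1/(M^2H^2)$ normalization and Jensen's inequality, the $\delta$-part contributes exactly the drift bucket $\bigo{L^2\alpha^2/(MH)}\sum_{i,\ell}\E\norm{\theta_{i,\ell}^{(t)}-\bar\theta^{(t)}}^2$, so the crux is to control $\E\norm{\sum_{i,\ell}\zeta_{i,\ell}^{(t)}}^2$.

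For the diagonal and same-agent pieces, the main tools are the geometric-mixing Corollary~\ref{corr::markov} and Corollary~\ref{corr:diff}. Each diagonal term is $\bigo{L^2}\paren{\E\norm{\bar\theta^{(t)}-\theta^\star}^2+\sigma^2}$; summing $MH$ of them against the prefactor yields the $\bigo{L^2\alpha^2}\E\norm{\bar\theta^{(t)}-\theta^\star}^2$ contribution and an $M$-scaled variance term. For a same-agent pair $\ell<\ell'$, I would condition on the history up to local step $\ell$ (which makes $\zeta_{i,\ell}^{(t)}$ and $\bar\theta^{(t)}$ measurable) and apply Corollary~\ref{corr::markov} with lag $\ell'-\ell$, giving $\E\iprod{\zeta_{i,\ell}^{(t)}}{\zeta_{i,\ell'}^{(t)}}\leq\bigo{L^2\rho^{\ell'-\ell}}\paren{\E\norm{\bar\theta^{(t)}-\theta^\star}^2+\sigma^2}$. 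Summing the geometric series $\sum_{s\geq 1}\rho^s\leq 1/(1-\rho)$ over the $H$ steps and the $M$ agents, and then applying the prefactor, produces precisely the dominant variance term $\bigo{L^2\alpha^2\sigma^2/(MH(1-\rho))}$.

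The cross-agent terms are the main obstacle, and are the reason the higher-order term $\bigo{L^2\alpha^6\sigma^2}$ appears in the statement. By Assumption~\ref{ass::independence}, conditioning on the randomness strictly before round $t$ makes agents $i$ and $j$ evolve independently, so the expected inner product factorizes into a product of conditional means. The difficulty is that conditioning at the round boundary exposes only $\ell+1\leq H$ steps of mixing, i.e.\ a factor $\rho^{\ell+1}$ that need not be small; carried through, this leaves a cross-agent contribution of order $L^2\alpha^2\sigma^2/(H^2(1-\rho)^2)$ that is \emph{not} divided by $M$ and would destroy the linear speedup. To fix this, I would instead condition on the filtration $\bar\tau$ time-steps before round $t$ (this is where the hypothesis $t\geq\bar\tau$ is used, to guarantee enough history), replace the random $\bar\theta^{(t)}$ inside $\zeta_{i,\ell}^{(t)}$ by a surrogate measurable with respect to that earlier filtration, and only then factorize using independence. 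The factorized conditional means now enjoy the full lag-$\bar\tau$ mixing of Corollary~\ref{corr::markov}, for which $\rho^{\bar\tau}\leq\alpha^2$ by the definition $\bar\tau=\tau(\alpha^2)$, so each factor is $\bigo{L\alpha^2}\paren{\ldots}$; the $M^2H^2$ cross-agent products, divided by $M^2H^2$, then collapse to the benign $\bigo{L^2\alpha^6\sigma^2}$ term.

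The remaining care is bookkeeping: the surrogate-replacement errors must be shown to be higher-order rather than to reintroduce a non-speedup term. Here I would bound the per-round movement $\norm{\bar\theta^{(t+1)}-\bar\theta^{(t)}}\leq\bigo{\alpha L}\paren{\norm{\bar\theta^{(t)}-\theta^\star}+\sigma}$ via Corollary~\ref{corr:diff} and Lemma~\ref{lem::drift}, and invoke the step-size restriction $\eta\leq\mu/(C'\bar\tau L^2H)$, which forces $\bar\tau\alpha=\bigo{1}$ so that every factor of $\bar\tau$ accumulated over $\bar\tau$ steps is absorbed by a compensating power of $\alpha$; the resulting terms land in the drift bucket, the $\bigo{L^2\alpha^2}\E\norm{\bar\theta^{(t)}-\theta^\star}^2$ bucket, or the $\bigo{L^2\alpha^6\sigma^2}$ bucket. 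Collecting the three groups together with the $\delta$-drift contribution yields the claimed bound. I expect the cross-agent correlation control to be by far the most delicate step, since obtaining the independence-based factorization and the lag-$\bar\tau$ mixing simultaneously pulls the conditioning point in two opposite directions.
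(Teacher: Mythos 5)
Your skeleton matches the paper's in outline (expand the square, split into diagonal, same-agent temporal, and cross-agent terms, use Corollary~\ref{corr::markov} plus Assumption~\ref{ass::independence}, and condition $\bar\tau$ steps back so that $\rho^{\bar\tau}\leq\alpha^2$), and you correctly diagnose the cross-agent tension between independence-based factorization and mixing lag. The gap is your choice of centering point: you center the noise at the \emph{random} iterate $\bar\theta^{(t)}$, via $\zeta_{i,\ell}^{(t)}=G_i(\bar\theta^{(t)},o_{i,\ell}^{(t)})-\bar G_i(\bar\theta^{(t)})$, and this breaks the argument in two places. First, for the same-agent cross-terms your own bound is $\bigo{L^2\rho^{\ell'-\ell}}\paren{\E\bracket{\norm{\bar\theta^{(t)}-\theta^\star}^2}+\sigma^2}$, because Corollary~\ref{corr::markov} applied at $\theta=\bar\theta^{(t)}$ carries the factor $\norm{\bar\theta^{(t)}-\theta^\star}+\sigma$; after summation this yields $\bigo{\frac{L^2\alpha^2}{MH(1-\rho)}}\paren{\E\bracket{\norm{\bar\theta^{(t)}-\theta^\star}^2}+\sigma^2}$, and you silently keep only the $\sigma^2$ part. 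The discarded iterate part has coefficient $\frac{L^2\alpha^2}{MH(1-\rho)}$, which is \emph{not} $\bigo{L^2\alpha^2}$, since $MH(1-\rho)$ can be arbitrarily smaller than $1$ (take $\rho$ close to $1$); it therefore does not fit the lemma's iterate bucket. Second, for the cross-agent terms your surrogate must be measurable with respect to a filtration $\bar\tau$ steps back, hence it is an earlier iterate such as $\bar\theta^{(t-K)}$ with $K=\lceil\bar\tau/H\rceil$, and the replacement error is controlled only by the parameter movement over $K$ rounds, $\norm{\bar\theta^{(t)}-\bar\theta^{(t-K)}}\leq\bigo{K\alpha L}\paren{\norm{\bar\theta^{(t)}-\theta^\star}+\sigma}$. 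This error enters each cross-term \emph{linearly} (it multiplies $\norm{\zeta_{j,\ell'}^{(t)}}=\bigo{L}\paren{\norm{\bar\theta^{(t)}-\theta^\star}+\sigma}$, not a second copy of itself), so after the $\frac{2\alpha^2}{M^2H^2}$ prefactor and the sum over at most $M^2H^2$ pairs it leaves noise of order $\frac{\bar\tau L^3\alpha^3\sigma^2}{H}+L^3\alpha^3\sigma^2$ with no $1/M$ scaling. Neither term lies inside $\bigo{L^2\alpha^6\sigma^2}$ or $\bigo{\frac{L^2\alpha^2\sigma^2}{MH(1-\rho)}}$ (take $M$ large and $\alpha$ small), so your final claim that all residuals ``land in'' the three buckets is unsubstantiated and false as stated; at best you prove a weaker lemma with extra $\bar\tau$-inflated, non-speedup-compatible terms.

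The missing idea — and the paper's actual proof — is to center at the \emph{deterministic} root $\theta^\star$: write $G_i(\theta_{i,\ell}^{(t)})-\bar G_i(\theta_{i,\ell}^{(t)}) = e_{i,\ell}+\paren{\bar G_i(\theta^\star)-\bar G_i(\theta_{i,\ell}^{(t)})}+\paren{G_i(\theta_{i,\ell}^{(t)},o_{i,\ell}^{(t)})-G_i(\theta^\star,o_{i,\ell}^{(t)})}$, where $e_{i,\ell}:=G_i(\theta^\star,o_{i,\ell}^{(t)})-\bar G_i(\theta^\star)$. The two Lipschitz differences require no mixing at all and produce exactly the drift bucket and the $\bigo{L^2\alpha^2}\E\bracket{\norm{\bar\theta^{(t)}-\theta^\star}^2}$ bucket, with clean coefficients. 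The pure-noise term $e_{i,\ell}$ is a function of the single observation $o_{i,\ell}^{(t)}$ alone, with no random parameter inside, so three things happen at once: its norm is $\bigo{L\sigma}$ by Corollary~\ref{corr:diff} at $\theta=\theta^\star$, making every mixing-based bound a pure $\sigma^2$ term; same-agent cross-terms give $\bigo{L^2\sigma^2\rho^{\ell_2-\ell_1}}$, which sums to the $\frac{L^2\alpha^2\sigma^2}{MH(1-\rho)}$ bucket with no iterate part; and cross-agent expectations factorize \emph{exactly and unconditionally}, $\E\bracket{\iprod{e_{i,\ell}}{e_{j,\ell'}}}=\iprod{\E\bracket{e_{i,\ell}}}{\E\bracket{e_{j,\ell'}}}$, by Assumption~\ref{ass::independence}, with zero replacement error. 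Each unconditional mean is then bounded by the tower property and Corollary~\ref{corr::markov} with lag $\bar\tau$ (this is where $t\geq\bar\tau$ enters), giving $\norm{\E\bracket{e_{i,\ell}}}\leq\bigo{L\rho^{\bar\tau}\sigma}\leq\bigo{L\alpha^2\sigma}$ and hence the $\bigo{L^2\alpha^6\sigma^2}$ bucket. In short, the surrogate you are looking for is $\theta^\star$ itself; choosing it removes both obstructions simultaneously, because the deterministic centering point makes the independence-based factorization and the lag-$\bar\tau$ mixing compatible rather than pulling the conditioning point in opposite directions.
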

\begin{proof}
Note here that we can no longer bound it as we did in the i.i.d. case due to the presence of Markovian sampling, where data samples for each agent are temporally correlated rather than independent. To handle this, we take advantage of Corollary~\ref{corr::markov} to address the time correlation. We thus bound \( VR_{Markov} \) as follows.

\begin{equation}\label{eqn:T21}
\begin{aligned}
    VR_{Markov}&=2\alpha^2\E\left[\norm{\frac{1}{MH}\sum_{i=1}^M\sum_{\ell=0}^{H-1} \paren{G_i(\theta^\star, o_{i,\ell}^{(t)})-\bar G_i(\theta^\star) +\bar G_i(\theta^\star) -\bar G_i(\theta_{i,\ell}^{(t)}) +  G_i(\theta_{i,\ell}^{(t)},o_{i,\ell}^{(t)}) - G_i(\theta^\star, o_{i,\ell}^{(t)})}}^2\right]\\
    &\leq \overbrace{6\alpha^2\E\left[\norm{\frac{1}{MH}\sum_{i=1}^M\sum_{\ell=0}^{H-1} \left(G_i(\theta^\star,o_{i,\ell}^{(t)})-\bar G_i(\theta^\star)\right)}^2\right]}^{A_1}+\overbrace{6\alpha^2\E\left[\norm{\frac{1}{MH}\sum_{i=1}^M\sum_{\ell=0}^{H-1} \left(\bar G_i(\theta_{i,\ell}^{(t)})-\bar G_i(\theta^\star)\right)}^2\right]}^{A_2}\\
    &\ +\overbrace{6\alpha^2\E\left[\norm{\frac{1}{MH}\sum_{i=1}^M\sum_{\ell=0}^{H-1} \left(G_i(\theta_{i,\ell}^{(t)},o_{i,\ell}^{(t)})-G_i(\theta^\star,o_{i,\ell}^{(t)})\right)}^2\right]}^{A_3}.
\end{aligned}
\end{equation}

We proceed to bound $A_2$ as:
\begin{equation}\label{eqn:A2}
\begin{aligned}
    A_2&\leq \bigo{\frac{L^2\alpha^2}{MH}}\sum_{i=1}^M\sum_{\ell=0}^{H-1}\E\bracket{\norm{\theta_{i,\ell}^{(t)}-\theta^\star}^2}\\
    &\leq \bigo{\frac{L^2\alpha^2}{MH}}\sum_{i=1}^M\sum_{\ell=0}^{H-1}\E\bracket{\norm{\theta_{i,\ell}^{(t)}-\bar\theta^{(t)}}^2} + \bigo{\frac{L^2\alpha^2}{MH}}\sum_{i=1}^M\sum_{\ell=0}^{H-1}\E\bracket{\norm{\bar\theta^{(t)}-\theta^\star}^2}\\
    &= \bigo{\frac{L^2\alpha^2}{MH}}\sum_{i=1}^M\sum_{\ell=0}^{H-1}\E\bracket{\norm{\theta_{i,\ell}^{(t)}-\bar\theta^{(t)}}^2} + \bigo{L^2\alpha^2}\E\bracket{\norm{\bar\theta^{(t)}-\theta^\star}^2},
\end{aligned}
\end{equation}
where we used Assumption \ref{ass::smoothness}. We can obtain the same bound for $A_3$ with the same reasoning.

Now we continue to bound the term $A_1$. With a slight overload of notation, denote $e_{i,\ell}=G_i(\theta^\star,o_{i,\ell}^{(t)})-\bar G_i(\theta^\star)$, and we obtain
\begin{equation}\label{eqn:A1}
\begin{aligned}
    A_1&=\bigo{\frac{\alpha^2}{M^2H^2}}\E\left[\norm{\sum_{i=1}^M\sum_{\ell=0}^{H-1} e_{i,\ell}}^2\right]\\
    &=\overbrace{\bigo{\frac{\alpha^2}{M^2H^2}}\sum_{i=1}^M\sum_{\ell=0}^{H-1}\E\left[\norm{ e_{i,\ell}}^2\right]}^{B_1} + \overbrace{\bigo{\frac{\alpha^2}{M^2H^2}}\sum_{i=1}^M\sum_{\ell_1\neq\ell_2}\E\bracket{\iprod{e_{i,\ell_1}}{e_{i,\ell_2}}}}^{B_2}\\
    &\ + \overbrace{\bigo{\frac{\alpha^2}{M^2H^2}}\sum_{i\neq j}\sum_{\ell=0}^{H-1}\E\bracket{\iprod{e_{i,\ell}}{e_{j,\ell}}}}^{B_3} + \overbrace{\bigo{\frac{\alpha^2}{M^2H^2}}\sum_{i\neq j}\sum_{\ell_1\neq\ell_2}\E\bracket{\iprod{e_{i,\ell_1}}{e_{j,\ell_2}}}}^{B_4}.
\end{aligned}
\end{equation}
We then bound the four terms separately.

For the term $B_1$, we have
\begin{equation}\label{eqn:B1}
\begin{aligned}
    B_1&\leq \bigo{\frac{\alpha^2}{M^2H^2}}\sum_{i=1}^M\sum_{\ell=0}^{H-1}\E\left[\norm{G_i(\theta^\star,o_{i,\ell}^{(t)})}^2+\norm{\bar G_i(\theta^\star)}^2\right]\\
    &\leq \bigo{\frac{L^2\alpha^2\sigma^2}{MH}},
\end{aligned}
\end{equation}
where we used Corollary~\ref{corr:diff}.

For the term $B_2$, we have
\begin{equation}\label{eqn:B2}
\begin{aligned}
    {B_2}&\overset{(a)}{=}\bigo{\frac{\alpha^2}{M^2H^2}}\sum_{i=1}^M\sum_{\ell_1\neq\ell_2}\E\bracket{\E\bracket{\iprod{e_{i,\ell_1}}{e_{i,\ell_2}}|o_{i,\ell_1}^{(t)}}} \\
    & = \bigo{\frac{\alpha^2}{M^2H^2}}\sum_{i=1}^M\sum_{\ell_1\neq\ell_2}\E\bracket{\iprod{e_{i,\ell_1}}{\E\bracket{e_{i,\ell_2}|o_{i,\ell_1}^{(t)}}}} \\
    & \leq \bigo{\frac{\alpha^2}{M^2H^2}}\sum_{i=1}^M\sum_{\ell_1\neq\ell_2}\E\bracket{\norm{e_{i,\ell_1}}\norm{\E\bracket{e_{i,\ell_2}|o_{i,\ell_1}^{(t)}}}} \\
    & \overset{(b)}{\leq} \bigo{\frac{L\alpha^2}{M^2H^2}}\sum_{i=1}^M\sum_{\ell_1=0}^{H-1}\sum_{\ell_2>\ell_1}\rho^{\ell_2-\ell_1}\sigma\E\bracket{\norm{e_{i,\ell_1}}} \\
    & \leq \bigo{\frac{L\alpha^2}{M^2H^2}}\sum_{i=1}^M\sum_{\ell_1=0}^{H-1}\sum_{\ell_2>\ell_1}\rho^{\ell_2-\ell_1}\sigma\E\bracket{\norm{G_i(\theta^\star,o_{i,\ell_1}^{(t)})}+\norm{\bar G_i(\theta^\star)}} \\
    & \overset{(c)}{\leq} \bigo{\frac{L^2\alpha^2}{M^2H^2}}\sum_{i=1}^M\sum_{\ell_1=0}^{H-1}\sum_{\ell_2>\ell_1}\rho^{\ell_2-\ell_1}\sigma^2\\
    & \leq \bigo{\frac{L^2\alpha^2\sigma^2}{M^2H^2}}\sum_{i=1}^M\sum_{\ell_1=0}^{H-1}(1+\rho+\rho^2+\cdots)\\
    &\leq \bigo{\frac{L^2\alpha^2\sigma^2}{MH(1-\rho)}},
\end{aligned}
\end{equation}
where we used the tower property of expectation in $(a)$, Corollary~\ref{corr::markov} in $(b)$, and Corollary~\ref{corr:diff} in $(c)$.

For the term $B_3$, we have
\begin{equation}\label{eqn:B3}
\begin{aligned}
    B_3 &\overset{(a)}{=}\bigo{\frac{\alpha^2}{M^2H^2}}\sum_{i\neq j}\sum_{\ell=0}^{H-1}\iprod{\E\bracket{e_{i,\ell}}}{\E\bracket{e_{j,\ell}}}\\
    &\leq \bigo{\frac{\alpha^2}{M^2H^2}}\sum_{i\neq j}\sum_{\ell=0}^{H-1}\norm{\E\bracket{e_{i,\ell}}}\norm{\E\bracket{e_{j,\ell}}}\\
    &\overset{(b)}{=} \bigo{\frac{\alpha^2}{M^2H^2}}\sum_{i\neq j}\sum_{\ell=0}^{H-1}\norm{\E\bracket{\E\bracket{e_{i,\ell}|o_{i,tH+\ell-\bar\tau}}}}\norm{\E\bracket{\E\bracket{e_{j,\ell}|o_{j,tH+\ell-\bar\tau}}}}\\
    &\overset{(c)}{\leq} \bigo{\frac{\alpha^2}{M^2H^2}}\sum_{i\neq j}\sum_{\ell=0}^{H-1}\E\bracket{\norm{\E\bracket{e_{i,\ell}|o_{i,tH+\ell-\bar\tau}}}}\E\bracket{\norm{\E\bracket{e_{j,\ell}|o_{j,tH+\ell-\bar\tau}}}}\\
    &\overset{(d)}{\leq} \bigo{\frac{L^2\alpha^2}{M^2H^2}}\sum_{i\neq j}\sum_{\ell=0}^{H-1}\rho^{2\bar\tau}\sigma^2\\
    &\overset{(e)}{\leq}\bigo{\frac{L^2\alpha^6\sigma^2}{H}},
\end{aligned}
\end{equation}
where $(a)$ is a result of Assumption~\ref{ass::independence}, $(b)$ uses the tower property of expectation, $(c)$ uses the fact that $\norm{\E\bracket{x}}\leq \E\bracket{\norm{x}}$, $(d)$ is the application of Corollary~\ref{corr::markov}, and $(e)$ uses the definition that $\rho^{\bar\tau}\leq \alpha^2$. Note here that we are allowed to condition on $\bar\tau$ time-steps before because $t\geq \bar\tau$.

Similarly, we can achieve the bound for $B_4$ as
\begin{equation}\label{eqn:B4}
    B_4\leq \bigo{L^2\alpha^6\sigma^2}.
\end{equation}

Further plugging the bounds for $B_1$ to $B_4$ into~\eqref{eqn:A1} yields
\begin{equation}\label{eqn:A1_bound}
\begin{aligned}
    A_1&\leq \bigo{\frac{L^2\alpha^2\sigma^2}{MH}}+\bigo{\frac{L^2\alpha^2\sigma^2}{MH(1-\rho)}}+\bigo{\frac{L^2\alpha^6\sigma^2}{H}}+\bigo{L^2\alpha^6\sigma^2}\\
    &\leq\bigo{L^2\alpha^6\sigma^2}+\bigo{\frac{L^2\alpha^2\sigma^2}{MH(1-\rho)}}.
\end{aligned}
\end{equation}

Finally, plugging the bounds for $A_1$ to $A_3$ into~\eqref{eqn:T21} yields
\begin{equation}\label{eqn:T21_bound}
    VR_{Markov}\leq \bigo{\frac{L^2\alpha^2}{MH}}\sum_{i=1}^M\sum_{\ell=0}^{H-1}\E\bracket{\norm{\theta_{i,\ell}^{(t)}-\bar\theta^{(t)}}^2}+\bigo{L^2\alpha^2}\E\bracket{\norm{\bar\theta^{(t)}-\theta^\star}^2}+\bigo{L^2\alpha^6\sigma^2}+\bigo{\frac{L^2\alpha^2\sigma^2}{MH(1-\rho)}}.
\end{equation}
The proof is then complete.
\end{proof}

With the upper-bound for the variance reduction term under Markovian sampling, we now introduce the following lemma, which bounds the one-step recursion of the distance to the optimal parameter $\theta^\star$ in the Markovian sampling setting. This lemma lays the foundation for understanding the impact of time correlations on the convergence behavior of \texttt{FedHSA}.

\begin{lemma}\label{lem::onestep_markov}
Suppose Assumptions~\ref{ass::smoothness} to~\ref{ass::independence} hold. Then the following holds for \texttt{FedHSA} $\forall t\geq \bar\tau$:
\begin{equation}
\begin{aligned}
    \E\bracket{\norm{\bar\theta^{(t+1)}-\theta^\star}^2}&\leq\paren{1-\alpha\mu+\bigo{L^2\alpha^2}}\E\bracket{\norm{\bar\theta^{(t)}-\theta^\star}^2} +\bigo{\frac{L^2\alpha^2}{MH}+\frac{L^2\alpha}{\mu MH}}\sum_{i=1}^M\sum_{\ell=0}^{H-1}\E\bracket{\norm{\theta_{i,\ell}^{(t)}-\bar \theta^{(t)}}^2}\\
    &\ +\bigo{L^2\alpha^6\sigma^2}+\bigo{\frac{L^2\alpha^2\sigma^2}{MH(1-\rho)}}  +\E\bracket{\iprod{\bar\theta^{(t)}-\theta^\star}{\frac{2\alpha}{MH}\sum_{i=1}^M\sum_{\ell=0}^{H-1} \paren{G_{i}(\theta_{i,\ell}^{(t)})-\bar G_{i}(\theta_{i,\ell}^{(t)})}}}.
\end{aligned}
\end{equation}
\end{lemma}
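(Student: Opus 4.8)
The plan is to reuse the one-step recursion skeleton established for the i.i.d. setting in Lemma~\ref{lem::onestep_iid}, modifying it only at the two points where that proof relied on conditional unbiasedness of the noisy operators — a property that Markovian sampling destroys. As in the i.i.d. case, I would first use the aggregation rule~\eqref{eqn::simple_average} together with the observation that the drift-correction terms $G(\bar\theta^{(t)})-G_i(\bar\theta^{(t)},o_{i,0}^{(t)})$ cancel upon averaging over $i$, giving the clean identity $\bar\theta^{(t+1)}-\bar\theta^{(t)} = (\alpha/(MH))\sum_{i,\ell}G_i(\theta_{i,\ell}^{(t)})$. Expanding $\norm{\bar\theta^{(t+1)}-\theta^\star}^2$ then produces the cross term $T_1$ and the squared term $T_2$ exactly as in~\eqref{bound::onestep_iid}, and the task reduces to bounding $\E[T_1]$ and $\E[T_2]$ under temporal correlation.

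For $T_1$, the key departure is that I cannot run the tower-property chain (a1)--(a3) of Lemma~\ref{lem::onestep_iid} to replace each $G_i(\theta_{i,\ell}^{(t)})$ by $\bar G_i(\theta_{i,\ell}^{(t)})$, since under Markov sampling $\E[G_i(\theta_{i,\ell}^{(t)})\mid\cF_{\ell-1}^{(t)}]\neq\bar G_i(\theta_{i,\ell}^{(t)})$. Instead I would add and subtract the true operator, splitting
\begin{equation*}
T_1 = \underbrace{\frac{2\alpha}{MH}\iprod{\bar\theta^{(t)}-\theta^\star}{\sum_{i,\ell}\bar G_i(\theta_{i,\ell}^{(t)})}}_{T_1'} + \frac{2\alpha}{MH}\iprod{\bar\theta^{(t)}-\theta^\star}{\sum_{i,\ell}\paren{G_i(\theta_{i,\ell}^{(t)})-\bar G_i(\theta_{i,\ell}^{(t)})}}.
\end{equation*}
The second inner product is precisely the Markovian bias term left uncontrolled in the statement, to be disposed of separately in Lemma~\ref{lem::markov_bias}. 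The term $T_1'$ is handled identically to the i.i.d. case: writing $\bar G_i(\theta_{i,\ell}^{(t)})=\bar G_i(\bar\theta^{(t)})+(\bar G_i(\theta_{i,\ell}^{(t)})-\bar G_i(\bar\theta^{(t)}))$, the first piece collapses to $\bar G(\bar\theta^{(t)})$ and, via $1$-point strong monotonicity (Assumption~\ref{ass::strong_monotonicity}) and $\bar G(\theta^\star)=0$, yields the contractive term $-2\alpha\mu\E[\norm{\bar\theta^{(t)}-\theta^\star}^2]$; the second piece is bounded by Young's inequality~\eqref{eqn::cs} with weight $\mu$ and the Lipschitzness of $\bar G_i$ (Assumption~\ref{ass::smoothness}), producing $+\alpha\mu\E[\norm{\cdot}^2]$ and the drift contribution $(\alpha L^2/(\mu MH))\sum_{i,\ell}\E[\norm{\theta_{i,\ell}^{(t)}-\bar\theta^{(t)}}^2]$.

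For $T_2=\alpha^2\E[\norm{(1/(MH))\sum_{i,\ell}G_i(\theta_{i,\ell}^{(t)})}^2]$, I would split $G_i=(G_i-\bar G_i)+\bar G_i$ and apply $\norm{x+y}^2\le 2\norm{x}^2+2\norm{y}^2$. The noise part is exactly $VR_{Markov}$, already bounded in Lemma~\ref{lem::vr_markov}, which supplies all the $\sigma^2$ contributions — the dominant $\bigo{L^2\alpha^2\sigma^2/(MH(1-\rho))}$ and the higher-order $\bigo{L^2\alpha^6\sigma^2}$ — together with a drift term and an $\bigo{L^2\alpha^2}\E[\norm{\bar\theta^{(t)}-\theta^\star}^2]$ term. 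The true-operator part $2\alpha^2\norm{(1/(MH))\sum_{i,\ell}\bar G_i(\theta_{i,\ell}^{(t)})}^2$ is treated exactly as in~\eqref{bound::T2_iid}: add and subtract $\bar G_i(\bar\theta^{(t)})$ and use Lipschitzness together with $\norm{\bar G(\bar\theta^{(t)})}^2\le L^2\norm{\bar\theta^{(t)}-\theta^\star}^2$, giving another $\bigo{L^2\alpha^2}\E[\norm{\bar\theta^{(t)}-\theta^\star}^2]$ term and a drift term of order $\bigo{\alpha^2 L^2/(MH)}$. Collecting $T_1'$, the retained bias inner product, and the two pieces of $T_2$ assembles the claimed recursion, with the $-2\alpha\mu+\alpha\mu=-\alpha\mu$ contraction merging with the $\bigo{L^2\alpha^2}$ corrections into the coefficient $1-\alpha\mu+\bigo{L^2\alpha^2}$.

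The main obstacle is conceptual rather than computational: temporal correlation destroys the martingale structure that made $T_1$ collapse in the i.i.d. proof, so the Markovian bias inner product cannot be eliminated here and must be carried forward intact. The genuinely delicate mixing-time estimates that preserve the $1/M$ scaling and the $\alpha^2$ order of the variance are already encapsulated in Lemma~\ref{lem::vr_markov} (via Corollaries~\ref{corr:diff} and~\ref{corr::markov} and Assumption~\ref{ass::independence}), and the control of the leftover inner product is deferred to Lemma~\ref{lem::markov_bias}. Consequently, the present lemma is essentially a careful bookkeeping step that threads those two results into the i.i.d. recursion template while verifying that the contraction, drift, and variance terms all match up.
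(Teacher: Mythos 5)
Your proposal is correct and follows essentially the same route as the paper's proof: the same expansion into a cross term and a squared term, the same split of the cross term into a strongly-monotone true-operator part (yielding $-\alpha\mu$ plus the drift via Young's inequality) and the Markovian bias inner product carried forward intact, and the same split of the squared term into the true-operator piece (handled as in the i.i.d. case) and the noise piece delegated to Lemma~\ref{lem::vr_markov}. The only cosmetic difference is notation ($T_1, T_2$ versus the paper's $U_1, U_2$ with sub-terms $C_1, C_2, U_{21}, U_{22}$).
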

\begin{proof}
As in the i.i.d. case, we first write the error update rule as follows:
\begin{equation}\label{eqn::error_update}
    \norm{\bar\theta^{(t+1)}-\theta^\star}^2=\norm{\bar\theta^{(t)}-\theta^\star}^2+\overbrace{\left\langle\bar\theta^{(t)}-\theta^\star, \frac{2\alpha}{MH}\sum_{i=1}^M\sum_{\ell=0}^{H-1} G_{i}(\theta_{i,\ell}^{(t)})\right\rangle}^{U_1}+\overbrace{\alpha^2\norm{\frac{1}{MH}\sum_{i=1}^M\sum_{\ell=0}^{H-1} G_{i}(\theta_{i,\ell}^{(t)})}^2}^{U_2}.
\end{equation}

We proceed to bound the term $U_2$ using Jensen's inequality. By taking expectation, we obtain:
\begin{equation}\label{eqn:T2}
    \E[U_2]\leq \overbrace{2\alpha^2\E\left[\norm{\frac{1}{MH}\sum_{i=1}^M\sum_{\ell=0}^{H-1} \left(G_i(\theta_{i,\ell}^{(t)})-\bar G_i(\theta_{i,\ell}^{(t)})\right)}^2\right]}^{U_{21}}+\overbrace{2\alpha^2\E\left[\norm{\frac{1}{MH}\sum_{i=1}^M\sum_{\ell=0}^{H-1} \bar G_i(\theta_{i,\ell}^{(t)})}^2\right]}^{U_{22}}.
\end{equation}

Note here that the term $U_{22}$ only involves the true operators, and thus we do not need to consider the effect of Markovian sampling. Therefore, it can be upper-bounded exactly as the term $T_2$ in the i.i.d. case, i.e.:
\begin{equation}\label{eqn:T22}
\begin{aligned}
    U_{22}&\leq \bigo{\frac{L^2\alpha^2}{MH}}\sum_{i=1}^M\sum_{\ell=0}^{H-1}\E\bracket{\norm{\theta_{i,\ell}^{(t)}-\bar\theta^{(t)}}^2}+\bigo{L^2\alpha^2}\E\bracket{\norm{\bar\theta^{(t)}-\theta^\star}^2}.
\end{aligned}
\end{equation}

The term \( U_{21} \) is exactly the variance reduction term we bounded in Lemma~\ref{lem::vr_markov}.
Plugging the bounds for $U_{21}$ and $U_{22}$ into~\eqref{eqn:T2} yields
\begin{equation}\label{eqn::U2}
    \E[U_{2}]\leq \bigo{\frac{L^2\alpha^2}{MH}}\sum_{i=1}^M\sum_{\ell=0}^{H-1}\E\bracket{\norm{\theta_{i,\ell}^{(t)}-\bar\theta^{(t)}}^2}+\bigo{L^2\alpha^2}\E\bracket{\norm{\bar\theta^{(t)}-\theta^\star}^2}+\bigo{L^2\alpha^6\sigma^2}+\bigo{\frac{L^2\alpha^2\sigma^2}{MH(1-\rho)}}.
\end{equation}

For the term $\E[U_1]$, we can decompose it into two parts:
\begin{equation}\label{eqn::C1}
\begin{aligned}
    \E[U_1]=\overbrace{\E\bracket{\iprod{\bar\theta^{(t)}-\theta^\star}{\frac{2\alpha}{MH}\sum_{i=1}^M\sum_{\ell=0}^{H-1} \paren{G_{i}(\theta_{i,\ell}^{(t)})-\bar G_{i}(\theta_{i,\ell}^{(t)})}}}}^{C_1}+\overbrace{\E\bracket{\iprod{\bar\theta^{(t)}-\theta^\star}{\frac{2\alpha}{MH}\sum_{i=1}^M\sum_{\ell=0}^{H-1} \bar G_{i}(\theta_{i,\ell}^{(t)})}}}^{C_2}.
\end{aligned}
\end{equation}
Note that the second part $C_2$ can be bounded similarly as the term $T_1$ in the i.i.d. case:
\begin{equation}\label{eqn::C2}
\begin{aligned}
    C_2&\leq-\alpha\mu\E\left[\norm{\bar \theta^{(t)}-\theta^\star}^2\right]+\frac{\alpha L^2}{\mu MH}\sum_{i=1}^M\sum_{\ell=0}^{H-1}\E\bracket{\norm{\theta_{i,\ell}^{(t)}-\bar \theta^{(t)}}^2}.
\end{aligned}
\end{equation}

Plugging~\eqref{eqn::C2} into~\eqref{eqn::C1}, together with~\eqref{eqn::U2} into~\eqref{eqn::error_update} and taking expectation on both sides, we obtain
\begin{equation}
\begin{aligned}
    \E\bracket{\norm{\bar\theta^{(t+1)}-\theta^\star}^2}&\leq\paren{1-\alpha\mu+\bigo{L^2\alpha^2}}\E\bracket{\norm{\bar\theta^{(t)}-\theta^\star}^2} +\bigo{\frac{L^2\alpha^2}{MH}+\frac{L^2\alpha}{\mu MH}}\sum_{i=1}^M\sum_{\ell=0}^{H-1}\E\bracket{\norm{\theta_{i,\ell}^{(t)}-\bar \theta^{(t)}}^2}\\
    &\quad +\overbrace{\E\bracket{\iprod{\bar\theta^{(t)}-\theta^\star}{\frac{2\alpha}{MH}\sum_{i=1}^M\sum_{\ell=0}^{H-1} \paren{G_{i}(\theta_{i,\ell}^{(t)})-\bar G_{i}(\theta_{i,\ell}^{(t)})}}}}^{T_{bias}}+\bigo{L^2\alpha^6\sigma^2}+\bigo{\frac{L^2\alpha^2\sigma^2}{MH(1-\rho)}}.
\end{aligned}
\end{equation}
\end{proof}
Comparing Lemma~\ref{lem::onestep_markov} with Lemma~\ref{lem::onestep_iid}, we observe that the one-step bound for the i.i.d. case can be recovered in Lemma~\ref{lem::onestep_markov} if data are i.i.d., i.e., $\rho=0$. The only term that impedes further bounding in Lemma~\ref{lem::onestep_markov} is the one $T_{bias}$ capturing the bias caused by Markovian sampling, which equals zero in the i.i.d. case as proved in Eq.~\eqref{eqn::T1_iid}. The next lemma then focuses on bounding this distinct bias term in the Markovian case.

\subsection{Bounding of the Markovian Bias}
It is worth noting that the \textbf{goal} of bounding the bias term is to ensure that the term $\E\bracket{\norm{\bar\theta^{(t)}-\theta^\star}^2}$ on the R.H.S. has a scaling factor no larger than $\alpha\mu$, along with a constant term scaled by an order of $\alpha$ that is either higher than or equal to 3, or an order of 2 but scaled down by the number of agents $M$, thus demonstrating the benefits of collaboration. Unfortunately, directly applying Eq.~\eqref{eqn::cs} cannot yield the desired result. Instead, by applying Corollary~\ref{corr::markov} to account for the time correlation in data samples, we condition on a parameter at least $\bar\tau$ time-steps earlier in time. Specifically, since in the bias term, the parameter $\theta_{i,\ell}^{(t)}$ is from the $t$-th round and $\ell$-th local step with $t\geq \bar\tau$, we condition on $\bar\theta^{(t-\bar\tau)}$, which is at least $\bar\tau$ time-steps earlier. For this reason, we first provide an upper-bound for the term \(\E\bracket{\norm{\bar\theta^{(t)}-\bar\theta^{(t-\tau)}}^2}\), which comes in handy in subsequent proofs. 

\begin{lemma}\label{lem::diff_tau}
Suppose all the conditions in Lemma~\ref{lem::onestep_markov} hold. Then there exist a universal constant $\bar C\geq 1$ such that by selecting $\eta\leq 1/(\bar\tau \Tilde{C}HL^2)$ and $\alpha_g=1$, the following is true for \texttt{FedHSA} for all $t\geq 2\bar\tau$:
\begin{equation}
    \E\bracket{\norm{\bar\theta^{(t)}-\bar\theta^{(t-\bar\tau)}}^2}\leq \bigo{\bar\tau^2L^2\alpha^2}\E\bracket{\norm{\bar\theta^{(t)}-\theta^\star}^2}+\bigo{\bar\tau^2L^4\alpha^4\sigma^2+\frac{\bar\tau^2L^2\alpha^2\sigma^2}{MH(1-\rho)}}.
\end{equation}
\end{lemma}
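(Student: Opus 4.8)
The plan is to write the increment $\bar\theta^{(t)}-\bar\theta^{(t-\bar\tau)}$ as a telescoping sum of per-round changes and then control each change using the estimates already assembled for Lemma~\ref{lem::onestep_markov}. Recall from the aggregation rule~\eqref{eqn::simple_average} that the one-round change satisfies $\bar\theta^{(k+1)}-\bar\theta^{(k)}=\frac{\alpha}{MH}\sum_{i=1}^M\sum_{\ell=0}^{H-1}G_i(\theta_{i,\ell}^{(k)})$, so that $\bar\theta^{(t)}-\bar\theta^{(t-\bar\tau)}=\sum_{k=t-\bar\tau}^{t-1}\paren{\bar\theta^{(k+1)}-\bar\theta^{(k)}}$. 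Applying the Jensen inequality $\norm{\sum_{k} x_k}^2\le\bar\tau\sum_{k}\norm{x_k}^2$ over the $\bar\tau$ summands gives $\E\norm{\bar\theta^{(t)}-\bar\theta^{(t-\bar\tau)}}^2\le\bar\tau\sum_{k=t-\bar\tau}^{t-1}\E\norm{\bar\theta^{(k+1)}-\bar\theta^{(k)}}^2$. Each squared one-round change $\E\norm{\bar\theta^{(k+1)}-\bar\theta^{(k)}}^2$ is \emph{exactly} the term $U_2$ analyzed inside the proof of Lemma~\ref{lem::onestep_markov}; since $t\ge 2\bar\tau$, every index $k$ in the sum obeys $k\ge\bar\tau$, so the variance-reduction bound of Lemma~\ref{lem::vr_markov} is applicable. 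Combining that bound with the deterministic drift estimate of Lemma~\ref{lem::drift} (to dispose of the consensus-error sum) yields, for each such $k$, $\E\norm{\bar\theta^{(k+1)}-\bar\theta^{(k)}}^2\le\bigo{L^2\alpha^2}d_k+\bigo{L^4\alpha^4\sigma^2+L^2\alpha^2\sigma^2/(MH(1-\rho))}$, where I write $d_k:=\E\norm{\bar\theta^{(k)}-\theta^\star}^2$.

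Summing over the $\bar\tau$ rounds and multiplying by the outer $\bar\tau$, the purely noise-driven contribution becomes $\bigo{\bar\tau^2 L^4\alpha^4\sigma^2+\bar\tau^2 L^2\alpha^2\sigma^2/(MH(1-\rho))}$, which already matches the two additive terms in the statement. The only remaining piece is the iterate-dependent contribution $\bigo{\bar\tau L^2\alpha^2}\sum_{k=t-\bar\tau}^{t-1}d_k$, which must be converted into a multiple of $d_t=\E\norm{\bar\theta^{(t)}-\theta^\star}^2$.

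The main obstacle, and the crux of the argument, is precisely this conversion: a naive split $d_k\le 2d_t+2\E\norm{\bar\theta^{(t)}-\bar\theta^{(k)}}^2$ reintroduces a quantity of the \emph{same type} we are trying to bound, and any crude deterministic (Lipschitz) estimate of $\E\norm{\bar\theta^{(t)}-\bar\theta^{(k)}}^2$ produces a noise contribution lacking the $1/(MH(1-\rho))$ variance-reduction factor, which is too coarse to fit the target in the many-agent/fast-mixing regime. To circumvent this, I would run a self-bounding (absorption) argument over the horizon length. Define $\bar W:=\max_{0\le n\le\bar\tau}\E\norm{\bar\theta^{(t)}-\bar\theta^{(t-n)}}^2$. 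Repeating the telescoping-plus-one-step bound above for \emph{every} horizon $m\le\bar\tau$, and invoking $d_k\le 2d_t+2\bar W$ for all $k\in[t-\bar\tau,t-1]$ (legitimate because $t-k\le\bar\tau$), one obtains for each such $m$ a bound of order $\bigo{\bar\tau^2 L^2\alpha^2}\paren{d_t+\bar W}+\bigo{\bar\tau^2 L^4\alpha^4\sigma^2+\bar\tau^2 L^2\alpha^2\sigma^2/(MH(1-\rho))}$; taking the maximum over $m$ turns this into the single self-referential inequality $\bar W\le\bigo{\bar\tau^2 L^2\alpha^2}d_t+\bigo{\bar\tau^2 L^2\alpha^2}\bar W+\bigo{\bar\tau^2 L^4\alpha^4\sigma^2+\bar\tau^2 L^2\alpha^2\sigma^2/(MH(1-\rho))}$.

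The step-size condition $\eta\le 1/(\bar\tau\tilde{C}HL^2)$ is exactly what closes the loop: since $\alpha=H\eta$ with $\alpha_g=1$, it forces $\bar\tau\alpha L^2\le 1/\tilde{C}$, whence $\bar\tau^2 L^2\alpha^2\le\bar\tau^2 L^4\alpha^2=(\bar\tau\alpha L^2)^2\le 1/\tilde{C}^2\le 1/2$ for $\tilde{C}$ a large enough universal constant (using $L\ge 1$). This permits moving the $\bigo{\bar\tau^2 L^2\alpha^2}\bar W$ term to the left and dividing, giving $\bar W\le\bigo{\bar\tau^2 L^2\alpha^2}d_t+\bigo{\bar\tau^2 L^4\alpha^4\sigma^2+\bar\tau^2 L^2\alpha^2\sigma^2/(MH(1-\rho))}$, and since $\E\norm{\bar\theta^{(t)}-\bar\theta^{(t-\bar\tau)}}^2\le\bar W$, the claim follows. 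I expect the self-referential absorption to be the delicate point, both in justifying that the same-type term on the right is genuinely dominated (which hinges entirely on the smallness of $\bar\tau^2 L^2\alpha^2$ afforded by the step-size) and in carefully tracking the powers of $\bar\tau$ introduced by the Cauchy--Schwarz step, the outer sum, and the error conversion, so that the $1/(MH(1-\rho))$ variance-reduction structure is preserved in the final bound.
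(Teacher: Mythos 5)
Your proposal is correct, and it shares the paper's skeleton: both arguments telescope $\bar\theta^{(t)}-\bar\theta^{(t-\bar\tau)}$ over the $\bar\tau$ rounds, apply Jensen to get $\bar\tau\sum_{k}\E\bracket{\norm{\bar\theta^{(k+1)}-\bar\theta^{(k)}}^2}$, identify each per-round change with the term $U_2$ from the proof of Lemma~\ref{lem::onestep_markov} (bounded via Lemma~\ref{lem::vr_markov} together with Lemma~\ref{lem::drift}, legitimately applied since $k\geq t-\bar\tau\geq\bar\tau$), and then close a self-referential inequality by absorption under the step-size condition. The genuine difference lies in how the iterate terms $d_k:=\E\bracket{\norm{\bar\theta^{(k)}-\theta^\star}^2}$ inside the window are converted to $d_t$. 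The paper first derives an auxiliary growth recursion, $d_{k+1}\leq\paren{1+\bigo{L^2\alpha}}d_k+\text{noise}$, and iterates it forward from $t-\bar\tau$ so that every $d_k$ in the window is bounded by $\bigo{d_{t-\bar\tau}}$ plus $\bar\tau$-accumulated noise; only at the very end does it perform a single add-subtract $d_{t-\bar\tau}\lesssim\E\bracket{\norm{\bar\theta^{(t-\bar\tau)}-\bar\theta^{(t)}}^2}+d_t$ and absorb the one self-referential term. You instead add-subtract each $d_k\leq 2d_t+2\bar W$ directly, paying with the max-over-horizons quantity $\bar W$, and absorb $\bar W$ itself. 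Your route avoids the auxiliary recursion and the intermediate noise terms it generates (e.g., $\bar\tau L^6\alpha^5\sigma^2$ and $\bar\tau L^4\alpha^3\sigma^2/(MH(1-\rho))$, which the paper must then re-absorb using $\alpha\leq 1/(L^2\bar\tau)$), at the cost of the extra bookkeeping of running the telescoping bound uniformly over all horizons $m\leq\bar\tau$; the resulting step-size requirement (smallness of $\bar\tau^2L^2\alpha^2$) is identical in both proofs, and both preserve the $1/(MH(1-\rho))$ variance-reduction structure because the noise enters only through the $U_2$ bound. Two points you should make explicit in a full write-up: (i) the absorption step $\bar W\leq A+B\bar W\Rightarrow\bar W\leq A/(1-B)$ requires $\bar W<\infty$, which holds because after finitely many Lipschitz updates all iterates have finite second moments (the paper implicitly relies on the same fact when it moves the self-referential term to the left-hand side); and (ii) the hidden constant multiplying $\bar\tau^2L^2\alpha^2\bar W$ must be tracked so that $\Tilde{C}$ can be chosen to dominate it, exactly as the paper does with its constant $\bar C$.
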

\begin{proof}
Observe that
\begin{equation}
\begin{aligned}
    \norm{\bar\theta^{(t+1)}-\theta^\star}^2&=\left\|\bar\theta^{(t)}-\theta^\star\right\|_2^2+2\alpha{\left\langle\bar\theta^{(t)}-\theta^\star, \frac{1}{MH}\sum_{i=1}^M\sum_{\ell=0}^{H-1} G_{i}(\theta_{i,\ell}^{(t)})\right\rangle}+\alpha^2{\left\|\frac{1}{MH}\sum_{i=1}^M\sum_{\ell=0}^{H-1} G_{i}(\theta_{i,\ell}^{(t)})\right\|_2^2}\\
    &\leq\left\|\bar\theta^{(t)}-\theta^\star\right\|_2^2+\alpha{\left\|\frac{1}{MH}\sum_{i=1}^M\sum_{\ell=0}^{H-1} G_{i}(\theta_{i,\ell}^{(t)})\right\|_2^2}+\alpha\left\|\bar\theta^{(t)}-\theta^\star\right\|_2^2+\alpha^2\norm{\frac{1}{MH}\sum_{i=1}^M\sum_{\ell=0}^{H-1} G_{i}(\theta_{i,\ell}^{(t)})}^2\\
    &\leq (1+\alpha)\left\|\bar\theta^{(t)}-\theta^\star\right\|_2^2+2\alpha\norm{\frac{1}{MH}\sum_{i=1}^M\sum_{\ell=0}^{H-1} G_{i}(\theta_{i,\ell}^{(t)})}^2,
\end{aligned}
\end{equation}
where we used the fact that $\alpha\leq 1$, and $2\iprod{a}{b}\leq \norm{a}^2+\norm{b}^2$, for all $(a,b)\in\R^d\times\R^d$. Taking expectation of both sides and plugging in the bound for $2\E[U_2]/\alpha$ in Eq.~\eqref{eqn::U2} yields

\begin{equation}
\begin{aligned}
    \E\bracket{\norm{\bar\theta^{(t+1)}-\bar\theta^\star}^2}&\leq (1+\alpha)\E\bracket{\left\|\bar\theta^{(t)}-\theta^\star\right\|_2^2} + \bigo{\frac{L^2\alpha}{MH}}\sum_{i=1}^M\sum_{\ell=0}^{H-1}\E\bracket{\norm{\theta_{i,\ell}^{(t)}-\bar\theta^{(t)}}^2}+\bigo{L^2\alpha}\E\bracket{\norm{\bar\theta^{(t)}-\theta^\star}^2}\\
    &\quad +\bigo{L^2\alpha^5\sigma^2}+\bigo{\frac{L^2\alpha\sigma^2}{MH(1-\rho)}}\\
    &\leq (1+\alpha)\E\bracket{\left\|\bar\theta^{(t)}-\theta^\star\right\|_2^2} + \bigo{L^4\alpha^3}\paren{\E\bracket{\norm{\bar\theta^{(t)}-\theta^\star}^2}+\sigma^2}+\bigo{L^2\alpha}\E\bracket{\norm{\bar\theta^{(t)}-\theta^\star}^2}\\
    &\quad +\bigo{L^2\alpha^5\sigma^2}+\bigo{\frac{L^2\alpha\sigma^2}{MH(1-\rho)}}\\
    &\leq \paren{1+\alpha+\bigo{L^4\alpha^3+L^2\alpha}}\E\bracket{\left\|\bar\theta^{(t)}-\theta^\star\right\|_2^2}+\bigo{L^2\alpha^5\sigma^2}+\bigo{\frac{L^2\alpha\sigma^2}{MH(1-\rho)}}\\
    &\quad+\bigo{L^4\alpha^3\sigma^2}\\
    &\leq \paren{1+\bigo{L^2\alpha}}\E\bracket{\left\|\bar\theta^{(t)}-\theta^\star\right\|_2^2}+\bigo{L^4\alpha^3\sigma^2}+\bigo{\frac{L^2\alpha\sigma^2}{MH(1-\rho)}},
\end{aligned}
\end{equation}
where in the second inequality we used Lemma~\ref{lem::drift} and we selected $\bar\tau$ large enough such that $\rho^{\bar\tau}\leq \alpha^2$, and in the last inequlaity we used the fact that $\alpha L\leq 1$.

Therefore, for any $t'\in[t-\bar\tau, t]$, we can write
\begin{equation}
\begin{aligned}
    \E\bracket{\left\|\bar\theta^{(t')}-\theta^\star\right\|_2^2}&\leq \paren{1+\bigo{L^2\alpha}}^{\bar\tau}\E\bracket{\left\|\bar\theta^{(t-\tau)}-\theta^\star\right\|_2^2} + \sum_{\ell=0}^{\tau-1}\paren{1+\bigo{L^2\alpha}}^\ell\bigo{L^4\alpha^3\sigma^2+\frac{L^2\alpha\sigma^2}{MH(1-\rho)}}\\
    &\leq \bigo{\E\bracket{\left\|\bar\theta^{(t-\tau)}-\theta^\star\right\|_2^2}}+\tau\bigo{L^4\alpha^3\sigma^2+\frac{L^2\alpha\sigma^2}{MH(1-\rho)}},\label{eqn::t'}
\end{aligned}
\end{equation}
where we selected $\alpha\leq 1/(\Tilde{C}L^2\bar\tau)$, where $\Tilde{C}\geq 1$ is the dominant constant in $\bigo{L^2\alpha}$, such that $\paren{1+\bigo{L^2\alpha}}^{\bar\tau}\leq(1+1/\bar\tau)^{\bar\tau}\leq e$.

Next, observe that
\begin{equation}
\begin{aligned}
    \E\bracket{\left\|\bar\theta^{(t)}-\bar\theta^{(t-\bar\tau)}\right\|_2^2}&\leq \bar\tau\sum_{m=t-\bar\tau}^{t-1}\E\bracket{\left\|\bar\theta^{(m+1)}-\bar\theta^{(m)}\right\|_2^2}\\
    &=\bar\tau\sum_{m=t-\bar\tau}^{t-1}\E\bracket{\alpha^2\norm{\frac{1}{MH}\sum_{i=1}^M\sum_{\ell=0}^{H-1} G_{i}(\theta_{i,\ell}^{(m)})}^2}\\
    &\overset{(a)}{\leq} \bar\tau\sum_{m=t-\bar\tau}^{t-1}\bigo{L^2\alpha^2\E\bracket{\norm{\bar\theta^{(m)}-\theta^\star}^2}+L^4\alpha^4\sigma^2+\frac{L^2\alpha^2\sigma^2}{MH(1-\rho)}}\\
    &\overset{(b)}{\leq} \bar\tau\sum_{m=t-\bar\tau}^{t-1}\bigo{L^2\alpha^2\E\bracket{\norm{\bar\theta^{(t-\bar\tau)}-\theta^\star}^2}+\bar\tau L^6\alpha^5\sigma^2+\frac{\bar\tau L^4\alpha^3\sigma^2}{MH(1-\rho)}+L^4\alpha^4\sigma^2+\frac{L^2\alpha^2\sigma^2}{MH(1-\rho)}}\\
    &\overset{(c)}{\leq} \bar\tau\sum_{m=t-\bar\tau}^{t-1}\bigo{L^2\alpha^2\E\bracket{\norm{\bar\theta^{(t-\bar\tau)}-\theta^\star}^2}+L^4\alpha^4\sigma^2+\frac{L^2\alpha^2\sigma^2}{MH(1-\rho)}}\\
    &\leq \bigo{\bar\tau^2L^2\alpha^2}\E\bracket{\norm{\bar\theta^{(t-\bar\tau)}-\theta^\star}^2}+\bigo{\bar\tau^2L^4\alpha^4\sigma^2+\frac{\bar\tau^2L^2\alpha^2\sigma^2}{MH(1-\rho)}}\\
    &\leq \bigo{\bar\tau^2L^2\alpha^2}\E\bracket{\norm{\bar\theta^{(t-\bar\tau)}-\bar\theta^{(t)}}^2}+\bigo{\bar\tau^2L^2\alpha^2}\E\bracket{\norm{\bar\theta^{(t)}-\theta^\star}^2}\\
    &\, +\bigo{\bar\tau^2L^4\alpha^4\sigma^2 +\frac{\bar\tau^2L^2\alpha^2\sigma^2}{MH(1-\rho)}},
\end{aligned}
\end{equation}
where in $(a)$ we used the bound for $\E[U_2]$ and plugged in Lemma~\ref{lem::drift}, i.e.,
\begin{equation}
    \E[U_{2}]\leq \bigo{L^2\alpha^2}\E\bracket{\norm{\bar\theta^{(t)}-\theta^\star}^2}+\bigo{L^4\alpha^4\sigma^2}+\bigo{\frac{L^2\alpha^2\sigma^2}{MH(1-\rho)}}.\label{eqn::U2_plugged}
\end{equation}
Note here that we are allowed to use the bound for $\E[U_2]$ because we require $t\geq 2\bar\tau$, and thus $m\geq t-\bar\tau\geq \bar\tau$; in $(b)$ we used the result in Eq.~\eqref{eqn::t'}, and in $(c)$ we selected $\alpha\leq 1/(L^2\bar\tau)$ such that $\bar\tau L^6\alpha^5\sigma^2\leq L^4\alpha^4\sigma^2$ and $\bar\tau L^4\alpha^3\sigma^2\leq L^2\alpha^2\sigma^2$.

Suppose that the dominating constant in $\bigo{\bar\tau^2L^2\alpha^2}\E\bracket{\norm{\bar\theta^{(t-1)}-\bar\theta^{(t)}}^2}$ is $\bar C\geq 1$. We can then write
\begin{equation}
    (1-\bar C\bar\tau^2L^2\alpha^2)\E\bracket{\norm{\bar\theta^{(t)}-\bar\theta^{(t-1)}}^2}\leq \bigo{\bar\tau^2L^2\alpha^2}\E\bracket{\norm{\bar\theta^{(t)}-\theta^\star}^2}+\bigo{\bar\tau^2L^4\alpha^4\sigma^2+\frac{\bar\tau^2L^2\alpha^2\sigma^2}{MH(1-\rho)}}.
\end{equation}
By selecting $\alpha$ such that $\bar C\bar\tau^2L^2\alpha^2\leq 1/2$, i.e., $\alpha\leq 1/(\sqrt{2\bar C} \bar\tau L)$ we obtain
\begin{equation}
    \E\bracket{\norm{\bar\theta^{(t)}-\bar\theta^{(t-\bar\tau)}}^2}\leq \bigo{\bar\tau^2L^2\alpha^2}\E\bracket{\norm{\bar\theta^{(t)}-\theta^\star}^2}+\bigo{\bar\tau^2L^4\alpha^4\sigma^2+\frac{\bar\tau^2L^2\alpha^2\sigma^2}{MH(1-\rho)}}.
\end{equation} 
The proof is then complete.
\end{proof}

With Lemma~\ref{lem::diff_tau} at hand, we can arrive at the following lemma that bounds the Markovian bias:
\begin{lemma}\label{lem::markov_bias}
Suppose all the conditions in Lemma~\ref{lem::diff_tau} hold. Then the following is true for \texttt{FedHSA} for all $t\geq 2\bar\tau$:
\begin{equation}
\begin{aligned}
    \overbrace{\E\bracket{\iprod{\bar\theta^{(t)}-\theta^\star}{\frac{2\alpha}{MH}\sum_{i=1}^M\sum_{\ell=0}^{H-1} \paren{G_{i}(\theta_{i,\ell}^{(t)})-\bar G_{i}(\theta_{i,\ell}^{(t)})}}}}^{C_1}&\leq \paren{\frac{\alpha\mu}{2}+\bigo{\bar\tau L^2\alpha^2+\frac{L^4\alpha^3}{\mu}}}\E\bracket{\norm{\bar\theta^{(t)}-\theta^\star}^2}\\
    &\quad+\bigo{\frac{  L^4 \sigma^2\alpha^3}{\mu}+\frac{\tau L^2\sigma^2\alpha^2}{MH(1-\rho)}}.
\end{aligned}
\end{equation}
\end{lemma}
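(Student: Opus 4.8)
The plan is to control the temporal correlation in $C_1$ by comparing the live iterates against a reference that is frozen $\bar\tau$ rounds in the past, and then to invoke the geometric-mixing estimate of Corollary~\ref{corr::markov} on the resulting conditionally-decoupled remainder. Writing $e_{i,\ell}^{(t)}:=G_i(\theta_{i,\ell}^{(t)})-\bar G_i(\theta_{i,\ell}^{(t)})$, I first split the left slot of the inner product as
\begin{equation}
C_1=\frac{2\alpha}{MH}\sum_{i=1}^M\sum_{\ell=0}^{H-1}\E\bracket{\iprod{\bar\theta^{(t)}-\bar\theta^{(t-\bar\tau)}}{e_{i,\ell}^{(t)}}}+\frac{2\alpha}{MH}\sum_{i=1}^M\sum_{\ell=0}^{H-1}\E\bracket{\iprod{\bar\theta^{(t-\bar\tau)}-\theta^\star}{e_{i,\ell}^{(t)}}}.
\nonumber
\end{equation}
The first sum is dispatched by Young's inequality~\eqref{eqn::cs}: the factor $\norm{\bar\theta^{(t)}-\bar\theta^{(t-\bar\tau)}}^2$ is exactly what Lemma~\ref{lem::diff_tau} controls, while $\norm{e_{i,\ell}^{(t)}}^2=\bigo{L^2}\paren{\norm{\theta_{i,\ell}^{(t)}-\theta^\star}^2+\sigma^2}$ by Corollary~\ref{corr:diff}, whose drift part $\norm{\theta_{i,\ell}^{(t)}-\bar\theta^{(t)}}^2$ is absorbed by Lemma~\ref{lem::drift}. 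Both lemmas output $\bigo{\alpha^2}$-scaled quantities, so this sum contributes only higher-order-in-$\alpha$ iterate terms together with a noise term inheriting the linear-speedup factor $1/(MH(1-\rho))$ from Lemma~\ref{lem::diff_tau}.

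For the second sum I peel off the frozen reference inside the operators, writing
\begin{equation}
e_{i,\ell}^{(t)}=\paren{G_i(\theta_{i,\ell}^{(t)})-G_i(\bar\theta^{(t-\bar\tau)})}-\paren{\bar G_i(\theta_{i,\ell}^{(t)})-\bar G_i(\bar\theta^{(t-\bar\tau)})}+\paren{G_i(\bar\theta^{(t-\bar\tau)},o_{i,\ell}^{(t)})-\bar G_i(\bar\theta^{(t-\bar\tau)})}.
\nonumber
\end{equation}
By Assumption~\ref{ass::smoothness} the first two (Lipschitz) differences have norm at most $2L\norm{\theta_{i,\ell}^{(t)}-\bar\theta^{(t-\bar\tau)}}\leq 2L\paren{\norm{\theta_{i,\ell}^{(t)}-\bar\theta^{(t)}}+\norm{\bar\theta^{(t)}-\bar\theta^{(t-\bar\tau)}}}$. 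Pairing them against $\bar\theta^{(t-\bar\tau)}-\theta^\star$ through~\eqref{eqn::cs} with Young parameter $\xi=\Theta(\mu)$ is precisely the step that produces the dominant $\tfrac{\alpha\mu}{2}\E\norm{\bar\theta^{(t)}-\theta^\star}^2$ term of the claim (after bounding $\norm{\bar\theta^{(t-\bar\tau)}-\theta^\star}^2\leq 2\norm{\bar\theta^{(t)}-\theta^\star}^2+2\norm{\bar\theta^{(t)}-\bar\theta^{(t-\bar\tau)}}^2$); its conjugate $\tfrac{1}{2\xi}=\Theta(1/\mu)$ then multiplies the drift (Lemma~\ref{lem::drift}) and iterate-difference (Lemma~\ref{lem::diff_tau}) bounds, yielding the $\bigo{L^4\alpha^3/\mu}$ coefficient and the two noise terms.

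The heart of the argument is the last, ``clean'' difference $G_i(\bar\theta^{(t-\bar\tau)},o_{i,\ell}^{(t)})-\bar G_i(\bar\theta^{(t-\bar\tau)})$. Here $\bar\theta^{(t-\bar\tau)}$ is measurable with respect to all randomness up to the start of round $t-\bar\tau$, whereas the observation $o_{i,\ell}^{(t)}$ is made at global time-step $tH+\ell$, which exceeds $(t-\bar\tau)H$ by $\bar\tau H+\ell\geq\bar\tau$, hence is at least $\bar\tau$ steps later. I therefore condition on that past $\sigma$-algebra — under which $\bar\theta^{(t-\bar\tau)}$ is deterministic — and use the Markov property together with Corollary~\ref{corr::markov} applied at the fixed point $\theta=\bar\theta^{(t-\bar\tau)}$ to obtain $\norm{\E\bracket{G_i(\bar\theta^{(t-\bar\tau)},o_{i,\ell}^{(t)})\mid\cdot}-\bar G_i(\bar\theta^{(t-\bar\tau)})}\leq\bigo{L\rho^{\bar\tau}}\paren{\norm{\bar\theta^{(t-\bar\tau)}-\theta^\star}+\sigma}$. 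Since $\bar\tau=\tau(\alpha^2)$ forces $\rho^{\bar\tau}\leq\alpha^2$, a final Cauchy--Schwarz/Young step turns this into an $\bigo{L\alpha^3}\paren{\E\norm{\bar\theta^{(t-\bar\tau)}-\theta^\star}^2+\sigma^2}$ contribution, of exactly the claimed order. Note that, in contrast with the variance-reduction Lemma~\ref{lem::vr_markov}, the clean term is a plain sum of per-agent, per-step expectations, so no cross-agent cancellation and hence no appeal to Assumption~\ref{ass::independence} is required here.

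The step I expect to be most delicate is the bookkeeping enforced by the step-size budget. Each replacement spawns stray factors of $\bar\tau$ and $\alpha$ (e.g.\ $\bar\tau^2\alpha^3$ or $\bar\tau^2\alpha^4$ terms from chaining Lemmas~\ref{lem::drift} and~\ref{lem::diff_tau}), and I must check that under $\eta\leq\mu/(C'\bar\tau L^2H)$ — equivalently $\bar\tau L^2\alpha/\mu\lesssim 1$ — these all collapse into the three advertised orders $\bar\tau L^2\alpha^2$, $L^4\alpha^3/\mu$, and the speedup-preserving $\bar\tau L^2\sigma^2\alpha^2/(MH(1-\rho))$. The second subtlety is guaranteeing that the \emph{only} order-$\alpha$ (as opposed to order-$\alpha^2$ or higher) contribution to the coefficient of $\E\norm{\bar\theta^{(t)}-\theta^\star}^2$ is the single $\tfrac{\alpha\mu}{2}$ harvested from $\xi=\Theta(\mu)$; every other contribution must be higher order so that it can subsequently be dominated by the contractive ``good'' term $-\alpha\mu\E\norm{\bar\theta^{(t)}-\theta^\star}^2$ in the one-step recursion of Lemma~\ref{lem::onestep_markov}.
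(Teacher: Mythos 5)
Your overall architecture matches the paper's: the same split of $C_1$ at the frozen reference $\bar\theta^{(t-\bar\tau)}$, the same use of Corollary~\ref{corr::markov} on the ``clean'' term $G_i(\bar\theta^{(t-\bar\tau)},o_{i,\ell}^{(t)})-\bar G_i(\bar\theta^{(t-\bar\tau)})$ with $\rho^{\bar\tau}\le\alpha^2$, and the same $\Theta(\mu)$ Young parameter (the paper's $\beta=8/\mu$) harvesting the single $\tfrac{\alpha\mu}{2}$ term. However, your treatment of the first sum, $D_1=\E\bracket{\iprod{\bar\theta^{(t)}-\bar\theta^{(t-\bar\tau)}}{\tfrac{2\alpha}{MH}\sum_{i,\ell}e_{i,\ell}^{(t)}}}$, has a genuine gap. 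You bound $\norm{e_{i,\ell}^{(t)}}^2$ \emph{per sample} via Corollary~\ref{corr:diff}, but that bound is $\bigo{L^2}\paren{\norm{\bar\theta^{(t)}-\theta^\star}^2+\sigma^2}$ up to drift: its $\sigma^2$ and iterate components carry \emph{no} factor of $\alpha$ and no factor of $1/(MH)$ --- contrary to your claim that ``both lemmas output $\bigo{\alpha^2}$-scaled quantities.'' Run the bookkeeping: Young with parameter $\xi$ gives $D_1\lesssim \alpha\xi\,\E\bracket{\norm{\bar\theta^{(t)}-\bar\theta^{(t-\bar\tau)}}^2}+\tfrac{\alpha L^2}{\xi}\paren{\E\bracket{\norm{\bar\theta^{(t)}-\theta^\star}^2}+\sigma^2}$. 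To push the second noise term down to the admissible orders ($\bigo{L^4\alpha^3\sigma^2/\mu}$ or $\bigo{\bar\tau L^2\alpha^2\sigma^2/(MH(1-\rho))}$) you need $\xi\gtrsim \mu/(L^2\alpha^2)$; but then, feeding Lemma~\ref{lem::diff_tau} into the first term produces an iterate coefficient of order $\mu\bar\tau^2\alpha$ and a noise term of order $\mu\bar\tau^2\alpha\sigma^2/(MH(1-\rho))$ --- both order $\alpha$, not $\alpha^2$, so they can neither be dominated by the contraction in Lemma~\ref{lem::onestep_markov} nor yield a vanishing error floor. No choice of $\xi$ escapes this. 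The resolution, which is what the paper does, is to \emph{not} break up the sum: bound $D_1\le \tfrac{1}{\bar\tau}\E\bracket{\norm{\bar\theta^{(t)}-\bar\theta^{(t-\bar\tau)}}^2}+\bar\tau\alpha^2\E\bracket{\norm{\tfrac{1}{MH}\sum_{i,\ell}e_{i,\ell}^{(t)}}^2}$ and control the second factor with the variance-reduction Lemma~\ref{lem::vr_markov}, whose cross-term cancellation (geometric mixing within each chain plus Assumption~\ref{ass::independence} across agents) is precisely what produces the $1/(MH(1-\rho))$ scaling for this piece. The speedup factor cannot be ``inherited from Lemma~\ref{lem::diff_tau}'' alone, because the other side of Young must also be speedup-compatible.

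A second, smaller error: your closing remark that the clean term requires ``no appeal to Assumption~\ref{ass::independence}'' is incorrect. In the paper's bound on the corresponding term $F_1$, the conditioning is on $\cF_{-1}^{(t-\bar\tau)}$, the \emph{joint} filtration of all agents (it must be, since $\bar\theta^{(t-\bar\tau)}$ is a function of every agent's history). Corollary~\ref{corr::markov}, however, conditions only on agent $i$'s own past observation. To replace conditioning on the joint past by conditioning on agent $i$'s own last state --- so that the corollary applies --- one needs the observation processes of different agents to be independent, which is exactly Assumption~\ref{ass::independence}; the paper invokes it at that step. So while per-agent, per-step expectations indeed suffice (no cross-agent cancellation is exploited there), cross-agent \emph{independence} is still needed for the conditioning argument itself.
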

\begin{proof}
Observe that 
\begin{equation}
\begin{aligned}
    \overbrace{\E\bracket{\iprod{\bar\theta^{(t)}-\theta^\star}{\frac{2\alpha}{MH}\sum_{i=1}^M\sum_{\ell=0}^{H-1} \paren{G_{i}(\theta_{i,\ell}^{(t)})-\bar G_{i}(\theta_{i,\ell}^{(t)})}}}}^{C_1} &= \overbrace{\E\bracket{\iprod{\bar\theta^{(t)}-\bar\theta^{(t-\bar\tau)}}{\frac{2\alpha}{MH}\sum_{i=1}^M\sum_{\ell=0}^{H-1} \paren{G_{i}(\theta_{i,\ell}^{(t)})-\bar G_{i}(\theta_{i,\ell}^{(t)})}}}}^{D_1}\\
    &\ +\overbrace{\E\bracket{\iprod{\bar\theta^{(t-\bar\tau)}-\theta^\star}{\frac{2\alpha}{MH}\sum_{i=1}^M\sum_{\ell=0}^{H-1} \paren{G_{i}(\theta_{i,\ell}^{(t)})-\bar G_{i}(\theta_{i,\ell}^{(t)})}}}}^{D_2}.
\end{aligned}
\end{equation}
We then continue to bound the terms $D_1$ and $D_2$ separately. For the term $D_1$, we have
\begin{equation}
\begin{aligned}
    D_1&\leq \frac{1}{\bar\tau}\E\bracket{\norm{\bar\theta^{(t)}-\bar\theta^{(t-\bar\tau)}}^2}  + \bar\tau\alpha^2\E\bracket{\norm{\frac{1}{MH}\sum_{i=1}^M\sum_{\ell=0}^{H-1} \paren{G_{i}(\theta_{i,\ell}^{(t)})-\bar G_{i}(\theta_{i,\ell}^{(t)})}}^2}\\
    &\leq \bigo{\bar\tau L^2\alpha^2}\E\bracket{\norm{\bar\theta^{(t)}-\theta^\star}^2}+\bigo{\bar\tau L^4\alpha^4\sigma^2+\frac{\bar\tau L^2\alpha^2\sigma^2}{MH(1-\rho)}} \\
    &\quad +\bigo{\bar\tau L^2\alpha^2}\E\bracket{\norm{\bar\theta^{(t)}-\theta^\star}^2}+\bigo{\bar\tau L^4\alpha^4\sigma^2}+\bigo{\frac{\bar\tau L^2\alpha^2\sigma^2}{MH(1-\rho)}}\\
    &\leq \bigo{\bar\tau L^2\alpha^2}\E\bracket{\norm{\bar\theta^{(t)}-\theta^\star}^2}+\bigo{\bar\tau L^4\alpha^4\sigma^2+\frac{\bar\tau L^2\alpha^2\sigma^2}{MH(1-\rho)}},
\end{aligned}
\end{equation}
where we use Lemma~\ref{lem::diff_tau} and Lemma~\ref{lem::vr_markov} with the bound from Lemma~\ref{lem::drift} plugged in. Clearly the bound for $D_1$ is eligible for our goal.

For the term $D_2$, again, directly bounding will not suffice. Therefore, we introduce intermediate terms $\bar G_i(\bar\theta^{(t)})$ and $G_i(\bar\theta^{(t)}, o_{i,\ell}^{(t)})$.

\begin{equation}
\begin{aligned}
    D_2&=\overbrace{\E\bracket{\iprod{\bar\theta^{(t-\bar\tau)}-\theta^\star}{\frac{2\alpha}{MH}\sum_{i=1}^M\sum_{\ell=0}^{H-1} \paren{G_{i}(\bar\theta^{(t)},o_{i,\ell}^{(t)})-\bar G_{i}(\bar\theta^{(t)})}}}}^{E_1}\\
    &\ +\overbrace{\E\bracket{\iprod{\bar\theta^{(t-\bar\tau)}-\theta^\star}{\frac{2\alpha}{MH}\sum_{i=1}^M\sum_{\ell=0}^{H-1} \paren{G_{i}(\theta_{i,\ell}^{(t)},o_{i,\ell}^{(t)})- G_{i}(\bar\theta^{(t)},o_{i,\ell}^{(t)})}}}}^{E_2}\\
    &\ +\overbrace{\E\bracket{\iprod{\bar\theta^{(t-\bar\tau)}-\theta^\star}{\frac{2\alpha}{MH}\sum_{i=1}^M\sum_{\ell=0}^{H-1} \paren{\bar G_{i}(\bar\theta^{(t)})-\bar G_{i}(\theta_{i,\ell}^{(t)})}}}}^{E_3}.
\end{aligned}
\end{equation}
We then bound these three terms separately.

For the term $E_2$, we have
\begin{equation}
\begin{aligned}
    E_2&\leq \frac{\alpha}{\beta}\E\bracket{\norm{\bar\theta^{(t-\bar\tau)}-\theta^\star}^2}+\alpha\beta\E\bracket{\norm{\frac{1}{MH}\sum_{i=1}^M\sum_{\ell=0}^{H-1} \paren{G_{i}(\theta_{i,\ell}^{(t)},o_{i,\ell}^{(t)})- G_{i}(\bar\theta^{(t)},o_{i,\ell}^{(t)})}}^2}\\
    &\overset{(a)}{\leq} \frac{2\alpha}{\beta}\E\bracket{\norm{\bar\theta^{(t-\bar\tau)}-\bar\theta^{(t)}}^2}+\frac{2\alpha}{\beta}\E\bracket{\norm{\bar\theta^{(t)}-\theta^\star}^2}+\alpha\beta\frac{1}{MH}\sum_{i=1}^M\sum_{\ell=0}^{H-1}L^2\E\bracket{\norm{\theta_{i,\ell}^{(t)}-\bar\theta^{(t)}}^2}\\
    &\overset{(b)}{\leq}\bigo{\bar\tau^2L^2\alpha^3}\frac{1}{\beta}\E\bracket{\norm{\bar\theta^{(t)}-\theta^\star}^2}+\frac{1}{\beta}\bigo{\bar\tau^2L^4\alpha^5\sigma^2+\frac{\bar\tau^2L^2\alpha^3\sigma^2}{MH(1-\rho)}}+\frac{2\alpha}{\beta}\E\bracket{\norm{\bar\theta^{(t)}-\theta^\star}^2}\\
    &\quad +\bigo{\beta L^4\alpha^3}\paren{\E\bracket{\norm{\bar\theta^{(t)}-\theta^\star}^2}+\sigma^2}\\
    &\leq\paren{\frac{2\alpha}{\beta}+\bigo{\frac{\bar\tau^2L^2\alpha^3}{\beta}+\beta L^4\alpha^3}}\E\bracket{\norm{\bar\theta^{(t)}-\theta^\star}^2}+\bigo{\frac{\bar\tau^2L^4\sigma^2\alpha^5}{\beta}+\frac{\bar\tau^2L^2\sigma^2\alpha^3}{MH\beta(1-\rho)}+\beta L^4 \sigma^2\alpha^3},
\end{aligned}
\end{equation}
where we use Assumption~\ref{ass::smoothness} in $(a)$, Corollary~\ref{corr:diff} and the bound from Lemma~\ref{lem::diff_tau} in $(b)$. We can achieve identical bound for $E_3$ with the same reasoning. With a proper choice of the parameter $\beta$ (which will be made later), the bounds for $E_2$ and $E_3$ also comply with the requirement.

For the term $E_1$, we further decompose it into three terms:
\begin{equation}
\begin{aligned}
    E_1 &= \overbrace{\E\bracket{\iprod{\bar\theta^{(t-\bar\tau)}-\theta^\star}{\frac{2\alpha}{MH}\sum_{i=1}^M\sum_{\ell=0}^{H-1} \paren{G_{i}(\bar\theta^{(t-\bar\tau)},o_{i,\ell}^{(t)})-\bar G_{i}(\bar\theta^{(t-\bar\tau)})}}}}^{F_1}\\
    &\ +\overbrace{\E\bracket{\iprod{\bar\theta^{(t-\bar\tau)}-\theta^\star}{\frac{2\alpha}{MH}\sum_{i=1}^M\sum_{\ell=0}^{H-1} \paren{G_{i}(\bar\theta^{(t)},o_{i,\ell}^{(t)})- G_{i}(\bar\theta^{(t-\bar\tau)},o_{i,\ell}^{(t)})}}}}^{F_2}\\
    &\ +\overbrace{\E\bracket{\iprod{\bar\theta^{(t-\bar\tau)}-\theta^\star}{\frac{2\alpha}{MH}\sum_{i=1}^M\sum_{\ell=0}^{H-1} \paren{\bar G_{i}(\bar\theta^{(t-\bar\tau)})-\bar G_{i}(\bar\theta^{(t)})}}}}^{F_3}.
\end{aligned}
\end{equation}

For the term $F_2$, we have
\begin{equation}
\begin{aligned}
    F_2&\leq \E\bracket{2\alpha L\norm{\bar\theta^{(t-\bar\tau)}-\theta^\star}\norm{\bar\theta^{(t-\bar\tau)}-\bar\theta^{(t)}}}\\
    &\leq \alpha L\E\bracket{\bar\tau L\alpha\norm{\bar\theta^{(t-\bar\tau)}-\theta^\star}^2+\frac{1}{ \bar\tau L\alpha}\norm{\bar\theta^{(t-\bar\tau)}-\bar\theta^{(t)}}^2}\\
    &\leq\alpha L\bigo{\E\bracket{ \bar\tau L\alpha\norm{\bar\theta^{(t-\bar\tau)}-\bar\theta^{(t)}}^2+\bar\tau L\alpha\norm{\bar\theta^{(t)}-\theta^\star}^2+\frac{1}{\bar\tau L\alpha}\norm{\bar\theta^{(t-\bar\tau)}-\bar\theta^{(t)}}^2}}\\
    & \leq\alpha L\bigo{\E\bracket{ \bar\tau L\alpha\norm{\bar\theta^{(t)}-\theta^\star}^2+\frac{1}{\bar\tau L\alpha}\norm{\bar\theta^{(t-\bar\tau)}-\bar\theta^{(t)}}^2}}\\
    &\leq \bigo{\bar\tau L^2\alpha^2}\paren{\E\bracket{\norm{\bar\theta^{(t)}-\theta^\star}^2}}+\bigo{\bar\tau L^4\alpha^4\sigma^2+\frac{\bar\tau L^2\alpha^2\sigma^2}{MH(1-\rho)}},
\end{aligned}
\end{equation}
where we use Assumption 1, the fact that $\alpha\bar\tau L\leq 1$, and the bound from Lemma~\ref{lem::vr_markov}. Similarly, we can obtain the same bound for $F_3$. The bounds for $F_2,F_3$ satisfy the requirement as well.

Finally, for the term $F_1$, we have
\begin{equation}
\begin{aligned}
    F_1 &= \E\bracket{\E\bracket{\iprod{\bar\theta^{(t-\bar\tau)}-\theta^\star}{\frac{2\alpha}{MH}\sum_{i=1}^M\sum_{\ell=0}^{H-1} \paren{G_{i}(\bar\theta^{(t-\bar\tau)},o_{i,\ell}^{(t)})-\bar G_{i}(\bar\theta^{(t-\bar\tau)})}}\Bigg|\cF_{-1}^{(t-\bar\tau)}}}\\
    &\overset{(a)}{=}\E\bracket{\iprod{\bar\theta^{(t-\bar\tau)}-\theta^\star}{\frac{2\alpha}{MH}\sum_{i=1}^M\sum_{\ell=0}^{H-1} \paren{\E\bracket{G_{i}(\bar\theta^{(t-\bar\tau)},o_{i,\ell}^{(t)})\Bigg|\cF_{-1}^{(t-\bar\tau)}}-\bar G_{i}(\bar\theta^{(t-\bar\tau)})}}}\\
    &\leq \E\bracket{\frac{2\alpha}{MH}\sum_{i=1}^M\sum_{\ell=0}^{H-1} \norm{\bar\theta^{(t-\bar\tau)}-\theta^\star}\norm{\E\bracket{G_{i}(\bar\theta^{(t-\bar\tau)},o_{i,\ell}^{(t)})\Bigg|\cF_{-1}^{(t-\bar\tau)}}-\bar G_{i}(\bar\theta^{(t-\bar\tau)})}}\\
    &\overset{(b)}{\leq} \bigo{L\rho^{\bar\tau}\alpha}\E\bracket{\norm{\bar\theta^{(t-\bar\tau)}-\theta^\star}\paren{\norm{\bar\theta^{(t-\bar\tau)}-\theta^\star}+\sigma}}\\
    &=\bigo{L\rho^{\bar\tau}\alpha}\E\bracket{\norm{\bar\theta^{(t-\bar\tau)}-\theta^\star}^2}+\bigo{L\rho^{\bar\tau}\alpha\sigma}\E\bracket{\norm{\bar\theta^{(t-\bar\tau)}-\theta^\star}}\\
    &\leq \bigo{L\rho^{\bar\tau}\alpha}\E\bracket{\frac{1}{\alpha}\norm{\bar\theta^{(t-\bar\tau)}-\theta^\star}^2+2\sigma\norm{\bar\theta^{(t-\bar\tau)}-\theta^\star}+\alpha\sigma^2}\\
    &\overset{(c)}{\leq} \bigo{L\alpha^3}\E\bracket{\paren{\frac{1}{\sqrt{\alpha}}\norm{\bar\theta^{(t-\bar\tau)}-\theta^\star}+\sqrt{\alpha}\sigma}^2}\\
    &\leq \bigo{L\alpha^3}\E\bracket{\frac{1}{\alpha}\norm{\bar\theta^{(t-\bar\tau)}-\theta^\star}^2+\alpha\sigma^2}\\
    &=\bigo{L\alpha^2\E\bracket{\norm{\bar\theta^{(t-\bar\tau)}-\theta^\star}^2}+L\alpha^4\sigma^2}\\
    &\leq \bigo{L\alpha^2\E\bracket{\norm{\bar\theta^{(t-\bar\tau)}-\theta^{(t)}}^2}+L\alpha^2\E\bracket{\norm{\bar\theta^{(t)}-\theta^\star}^2}+L\alpha^4\sigma^2}\\
    &\overset{(d)}{\leq} \bigo{\bar\tau^2L^3\alpha^4\E\bracket{\norm{\bar\theta^{(t)}-\theta^\star}^2}+\bar\tau^2L^5\alpha^6\sigma^2+\frac{\bar\tau^2L^3\alpha^4\sigma^2}{MH(1-\rho)}+L\alpha^2\E\bracket{\norm{\bar\theta^{(t)}-\theta^\star}^2}+L\alpha^4\sigma^2}\\
    &\overset{(e)}{\leq} \bigo{L\alpha^2}\E\bracket{\norm{\bar\theta^{(t)}-\theta^\star}^2}+\bigo{L\alpha^4\sigma^2+\frac{\bar\tau^2L^3\alpha^4\sigma^2}{MH(1-\rho)}}.
\end{aligned}
\end{equation}
Here, in $(a)$ we used the fact that $\bar\theta^{(t-\bar\tau)}$ is deterministic conditioned on $\cF_{-1}^{(t-\bar\tau)}$, $(b)$ is a result of Corollary~\ref{corr::markov} and Assumption~\ref{ass::independence}, $(c)$ used the fact that $\rho^{\bar\tau}\leq\alpha^2$, $(d)$ used previous bounds for $\E\bracket{\norm{\bar\theta^{(t)}-\bar\theta^{(t-\bar\tau)}}^2}$, and $(e)$ used the fact that $\alpha \leq 1/(\bar\tau L^2)$, and hence $\bar\tau^2L^5\alpha^6\leq L \alpha^4$.

We now plug in the bounds recursively to arrive at the final bound for the Markovian bias term. First, plugging in the bounds for $E_1$ with $E_1=F_1+F_2+F_3$ yields
\begin{equation}
\begin{aligned}
    E_1&\leq \bigo{L \alpha^2}\E\bracket{\norm{\bar\theta^{(t)}-\theta^\star}^2}+\bigo{L\alpha^4\sigma^2}+\bigo{\frac{\bar\tau^2L^3\alpha^4\sigma^2}{MH(1-\rho)}}\\
    &\quad+\bigo{\bar\tau L^2\alpha^2}\paren{\E\bracket{\norm{\bar\theta^{(t)}-\theta^\star}^2}}+\bigo{\bar\tau L^4\alpha^4\sigma^2+\frac{\bar\tau L^2\alpha^2\sigma^2}{MH(1-\rho)}}\\
    &\leq \bigo{\bar\tau L^2\alpha^2}\E\bracket{\norm{\bar\theta^{(t)}-\theta^\star}^2}+\bigo{\bar\tau L^4\sigma^2\alpha^4}+\bigo{\frac{\bar\tau L^2\alpha^2\sigma^2}{MH(1-\rho)}},
\end{aligned}
\end{equation}
where we use the fact that $L\geq 1$, $\bar\tau\geq 1$ and $\alpha^2\leq 1/(\bar\tau L)$.

Second, with $D_2=E_1+E_2+E_3$, we obtain
\begin{equation}
\begin{aligned}
    D_2 &\leq \bigo{\bar\tau L^2\alpha^2}\E\bracket{\norm{\bar\theta^{(t)}-\theta^\star}^2}+\bigo{\bar\tau L^4\sigma^2\alpha^4}+\bigo{\frac{\bar\tau L^2\alpha^2\sigma^2}{MH(1-\rho)}}\\
    &\quad +\paren{\frac{4\alpha}{\beta}+\bigo{\frac{\bar\tau^2L^2\alpha^3}{\beta}+\beta L^4\alpha^3}}\E\bracket{\norm{\bar\theta^{(t)}-\theta^\star}^2}+\bigo{\frac{\bar\tau^2L^4\sigma^2\alpha^5}{\beta}+\frac{\bar\tau^2L^2\sigma^2\alpha^3}{MH\beta(1-\rho)}+\beta L^4 \sigma^2\alpha^3}\\
    &\leq \paren{\frac{4\alpha}{\beta}+\bigo{\bar\tau L^2\alpha^2+\frac{\bar\tau^2L^2\alpha^3}{\beta}+\beta L^4\alpha^3}}\E\bracket{\norm{\bar\theta^{(t)}-\theta^\star}^2}\\
    &\quad +\bigo{\frac{\bar\tau^2L^4\sigma^2\alpha^5}{\beta}+\frac{\bar\tau^2L^2\sigma^2\alpha^3}{MH\beta(1-\rho)}+\beta L^4 \sigma^2\alpha^3+\bar\tau L^4\sigma^2\alpha^4+\frac{\bar\tau L^2\sigma^2\alpha^2}{MH(1-\rho)}}.
\end{aligned}
\end{equation}

Finally, using $C_1=D_1+D_2$ yields
\begin{equation}
\begin{aligned}
    C_1&\leq \bigo{\bar\tau L^2\alpha^2}\E\bracket{\norm{\bar\theta^{(t)}-\theta^\star}^2}+\bigo{\bar\tau L^4\alpha^4\sigma^2+\frac{\bar\tau L^2\alpha^2\sigma^2}{MH(1-\rho)}}\\
    &\quad+\paren{\frac{4\alpha}{\beta}+\bigo{\bar\tau L^2\alpha^2+\frac{\bar\tau^2L^2\alpha^3}{\beta}+\beta L^4\alpha^3}}\E\bracket{\norm{\bar\theta^{(t)}-\theta^\star}^2}\\
    &\quad +\bigo{\frac{\bar\tau^2L^4\sigma^2\alpha^5}{\beta}+\frac{\bar\tau^2L^2\sigma^2\alpha^3}{MH\beta(1-\rho)}+\beta L^4 \sigma^2\alpha^3+\bar\tau L^4\sigma^2\alpha^4+\frac{\bar\tau L^2\sigma^2\alpha^2}{MH(1-\rho)}}\\
    &\leq \paren{\frac{4\alpha}{\beta}+\bigo{\bar\tau L^2\alpha^2+\frac{\bar\tau^2L^2\alpha^3}{\beta}+\beta L^4\alpha^3}}\E\bracket{\norm{\bar\theta^{(t)}-\theta^\star}^2}\\
    &\quad +\bigo{\frac{\bar\tau^2L^4\sigma^2\alpha^5}{\beta}+\frac{\bar\tau^2L^4\sigma^2\alpha^3}{MH\beta(1-\rho)}+\beta L^4 \sigma^2\alpha^3+\bar\tau L^4\sigma^2\alpha^4+\frac{\bar\tau L^2\sigma^2\alpha^2}{MH(1-\rho)}}.
\end{aligned}
\end{equation}

By selecting $4\alpha/\beta=\alpha\mu/2$, i.e., $\beta=8/\mu$, we obtain
\begin{equation}
\begin{aligned}
    C_1&\leq \paren{\frac{\alpha\mu}{2}+\bigo{\bar\tau L^2\alpha^2+\bar\tau^2L^2\mu\alpha^3+\frac{L^4\alpha^3}{\mu}}}\E\bracket{\norm{\bar\theta^{(t)}-\theta^\star}^2}\\
    &\quad +\bigo{\bar\tau^2L^4\mu\sigma^2\alpha^5+\frac{\bar\tau^2L^4\mu\sigma^2\alpha^3}{MH(1-\rho)}+\frac{ L^4 \sigma^2\alpha^3}{\mu}+\bar\tau L^4\sigma^2\alpha^4+\frac{\bar\tau L^2\sigma^2\alpha^2}{MH(1-\rho)}}\\
    &\leq \paren{\frac{\alpha\mu}{2}+\bigo{\bar\tau L^2\alpha^2+\frac{L^4\alpha^3}{\mu}}}\E\bracket{\norm{\bar\theta^{(t)}-\theta^\star}^2} +\bigo{\frac{  L^4 \sigma^2\alpha^3}{\mu}+\frac{\bar\tau L^2\sigma^2\alpha^2}{MH(1-\rho)}},
\end{aligned}
\end{equation}
where we use the fact that $\mu\leq 1$ and $\alpha\leq 1/\bar\tau$.
\end{proof}

\begin{lemma}\label{theorem::markov}
Suppose all the conditions in Lemma~\ref{lem::diff_tau} hold. Then the following holds for \texttt{FedHSA} for any {$t\geq 2\bar\tau$:}
\begin{equation}
\begin{aligned}
    \E\bracket{\norm{\bar\theta^{(t+1)}-\theta^\star}^2}\leq&\paren{1-\frac{\alpha\mu}{2}+\bigo{\bar\tau L^2\alpha^2}}\E\bracket{\norm{\bar\theta^{(t)}-\theta^\star}^2}+\bigo{\frac{\bar\tau \alpha^2L^2}{MH(1-\rho)}+\frac{\alpha^3L^4}{\mu}}\sigma^2.
\end{aligned}
\end{equation}
\end{lemma}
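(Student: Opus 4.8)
The plan is to combine the three preceding results --- the one-step recursion of Lemma~\ref{lem::onestep_markov}, the deterministic drift bound of Lemma~\ref{lem::drift}, and the Markovian bias bound of Lemma~\ref{lem::markov_bias} --- and then absorb all higher-order terms using the step-size restriction inherited from Lemma~\ref{lem::diff_tau}, which in particular gives $\alpha = H\eta \le \mu/(C'\bar\tau L^2)$. The starting point is the recursion in Lemma~\ref{lem::onestep_markov}, whose right-hand side already isolates four pieces: a contractive term with coefficient $1-\alpha\mu+\bigo{L^2\alpha^2}$, a drift sum weighted by $\bigo{L^2\alpha^2/(MH)+L^2\alpha/(\mu MH)}$, two harmless noise terms of orders $\alpha^6$ and $\alpha^2/(MH(1-\rho))$, and the bias term $T_{bias}$.

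First I would dispose of the drift sum. Since $\alpha_g=1$ gives $\alpha = H\eta$, Lemma~\ref{lem::drift} reads $\norm{\theta_{i,\ell}^{(t)}-\bar\theta^{(t)}}^2 \le \cO(\alpha^2 L^2)(\norm{\bar\theta^{(t)}-\theta^\star}^2+\sigma^2)$ deterministically. The double sum $\sum_{i=1}^M\sum_{\ell=0}^{H-1}$ contains $MH$ copies of this bound, so the $1/(MH)$ prefactors cancel and the entire drift contribution collapses to $\bigo{L^4\alpha^4+L^4\alpha^3/\mu}\paren{\E\bracket{\norm{\bar\theta^{(t)}-\theta^\star}^2}+\sigma^2}$; since $\alpha\mu\le 1$, I fold $L^4\alpha^4$ into $L^4\alpha^3/\mu$.

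Next I would substitute the bound on $T_{bias}=C_1$ from Lemma~\ref{lem::markov_bias}, whose iterate-dependent coefficient is $\tfrac{\alpha\mu}{2}+\bigo{\bar\tau L^2\alpha^2+L^4\alpha^3/\mu}$ and whose noise remainder is $\bigo{L^4\sigma^2\alpha^3/\mu+\bar\tau L^2\sigma^2\alpha^2/(MH(1-\rho))}$. Adding the $\tfrac{\alpha\mu}{2}$ here to the $-\alpha\mu$ already present turns the contraction factor into $-\tfrac{\alpha\mu}{2}$, which is exactly the advertised decay rate.

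The remaining work --- and the only place that needs care --- is the absorption bookkeeping, organized into two collections. For the coefficient of $\E\bracket{\norm{\bar\theta^{(t)}-\theta^\star}^2}$, I must show that every stray second- or third-order term lies inside $\bigo{\bar\tau L^2\alpha^2}$: here $\bigo{L^2\alpha^2}\le\bigo{\bar\tau L^2\alpha^2}$ because $\bar\tau\ge 1$, while $\bigo{L^4\alpha^3/\mu}\le\bigo{\bar\tau L^2\alpha^2}$ holds precisely because the step-size bound forces $L^2\alpha/\mu\le 1/(C'\bar\tau)\le\bar\tau$. For the noise terms, I note $L^2\alpha^6\le L^4\alpha^3/\mu$ (using $L\ge1$, $\mu\le1$) and $L^2\alpha^2/(MH(1-\rho))\le \bar\tau L^2\alpha^2/(MH(1-\rho))$, so everything collapses into the two announced buckets $\bigo{\bar\tau\alpha^2 L^2/(MH(1-\rho))+\alpha^3 L^4/\mu}\sigma^2$. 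There is no genuine analytic obstacle at this stage: all the delicate estimation --- the geometric-mixing control of the conditional bias and the conditional-independence cancellations --- was already performed inside Lemma~\ref{lem::markov_bias} and Lemma~\ref{lem::vr_markov}. The main obstacle is simply to verify that the step-size constraint is strong enough to push each cross term into the designated $\bigo{\bar\tau L^2\alpha^2}$ and noise buckets \emph{without} destroying the $1/(MH)$ scaling that yields the linear speedup.
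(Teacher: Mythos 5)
Your proposal is correct and follows essentially the same route as the paper's proof: plug Lemma~\ref{lem::drift} and Lemma~\ref{lem::markov_bias} into the one-step recursion of Lemma~\ref{lem::onestep_markov}, let the $\alpha\mu/2$ from the bias bound halve the contraction to $-\alpha\mu/2$, and absorb all remaining second- and third-order terms into the $\bigo{\bar\tau L^2\alpha^2}$ coefficient and the two noise buckets via the step-size restriction (the paper phrases the key absorption as $\alpha\le\bar\tau\mu/L^2$, so that $L^4\alpha^3/\mu\le\bar\tau L^2\alpha^2$, which is exactly your $L^2\alpha/\mu\le\bar\tau$ step). Your bookkeeping of the drift sum, the $L^4\alpha^4$ fold-in, and the $L^2\alpha^6$ noise term matches the paper's computation in substance.
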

\newpage 
\textbf{Proof of Lemma~\ref{theorem::markov}.}
\begin{proof}
Plugging Lemma~\ref{lem::markov_bias} and Lemma~\ref{lem::drift} into Lemma~\ref{lem::onestep_markov}, we obtain
\begin{equation}
\begin{aligned}
    \E\bracket{\norm{\bar\theta^{(t+1)}-\theta^\star}^2}&\leq\paren{1-\alpha\mu+\bigo{L^2\alpha^2}+\bigo{\frac{L^4}{\mu}\alpha^3}}\E\bracket{\norm{\bar\theta^{(t)}-\theta^\star}^2}+\bigo{\frac{L^4\sigma^2\alpha^3}{\mu}}+\bigo{\frac{L^2\alpha^2\sigma^2}{MH(1-\rho)}}\\
    &\, +\paren{\frac{\alpha\mu}{2}+\bigo{\bar\tau L^2\alpha^2+\frac{L^4\alpha^3}{\mu}}}\E\bracket{\norm{\bar\theta^{(t)}-\theta^\star}^2} +\bigo{\frac{ L^4 \sigma^2\alpha^3}{\mu}+\frac{\bar\tau L^2\sigma^2\alpha^2}{MH(1-\rho)}}+\bigo{L^2\alpha^6\sigma^2}\\
    &\leq \paren{1-\frac{\alpha\mu}{2}+\bigo{\bar\tau 
 L^2\alpha^2+\frac{L^4}{\mu}\alpha^3}}\E\bracket{\norm{\bar\theta^{(t)}-\theta^\star}^2}+\bigo{\frac{ L^4\sigma^2\alpha^3}{\mu}+\frac{\bar\tau L^2\alpha^2\sigma^2}{MH(1-\rho)}}\\
    &\leq \paren{1-\frac{\alpha\mu}{2}+\bigo{\bar\tau L^2\alpha^2}}\E\bracket{\norm{\bar\theta^{(t)}-\theta^\star}^2}+\bigo{\frac{L^4\sigma^2\alpha^3}{\mu}+\frac{\bar\tau L^2\alpha^2\sigma^2}{MH(1-\rho)}},
\end{aligned}
\end{equation}
where we used the fact that $\alpha\leq \bar\tau\mu/L^2$.
\end{proof}

\textbf{Proof of Theorem~\ref{theorem::markov_mse}}.
\begin{proof}
Applying Lemma~\ref{theorem::markov} recursively, we obtain
\begin{equation}
\begin{aligned}
    \E\bracket{\norm{\bar\theta^{(T)}-\theta^\star}^2}&\leq \paren{1-\frac{\alpha\mu}{4}}^{T-2\bar\tau}\E\bracket{\norm{\bar\theta^{(2\bar\tau)}-\theta^\star}^2}+\bigo{\frac{\bar\tau \alpha^2L^2}{MH(1-\rho)}+\frac{\alpha^3L^4}{\mu}}\sigma^2\sum_{t=0}^{T-2\bar\tau-1}\paren{1-\frac{\alpha\mu}{4}}^{t}\\
    &\leq \paren{1-\frac{\alpha\mu}{4}}^{T-2\bar\tau}\E\bracket{\norm{\bar\theta^{(2\bar\tau)}-\theta^\star}^2}+\bigo{\frac{\bar\tau \alpha L^2}{\mu MH(1-\rho)}+\frac{\alpha^2L^4}{\mu^2}}\sigma^2,\label{eqn::thm4}
\end{aligned}
\end{equation}
where the first inequality holds because we select $\alpha\leq\mu/(4\bar\tau L^2C')$, where $C'$ is greater than or equal to the dominant constant in $\bigo{\bar\tau L^2\alpha^2}$ such that $\bigo{\bar\tau L^2\alpha^2}\leq \alpha\mu/4$.

We proceed to bound the term $\E\bracket{\norm{\bar\theta^{(2\bar\tau)}-\theta^\star}^2}$. From the update rule of \texttt{FedHSA} in Eq.~\eqref{eqn::FedHSA_update} and Eq.~\eqref{eqn::simple_average}, we obtain
\begin{equation}
\begin{aligned}
    \bar\theta^{(t+1)}-\theta^\star=\bar\theta^{(t)}-\theta^\star+\frac{\alpha}{MH}\sum_{i=1}^M\sum_{\ell=0}^{H-1}G_i(\theta_{i,\ell}^{(t)}),
\end{aligned}
\end{equation}
where we subtracted $\theta^\star$ on both sides. Taking norm on both sides and applying triangle inequality yields
\begin{equation}
\begin{aligned}
    \norm{\bar\theta^{(t+1)}-\theta^\star}&\leq\norm{\bar\theta^{(t)}-\theta^\star}+\norm{\frac{\alpha}{MH}\sum_{i=1}^M\sum_{\ell=0}^{H-1}G_i(\theta_{i,\ell}^{(t)})}\\
    &\leq \norm{\bar\theta^{(t)}-\theta^\star}+\frac{\alpha}{MH}\sum_{i=1}^M\sum_{\ell=0}^{H-1}\norm{G_i(\theta_{i,\ell}^{(t)})}\\
    &\overset{(a)}{\leq} \norm{\bar\theta^{(t)}-\theta^\star}+\frac{\alpha L}{MH}\sum_{i=1}^M\sum_{\ell=0}^{H-1}\paren{\norm{\theta_{i,\ell}^{(t)}}+\sigma}\\
    &= \norm{\bar\theta^{(t)}-\theta^\star}+\frac{\alpha L}{MH}\sum_{i=1}^M\sum_{\ell=0}^{H-1}\paren{\norm{\theta_{i,\ell}^{(t)}-\bar\theta^{(t)}+\bar\theta^{(t)}-\theta^\star+\theta^\star}+\sigma}\\
    &\leq \norm{\bar\theta^{(t)}-\theta^\star}+\frac{\alpha L}{MH}\sum_{i=1}^M\sum_{\ell=0}^{H-1}\paren{\norm{\theta_{i,\ell}^{(t)}-\bar\theta^{(t)}}+\norm{\bar\theta^{(t)}-\theta^\star}+\norm{\theta^\star}+\sigma}\\
    &\overset{(b)}{\leq} \norm{\bar\theta^{(t)}-\theta^\star}+\frac{\alpha L}{MH}\sum_{i=1}^M\sum_{\ell=0}^{H-1}\paren{L\alpha\norm{\bar\theta^{(t)}-\theta^\star}+L\alpha\sigma+\norm{\bar\theta^{(t)}-\theta^\star}+2\sigma}\\
    &\leq \norm{\bar\theta^{(t)}-\theta^\star}+\alpha L\paren{2\norm{\bar\theta^{(t)}-\theta^\star}+3\sigma}\\
    &=\paren{1+2\alpha L}\norm{\bar\theta^{(t)}-\theta^\star}+3\alpha L\sigma,\label{eqn::initial}
\end{aligned}
\end{equation}
where $(a)$ holds due to Assumption~\ref{ass::smoothness}, and $(b)$ is a result of Lemma~\ref{lem::drift}. Therefore, applying Eq.~\eqref{eqn::initial} recursively yields
\begin{equation}
\begin{aligned}
    \norm{\bar\theta^{(2\bar\tau)}-\theta^\star}&\leq \paren{1+2\alpha L}^{2\bar\tau}\norm{\bar\theta^{(0)}-\theta^\star}+3\alpha L\sigma \sum_{t=0}^{2\bar\tau-1}\paren{1+2\alpha L}^{t}\\
    &\leq \paren{1+\frac{1}{2\bar\tau}}^{2\bar\tau}\norm{\bar\theta^{(0)}-\theta^\star}+3\alpha L\sigma\sum_{t=0}^{2\bar\tau-1}\paren{1+\frac{1}{2\bar\tau}}^{2\bar\tau}\\
    &\leq e\norm{\bar\theta^{(0)}-\theta^\star}+2e\sigma\\
    &=\bigo{\norm{\bar\theta^{(0)}-\theta^\star}+\sigma},\label{eqn::thm4_last}
\end{aligned}
\end{equation}
where we selected $\alpha\leq 1/(4\bar\tau L)$, and used the fact that $(1+1/x)^x\leq e, \forall x>0$. Plugging Eq.~\eqref{eqn::thm4_last} into Eq.~\eqref{eqn::thm4} yields
\begin{equation}
\begin{aligned}
    \E\bracket{\norm{\bar\theta^{(T)}-\theta^\star}^2}&\leq \paren{1-\frac{\alpha\mu}{4}}^{T-2\bar\tau}\bigo{\norm{\bar\theta^{(0)}-\theta^\star}^2+\sigma}+\bigo{\frac{\bar\tau \alpha L^2}{\mu MH(1-\rho)}+\frac{\alpha^2L^4}{\mu^2}}\sigma^2\\
    &\leq \exp\paren{-\frac{\mu}{4}\alpha (T-2\bar\tau)}\bigo{\norm{\bar\theta^{(0)}-\theta^\star}^2+\sigma}+\bigo{\frac{\bar\tau \alpha L^2}{\mu MH(1-\rho)}+\frac{\alpha^2L^4}{\mu^2}}\sigma^2\\
    &= \exp\paren{-\frac{\mu}{4}\alpha T}\exp{\paren{\frac{\mu\bar\tau\alpha}{2}}}\bigo{\norm{\bar\theta^{(0)}-\theta^\star}^2+\sigma}+\bigo{\frac{\bar\tau \alpha L^2}{\mu MH(1-\rho)}+\frac{\alpha^2L^4}{\mu^2}}\sigma^2\\
    &\leq \exp\paren{-\frac{\mu}{4}\alpha T}\bigo{\norm{\bar\theta^{(0)}-\theta^\star}^2+\sigma}+\bigo{\frac{\bar\tau \alpha L^2}{\mu MH(1-\rho)}+\frac{\alpha^2L^4}{\mu^2}}\sigma^2
\end{aligned}
\end{equation}
where we used the fact that $1-x\leq \exp\paren{-x}$, $\alpha\leq 1/\bar\tau$, and $\mu\leq 1$ such that $\exp(\mu\tau\alpha/2)\leq\bigo{1}$.
\end{proof}
\end{document}